\newcommand{\R}{\mathbb{R}}
\newcommand{\X}{\mathcal{X}}
\newcommand{\Y}{\mathcal{Y}}
\newcommand{\Z}{\mathcal{Z}}
\newcommand{\U}{\mathcal{U}}
\newcommand{\D}{\mathcal{D}}
\newcommand{\EE}[2]{\mathbb{E}_{#1\!\!}\left[#2\right]}
\def\E#1{\EE{\,}{#1}}
\DeclarePairedDelimiterX{\infdivx}[2]{(}{)}{%
	#1\;\delimsize\|\;#2%
}
\DeclareMathAlphabet{\mathmybb}{U}{bbold}{m}{n}
\newcommand{\indicator}{\mathmybb{1}}
\let\mc\mathcal                                             %
\let\mb\mathbb                                                  %
\newcommand{\norm}[1]{\lVert#1\rVert}
\newcommand{\LR}{\texttt{LR}\xspace}
\newcommand{\QR}{\texttt{QR}\xspace}
\newcommand{\HDRR}{\texttt{HDR-R}\xspace}
\newcommand{\BASE}{\texttt{BASE}\xspace}
\newcommand{\vsmash}[2]{%
  \smash[t]{#1}%
  \vphantom{\rule{0pt}{#2\ht\strutbox}}%
}
\newcommand{\Th}{\vsmash{\hat{T}}{1.1}}
\newcommand{\fh}{\vsmash{\hat{f}}{1.1}}
\newcommand{\Fh}{\vsmash{\hat{F}}{1.1}}
\newcommand{\controlledoverset}[3][\ht\strutbox]{%
  \raisebox{0pt}[#1][\depthof{\(#3\)}]{\(\smash[t]{\overset{#2}{#3}}\)}
}
\newcommand{\dapprox}{~\controlledoverset[\heightof{$\hat{T}$}]{d}{\approx}~}
\def\floor#1{\lfloor #1 \rfloor}
\DeclareMathAlphabet{\mathsfit}{\encodingdefault}{\sfdefault}{m}{sl}
\SetMathAlphabet{\mathsfit}{bold}{\encodingdefault}{\sfdefault}{bx}{n}
\DeclareMathOperator*{\argmax}{arg\,max}
\DeclareMathOperator*{\argmin}{arg\,min}
\theoremstyle{definition}
\theoremstyle{definition}
\newtheorem{definition}{Definition}
\theoremstyle{remark}
\declaretheorem[style=definition]{proposition}
\theoremstyle{remark}
\crefname{equation}{}{}
\title{Multivariate Latent Recalibration for  \\Conditional Normalizing Flows}
\author{%
  Victor Dheur \\
  Department of Computer Science \\
  University of Mons \\
  Mons, Belgium \\
  \texttt{victor.dheur@umons.ac.be} \\
  \And
  Souhaib Ben Taieb\thanks{Also affiliated with the Department of Computer Science, University of Mons.} \\
  Department of Statistics and Data Science \\
  Mohamed bin Zayed University of Artificial Intelligence \\
  Abu Dhabi, United Arab Emirates \\
  \texttt{souhaib.bentaieb@mbzuai.ac.ae} \\
}
\begin{document}

\maketitle

\begin{abstract}

A reliable estimate of the full conditional distribution of a multivariate response given a set of covariates is essential in many decision-making applications. However, misspecified or miscalibrated models can lead to poor approximations of the joint distribution, resulting in unreliable predictions and suboptimal decisions. Standard recalibration methods are largely restricted to univariate settings, and while conformal prediction techniques yield multivariate regions with coverage guarantees, they do not provide an explicit form of the underlying probability distribution. We address this gap by first introducing a novel notion of latent calibration, which assesses probabilistic calibration in the latent space of conditional invertible generative models such as normalizing flows and flow matching. Second, we propose latent recalibration (LR), a post-hoc model recalibration method that learns a transformation of the latent space with finite-sample bounds on latent calibration. Unlike existing recalibration methods, LR produces a recalibrated distribution with an explicit multivariate density function while remaining computationally efficient. Extensive experiments on both tabular and image datasets show that LR consistently improves latent calibration error and the negative log-likelihood of the recalibrated models.

\end{abstract}

\setcounter{footnote}{0}

\section{Introduction}
\label{sec:introduction}

Generating reliable uncertainty estimates is essential for trustworthy decision-making across a wide range of applications \citep{Gawlikowski2021-ty}. Multi-output regression problems, in particular, arise frequently in domains such as weather forecasting \citep{Setiawan2024-lz}, energy consumption prediction \citep{Makaremi2025-hf}, and healthcare resource utilization \citep{Cui2018-dm}. While flexible models like neural networks can achieve high predictive accuracy, their uncertainty estimates are often poorly \textit{calibrated}, meaning predicted probabilities or confidence regions do not align with empirical frequencies \citep{Guo2017-ow, Dheur2023-bo}. Furthermore, most recalibration methods are designed for the single-output setting \citep{Gneiting2007-la,Song2019-bk,Sahoo2021-hf,Kuleshov2021-ft,Dewolf2022-ug,Fakoor2023-mu,Marx2023-zj,Chung2023-kb,Gneiting2023-vu}.

Noting the general lack of methods for \textit{assessing} and \textit{recalibrating} multi-output models, \citet{Chung2024-zd} leveraged highest-density regions (HDRs) \citep{Hyndman1996-wx} to introduce the notion of HDR calibration and propose the sampling-based HDR recalibration (\HDRR) method. Recently, multi-output conformal prediction (CP) methods have also been developed to construct joint prediction sets \citep{Wang2023-vn, Feldman2023-cc, Fang2025-qs, Dheur2025-br}. However, both \HDRR and CP approaches fail to provide an explicit form for the underlying recalibrated probability distribution, with \HDRR further involving computationally intensive sampling and binning at test time.

To overcome these limitations, we introduce the \emph{latent recalibration (\LR)} method, based on a new notion of \emph{latent calibration}, which recalibrates invertible generative models (e.g., normalizing flows (NFs) or flow matching (FM)) by operating within their latent space. The core idea is to learn a transformation of the latent space such that the resulting model achieves latent calibration. Compared to set-based CP methods and sampling-based recalibration approaches such as \HDRR, \LR (i) yields a fully recalibrated generative model with an explicit multivariate probability density function (PDF), (ii) provides finite-sample latent calibration guarantees, and (iii) enables efficient density evaluation and sampling, which are essential for many applications and support improved decision-making \citep{Klein2024-ow}.

Our main contributions are:
\begin{itemize}
    \item We introduce latent calibration, a new notion of calibration evaluated within the latent space of an invertible generative model, based on the distribution of latent norms.

    \item We propose \textbf{\LR}, a recalibration method that yields a multivariate predictive distribution with an explicit PDF, offers finite-sample latent calibration guarantees, and remains computationally efficient.

    \item We empirically demonstrate, across 29 multi-output tabular datasets and one high-dimensional image dataset, that \LR consistently improves latent calibration and reduces negative log-likelihood (NLL). A public codebase is provided to ensure reproducibility.\footnote{\url{https://github.com/Vekteur/latent-recalibration}}

\end{itemize}

\section{Background}
\label{sec:background}

We consider a multi-output distributional regression setting, where the goal is to predict the distribution of a $d$-dimensional response variable $Y \in \Y \subseteq \mathbb{R}^d$ from a $p$-dimensional input vector $X \in \X \subseteq \mathbb{R}^p$. We assume access to a dataset $\D = \{(X^{(j)}, Y^{(j)})\}_{j=1}^N$, where the samples $(X^{(j)}, Y^{(j)})$ are drawn i.i.d. from $P_{XY}$, the true joint distribution over $\X \times \Y$. The dataset is partitioned into three disjoint subsets: a training set $\D_{\text{train}}$, a test set $\D_{\text{test}}$, and a calibration set $\D_{\text{cal}} = \{(X^{(i)}, Y^{(i)})\}_{i=1}^n$. The true conditional distribution of $Y$ given $X = x$ is denoted by $P_{Y|X=x}$. Similarly the true cumulative distribution function (CDF) is denoted $F_{Y|X=x}$, and the corresponding PDF, assumed to exist for all $x \in \X$, is denoted $f_{Y|X=x}$. More generally, we denote the distribution, CDF and PDF of any random variable $A$ by $P_A$, $F_A$, and $f_A$, respectively. Additionally, estimates are denoted $\hat{P}_A$, $\Fh_A$, and $\fh_A$.

\subsection{Normalizing Flows for predictive density estimation}

Using $\D_{\text{train}}$, our goal is to estimate the conditional PDF $f_{Y|X=x}$ for inputs $x \in \X$. NFs offer a flexible framework for modeling complex distributions over continuous random variables. Specifically, an NF defines a learnable bijective (conditional) transformation $\Th : \Z \times \X \to \Y$ between a latent space $\Z \subseteq \mathbb{R}^d$ and the output space $\Y \subseteq \mathbb{R}^d$. For any $y \in \Y$ and $x \in \X$, the transformation satisfies $\Th(\Th^{-1}(y; x); x) = y$. Given an input $x \in \X$, the NF maps a latent random variable $Z \in \Z$ (typically drawn from a known base distribution, such as $\mc{N}(0, I_d)$) to a new random variable $\Th(Z; x)$. The resulting conditional PDF is computed using the change-of-variables formula:
\begin{equation}
    \fh_{Y|X=x}(y) = f_Z(\Th^{-1}(y; x)) \left| \det\left( \nabla_y \Th^{-1}(y; x) \right) \right|,
\end{equation}
where $f_Z$ denotes the density of the latent variable $Z$. NFs are typically trained by minimizing the NLL over the training dataset using mini-batch stochastic gradient descent. For details on NF architectures, we refer the reader to \cite{Papamakarios2021-rc}.

For brevity, the main text focuses on normalizing flows, but our method is also compatible with flow matching, as shown in \cref{sec:results_cfm}.

\subsection{Statistical calibration}

While training with strictly proper scoring rules such as the NLL encourages accurate predictions, it does not guarantee that the resulting predictions are reliable or calibrated, meaning they are statistically aligned with the true distribution of the observations \citep{Gneiting2007-la}. This issue is particularly relevant under limited data or model misspecification, and it has gained renewed attention with the observation that modern neural network classifiers are often miscalibrated and overconfident \citep{Guo2017-ow}.

\paragraph{Probabilistic calibration.}
For real-valued outcomes ($d = 1$), probabilistic calibration \citep{Gneiting2007-la} builds on the probability integral transform (PIT). Denote $\Fh_{Y|X}$ a predictive CDF for $Y$ whose value depends on the random variable $X$. Assuming $\Fh_{Y|X=x}$ is continuous for any $x \in \X$, probabilistic calibration requires that
\begin{equation}
	F_{Y|X}(Y) \sim \U(0, 1).
\end{equation}

\paragraph{Multivariate calibration.}
Compared to the univariate case, calibration for vector-valued outcomes has been relatively underexplored. Moreover, assessing calibration in the multivariate setting ($d \geq 2$) is inherently more challenging, as the PIT is no longer uniformly distributed \citep{Genest2001-qz}. While probabilistic calibration can be assessed separately for each dimension, this approach may miss important dependencies between outputs \citep{Chung2024-zd}.

The primary approach to assessing multivariate calibration involves reducing multivariate predictions and observations to univariate summary statistics, and then evaluating the uniformity of the PITs of these transformed values \citep{Allen2024-fm}. Let $(X, Y) \sim P_{X,Y}$ and $(X, \hat{Y}) \sim P_X \hat{P}_{Y|X}$, and define a transformation function (also known as a pre-rank function) $g: \X \times \Y \to \R$. If $G = g(X, Y)$ and $\hat{G} = g(X, \hat{Y})$, then by the probability integral transform, the random variable $\hat{U} = F_{\hat{G}}(G)$ is uniformly distributed whenever $\hat{G} \dapprox G$. In this case, we say that $\hat{P}_{Y|X}$ is probabilistically calibrated with respect to the transformation $g$. \citet{Chung2024-zd} introduced HDR calibration as a special case, where $g(x, y) = -\fh_{Y|X=x}(y)$. In that case, the corresponding PIT $\hat{U} = F_{\hat{G}}(G) = \text{HPD}_{\fh_{Y|X}}(Y)$ is the highest predictive density (HPD; \cite{Box1992-zd}).
This form of calibration ensures that HDRs derived from the predictive distribution achieve correct empirical coverage at all nominal probability levels.

\subsection{Recalibration methods}
\label{sec:recalibration_methods}

Recalibration methods adjust a potentially miscalibrated base predictor (e.g., $\Fh_{Y|X}$) to produce an updated predictor (e.g., $\Fh'_{Y|X}$) that satisfies a desired calibration property.

\paragraph{Quantile recalibration.}

For the univariate case ($d = 1$), quantile recalibration (\QR) \citep{Kuleshov2018-tb} is a recalibration method that enforces probabilistic calibration. Let $\hat{U} = \Fh_{Y|X}(Y)$, and define $F_{\hat{U}}$, the CDF of $\hat{U}$, as the \textit{calibration map}. The recalibrated CDF is given by $\Fh'_{Y|X} = F_{\hat{U}} \circ \Fh_{Y|X}$. By construction, $\Fh'_{Y|X}(Y)$ is uniformly distributed over $[0, 1]$. Specifically, for any $\alpha \in (0, 1)$:
\begin{align}
\mathbb{P}(\Fh'_{Y|X}(Y) \leq \alpha)
&= \mathbb{P}(\Fh_{Y|X}(Y) \leq F_{\hat{U}}^{-1}(\alpha)) = F_{\hat{U}}(F_{\hat{U}}^{-1}(\alpha)) = \alpha.
\end{align}

\paragraph{HDR recalibration.}
For $d \geq 1$, \citet{Chung2024-zd} proposed a sampling-based HDR recalibration method (\HDRR) that targets HDR calibration.
Let $\hat{U} = F_{\hat{G}}(G)$, and define $F_{\hat{U}}$ as the calibration map.
Given a new input $x_\text{test}$, $K$ candidates samples $\{\hat{Y}^{(k)}\}_{k=1}^K$ are drawn from $\hat{P}_{Y|X=x_\text{test}}$, with pre-rank values $\hat{G}^{(k)} = -\fh_{Y|X=x_\text{test}}(\hat{Y}^{(k)})$, $k = 1, \dots, K$.
Based on binning, \HDRR resamples from the set of candidate samples, producing a new set of recalibrated samples $\{\hat{Y}'^{(k)}\}_{k=1}^K$ with pre-rank values $\hat{G}'^{(k)} = -\fh_{Y|X=x_\text{test}}(\hat{Y}'^{(k)})$. This resampling process induces a new random variable $\hat{Y}'$ and its pre-rank $\hat{G}' = -\fh_{Y|X=x_\text{test}}(\hat{Y}')$.
The number of samples in each bin is determined such that
\begin{equation}
    F_{\hat{G}'}(-\fh_{Y|X}(Y)) \sim \U(0, 1).
\end{equation}
For completeness, we provide an exact algorithm in \cref{sec:hdr_recalibration}. While effective in calibrating HDRs, \HDRR has several limitations: (i) it does not produce an explicit recalibrated PDF $\fh'_{Y|X}$; (ii) it generates duplicate samples; (iii) it is subject to discretization errors when estimating $F_{G|x}$; and (iv) it is computationally expensive, as it requires generating $K$ initial samples for every recalibrated output.

\paragraph{Estimating the calibration map.} In practice, the ideal calibration map $F_{\hat{U}}$ is unknown and is estimated based on the calibration statistics $\{\hat{U}_{i}\}_{i=1}^n$ with $\hat{U}_{i} \sim F_{\hat{U}}$.
A standard estimator is the empirical CDF $\Fh_{\hat{U}}(\alpha) = \frac{1}{n} \sum_{i=1}^n \indicator(\hat{U}_{i} \leq \alpha)$ but differentiable estimators can also be employed \citep{Marx2022-yz,Dheur2024-zs}.

\subsection{Conformal prediction}
Conformal prediction (CP) constructs distribution-free prediction sets $\hat{R}_\alpha(X)$ with finite-sample marginal coverage guarantees, i.e., $\mathbb{P}(Y \in \hat{R}_\alpha(X)) \geq 1 - \alpha$ for any desired significance level $\alpha \in (0,1)$ \citep{Vovk2005-ib, Angelopoulos2021-rc}. A common variant, Split CP (SCP) \citep{Papadopoulos2002-eb}, partitions the data into a training set $\D_\text{train}$ and a calibration set $\D_\text{cal}$. A base predictor is trained on $\D_\text{train}$, and then a conformity score $s: \X \times \Y \to \R$, where lower values indicate better agreement between the model's predictions and the observations. SCP constructs the prediction set by computing the empirical $(1 - \alpha)$-quantile of the conformity scores evaluated on the calibration set, i.e., $\{s(X^{(1)}, Y^{(1)}), \dots, s(X^{(n)}, Y^{(n)}), +\infty\}$, denoted by $\Fh_S^{-1}(1 - \alpha)$. The resulting prediction region is $\hat{R}_\alpha(x) = \{ y \in \Y : s(x, y) \leq \Fh_S^{-1}(1 - \alpha) \}$, which satisfies the marginal coverage guarantee.

\begin{table}[t]
    \centering
    \caption{Comparison of calibration notions, the associated calibration statistic $\hat{U}$ (uniform under calibration), recalibration methods, and related conformal conformity scores.}
    \label{table:comparison_calibration}
	\vspace{0.2cm}
    \resizebox{\linewidth}{!}{ %
    \begin{tabular}{llll}
        \toprule
        \textbf{Calibration notion} & \textbf{Calibration statistic} & \textbf{Recalibration method} & \textbf{Conformal method} \\
        \midrule
        Probabilistic ($d=1$) & $\Fh_{Y|X}(Y)$ & Quantile recalibration (\QR) & DCP \\
        HDR ($d\geq1$) & $\text{HPD}_{\fh_{Y|X}}(Y)$ & HDR recalibration (\HDRR) & HPD-split \\
        \textbf{Latent ($d\geq 1$)} & $F_{\rho_{\Z}(Z)}(\ell_{\Th}(Y; X))$ & \textbf{Latent recalibration (\LR)} & CONTRA/L-CP \\
        \bottomrule
    \end{tabular}
    }
    \vspace{-1.2em}
\end{table}

Notably, specific choices of conformity scores correspond to recalibration statistics: Distributional Conformal Prediction (DCP) \citep{Chernozhukov2021-sg} uses $s_{\text{DCP}}(x, y) = \Fh_{Y|X=x}(y)$, while HPD-split \citep{Izbicki2022-ru} uses $s_{\text{HPD-split}}(x, y) = \text{HPD}_{\fh_{Y|X=x}}(y)$; these match the transformations used in \QR and \HDRR, respectively. This highlights a unified framework connecting conformal prediction and recalibration, summarised in \cref{table:comparison_calibration}.

\section{A New Latent Recalibration Method for Normalizing Flows}
\label{sec:latent_calibration_recalibration}

We propose a new recalibration method, called \textit{latent 
recalibration} (\LR), for conditional NFs. \LR operates in the latent space and is specifically designed to achieve our newly introduced notion of multivariate \textit{latent calibration}.

\subsection{A New Notion of Multivariate Latent Calibration}

Recall that, given a latent variable $Z \in \Z$ with a known distribution and an input $x \in \X$, conditional NFs estimate the conditional distribution of $Y$, $F_{Y|X=x}$, by learning a conditional bijective transformation $\Th: \Z \to \Y$ such that the PDF $\fh_{Y|X}$ of the transformed variable $\Th(Z; x)$ approximates $f_{Y|X=x}$. However, model misspecification or significant estimation errors in the learned transformation $\Th$ can lead to poor calibration of the induced distribution of $\Th(Z; x)$.

We propose to leverage the simple structure of the latent space $\Z$ and assess calibration directly in this space, a notion we refer to as \emph{latent calibration}. By definition, if the NF is well-specified for $F_{Y|X=x}$, then the inverse transformation $\Th^{-1}$ satisfies $\Th^{-1}(Y; X) \dapprox Z$, where $\dapprox$ denotes approximate equality in distribution. 
Building on this observation, we define a norm $\rho_{\Z} : \Z \to \mathbb{R}_+$ over $\Z$ (e.g., $\rho_{\Z}(z) \coloneqq \|z\|$). The goal is to test whether $\rho_{\Z}(\Th^{-1}(Y; X)) \dapprox \rho_{\Z}(Z)$.

Since the distribution of $Z$ is known and standard (e.g., standard Gaussian), the distribution of $\rho_{\Z}(Z)$ is often known in closed-form. For instance, if $Z \sim \mc{N}(0, I_d)$ and $\rho_{\Z}(z) = \|z\|$, then $\rho_{\Z}(Z)$ follows a Chi distribution with $d$ degrees of freedom ($\chi_d$), whose PDF, CDF, and quantile function can be computed efficiently. As another example, if $Z \sim \U(B_d)$ is uniformly distributed over the unit hyperball and $\rho_{\Z}(z) = \|z\|$, then $\rho_{\Z}(Z)$ follows a $\text{Beta}(d, 1)$ distribution.

\begin{definition}
	Consider a NF defined by a latent variable $Z$ and a bijective transformation $\Th$. For a pair $(X, Y)$, define the \emph{latent norm} w.r.t. $\hat{T}$ as
	\begin{equation}	    
	   \hat{L} =  \ell_{\Th}(Y; X) = \rho_{\Z}(\Th^{-1}(Y; X)). \label{eq:defL}
	\end{equation}
	The NF is said to be \emph{latent calibrated} w.r.t. $Z$ and the norm $\rho_{\Z}$ if the PIT of the latent norm follows a standard uniform distribution, i.e.,
	\begin{equation}
	    \hat{U} = F_{\rho_{\Z}(Z)}(\hat{L}) \sim \mc{U}(0, 1).
    \end{equation}
\end{definition}

To assess whether a model is latent calibrated, we define the \emph{latent expected calibration error} (L-ECE) as the $L^1$ distance between the CDF of the PIT variable $\hat{U}$ and the CDF of the uniform distribution:
\begin{equation}
    \text{L-ECE}(\Th) = \int_0^1 \left| F_{\hat{U}}(\alpha) - \alpha \right| \, d\alpha,
\end{equation}
The L-ECE is minimized at 0 when $\Th$ is perfectly latent calibrated, and has a maximum value of 0.5.

\subsection{Multivariate Latent Recalibration}
\label{sec:latent_recalibration}

We propose a multivariate latent recalibration method, called \LR, which performs a post-hoc adjustment of the latent space of a NF to ensure that the resulting model is latent calibrated. Key advantages of \LR are that it yields a recalibrated distribution with an explicit PDF, remains computationally efficient, and has finite-sample guarantees on latent calibration (see \cref{sec:lr_properties}).

\paragraph{Latent space transformation.} 

\LR uses the CDF $F_{\hat{L}}$ as its calibration map. 
We define a scalar strictly increasing transformation $r: \R_+ \to \R_+$ using the quantile function $F_{\hat{L}}^{-1}$, which maps the original latent norms $l \in \R_+$ to recalibrated norms as follows:
\begin{equation} \label{eq:r_transform_final}
    r(l) = F_{\hat{L}}^{-1}(F_{\rho_{\Z}(Z)}(l)).
\end{equation}
We also define a vector-valued transformation $R: \Z \to \Z$ based on the scalar transformation $r$, which maps latent vectors $z$ such that $\rho_\Z(R(z)) = r(\rho_\Z(z))$. When using the Euclidean norm $\rho_\Z(z) = \norm{z}$, $R$ is a radial transformation:
\begin{equation} \label{eq:R_transform_final}
    R(z) = \frac{ r(\norm{z}) }{ \norm{z} } \cdot z \quad (\text{with } R(0) = 0).
\end{equation}
The transformation $R$ rescales each vector $z$ along its original direction by replacing its norm $\|z\|$ with $r(\|z\|)$. This procedure defines a new latent variable $Z' = R(Z)$, and the associated recalibrated NF $\Th(Z'; X)$.

\begin{proposition}
\label{prop:lr_is_calibrated}
The recalibrated NF $\Th(Z'; X)$ defined with the new latent variable $Z' = R(Z)$ is \emph{latent calibrated}, i.e. $\hat{U}' = F_{\rho_{\Z}(Z')}(\hat{L}) \sim \U(0, 1)$.
\end{proposition}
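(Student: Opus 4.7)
The plan is to compute the CDF of the new latent norm $\rho_{\Z}(Z')$ explicitly, show that it coincides with $F_L$, and then apply the probability integral transform to $L$. I would organize the argument into three short steps.

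First, I would use the definition of $R$ in \eqref{eq:R_transform_final}, which is built precisely so that $\rho_{\Z}(R(z)) = r(\rho_{\Z}(z))$ for every $z \in \Z$. Hence $\rho_{\Z}(Z') = r(\rho_{\Z}(Z))$. Since $r$ is a strictly increasing scalar transformation, the CDF of $r(\rho_{\Z}(Z))$ is given by $F_{\rho_{\Z}(Z')}(l) = F_{\rho_{\Z}(Z)}(r^{-1}(l))$ for all $l \in \R_+$.

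Second, I would invert the defining identity $r(l) = F_L^{-1}(F_{\rho_{\Z}(Z)}(l))$ from \eqref{eq:r_transform_final} to obtain $r^{-1}(l) = F_{\rho_{\Z}(Z)}^{-1}(F_L(l))$, and substitute this into the expression for $F_{\rho_{\Z}(Z')}$:
\begin{equation*}
F_{\rho_{\Z}(Z')}(l) = F_{\rho_{\Z}(Z)}\!\left(F_{\rho_{\Z}(Z)}^{-1}(F_L(l))\right) = F_L(l).
\end{equation*}
So the CDF of the recalibrated latent norm is exactly $F_L$. Combined with the PIT applied to $L$, this gives
\begin{equation*}
U' = F_{\rho_{\Z}(Z')}(L) = F_L(L) \sim \U(0, 1),
\end{equation*}
which is the desired conclusion.

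The only genuine subtlety — rather than a real obstacle — is ensuring that $r$ is invertible and that PIT is applicable, which requires $F_L$ and $F_{\rho_{\Z}(Z)}$ to be continuous and strictly increasing on the relevant range. For the base distributions considered in the paper ($Z \sim \mathcal{N}(0, I_d)$ giving $\rho_{\Z}(Z) \sim \chi_d$, or $Z \sim \U(B_d)$ giving $\rho_{\Z}(Z) \sim \beta(d,1)$), $F_{\rho_{\Z}(Z)}$ is automatically continuous and strictly increasing on $\R_+$; the standing assumption that $Y \mid X$ admits a PDF, together with the smooth bijection $\Th^{-1}$, yields continuity of $F_L$. Under these regularity conditions the chain of equalities is rigorous, and the proof reduces to the two-line computation above.
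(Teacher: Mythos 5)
Your proof is correct and follows essentially the same route as the paper's: both establish $F_{\rho_{\Z}(Z')} = F_L$ by combining the identity $\rho_{\Z}(R(z)) = r(\rho_{\Z}(z))$ with the inverse $r^{-1} = F_{\rho_{\Z}(Z)}^{-1} \circ F_L$, and then conclude via the probability integral transform applied to $L$. Your closing remark spelling out the continuity and strict monotonicity of $F_{\rho_{\Z}(Z)}$ and $F_L$ needed for $r$ to be invertible and for the PIT to yield exact uniformity is a useful clarification that the paper leaves implicit.
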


\begin{proof}
Consider the inverse transformation $r^{-1}(l) = F^{-1}_{\rho_{\Z}(Z)}\left( F_{\hat{L}}(l) \right)$ for a latent norm $l \in \mathbb{R}_+$. Then, using $\rho_\Z(R(z)) = r(\rho_\Z(z))$ the following identity holds:
\begin{align}
	F_{\rho_{\Z}(Z')}(l) 
	= F_{r(\rho_{\Z}(Z))}(l) 
	= F_{\rho_{\Z}(Z)}\left( r^{-1}(l) \right) 
	= F_{\hat{L}}(l), \quad \forall l \in \mathbb{R}_+.
    \label{eq:Z'_to_F_S}
\end{align}
Then, it follows that 
\begin{align}
    \mb{P}(\hat{U}' \leq \alpha)
    = \mb{P}(F_{\rho_{\Z}(Z')}(\hat{L}) \leq \alpha)
    = \mb{P}(F_{\hat{L}}(\hat{L}) \leq \alpha)
    = \alpha.
\end{align}
\end{proof}

\paragraph{Recalibrated predictive density.}

A distinctive feature of our LR recalibration procedure is that it produces a recalibrated distribution with an explicit multivariate PDF.

Note that the recalibrated NF can be interpreted as a composite transformation $\Th' = \Th \circ R$, applied to the original latent variable $Z$ with density $f_Z$, typically a standard multivariate Gaussian. Given $x \in \X$ and $y \in \Y$, assuming the transformation $R$ is differentiable, the recalibrated predictive density $\fh'_{Y|X=x}(y)$ can be computed using the change of variables formula. Let $z' = \Th^{-1}(y; x)$ and $z = R^{-1}(z')$. Then, we have
\begin{align}
	\fh'_{Y|X=x}(y) &= f_{Z}\left( z \right) \left| \det\left( \nabla_z R(z) \right) \right|^{-1} \left| \det\left( \nabla_y \Th^{-1}(y; x) \right) \right|. \label{eq:recalibrated_pdf}
\end{align}
Let us consider the case where $\rho_{\Z}(z) = \|z\|$. The inverse transformation takes the form $R^{-1}(z') = \frac{r^{-1}(\|z'\|)}{\|z'\|} \cdot z'$ and the Jacobian determinant of \( R \) can be computed efficiently as:
\begin{equation}
    \left| \det\left( \nabla_z R(z) \right) \right| = \left( \frac{r(l)}{l} \right)^{d-1} \cdot \frac{\partial r(l)}{\partial l}, \quad \text{with } l = \|z\|.
    \label{eq:log_det_jac_R}
\end{equation}
A detailed proof is provided in \cref{sec:proof_radial_ldj}.
The term $\frac{\partial r(l)}{\partial l}$ in \eqref{eq:log_det_jac_R} is computed using the chain rule as:
\begin{equation}
    \frac{\partial r(l)}{\partial l} = \frac{\partial F_{\hat{L}}^{-1}(l')}{\partial l'} \cdot \frac{\partial F_{\rho_{\Z}(Z)}(l)}{\partial l}, \quad \text{where } l' = F_{\rho_{\Z}(Z)}(l).
\end{equation}

To compute \( \partial F_{\rho_{\Z}(Z)}(l) / \partial l \), we leverage the fact that \( \rho_{\Z}(Z) \sim \chi_d \), whose PDF is available in closed-form and can be evaluated efficiently.

\begin{figure}[t]
	\includegraphics[width=\linewidth]{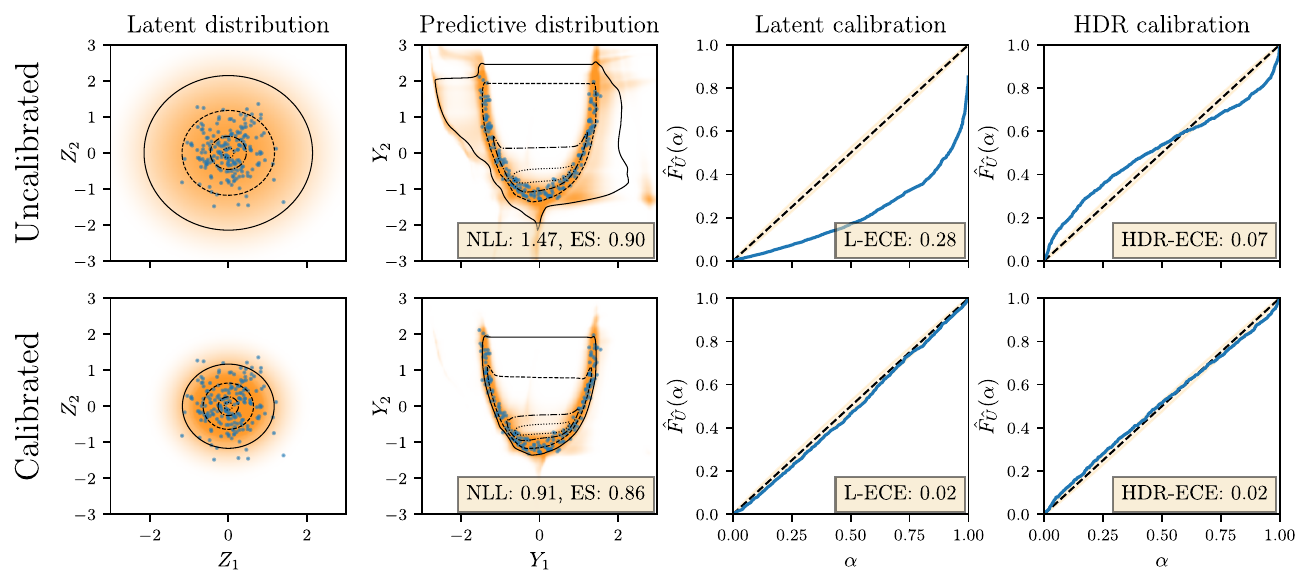}
\caption{Illustration of \LR for a bivariate output. The first column shows the latent distribution, the second column displays the predictive PDF, and the third and fourth columns show reliability diagrams for latent and HDR calibration, respectively. The first row corresponds to an uncalibrated NF, and the second is the same model after \LR. Calibration points and their projections in the latent space are shown in blue. The PDF for both the latent distribution and the predictive distribution is shown in orange. Level sets of the PIT of the latent norm at levels 0.01, 0.1, 0.5, and 0.9 are indicated with black contours in the second column, and their corresponding preimages are shown in the first column. \LR improves both latent calibration (third column) and HDR calibration (fourth column). Additional prediction examples on real-world datasets are presented in \cref{sec:examples_predictions}.}
    \label{fig:visualization}
    \vspace{-0.4cm}
\end{figure}

In practice, $F_{\hat{L}}$ is estimated by computing latent norms $\hat{L}_i = \ell_{\Th}(X^{(i)}, Y^{(i)})$ using samples $(X^{(i)}, Y^{(i)})$ from the calibration set $\D_{\text{cal}}$. \cref{sec:density_estimation} details how this can be achieved using kernel density estimation or monotonic splines, resulting in a differentiable estimate $\Fh_{\hat{L}}$ of $F_{\hat{L}}$.
All operations are carried out in log-space to ensure numerical stability. \cref{fig:visualization} illustrates \LR, with the recalibrated predictive density $\fh'_{Y|X}$ shown in the second column of the second row.

\subsection{Useful Properties of Multivariate Latent Recalibration}
\label{sec:lr_properties}

We present finite-sample coverage guarantees for \LR and highlight its connections to conformal prediction methods.
We assume that $R$ depends on an estimate $\Fh_{\hat{L}}$ of $F_{\hat{L}}$ based on latent norms $\hat{L}_1, \dots, \hat{L}_n$.

\paragraph{Finite-sample coverage guarantees for recalibrated latent norms.}

Let us assume that the estimated calibration map $\Fh_{\hat{L}}$ maps the $i$-th order statistic $\hat{L}_{(i)}$ of $\hat{L}_1, \dots, \hat{L}_n$ within a margin $\lambda / (n + 1) \geq 0$ of the target quantile $i / (n + 1)$, that is,
\begin{equation}
    \Fh_{\hat{L}}(\hat{L}_{(i)}) \in \left[\frac{i - \lambda}{n + 1}, \frac{i + \lambda}{n + 1}\right].
\end{equation}
Then, letting $\epsilon = \frac{1 + \lambda}{n + 1}$, Theorem 1 of \cite{Marx2022-yz} yields the following finite-sample coverage guarantee for the recalibrated latent norms:
\begin{equation}
    \mathbb{P}\left(F_{\rho_{\Z}(Z')}\left( \ell_{\Th}(Y; X) \right) \leq \alpha\right) = \mathbb{P}\left(\Fh_{\hat{L}}(\hat{L}) \leq \alpha\right) \in \left[\alpha - \epsilon, \alpha + \epsilon\right],
\end{equation}
where we used \cref{eq:Z'_to_F_S} for the first equality and the probabilities are taken over $X$, $Y$, and the recalibrated latent norms $\hat{L}_1, \dots, \hat{L}_n$.

\paragraph{Equivalence with conformal prediction sets.}

We observe that the prediction sets derived from the recalibrated predictive density of \LR coincide exactly with those obtained by the multivariate conformal methods CONTRA \citep{Fang2025-qs} and L-CP \citep{Dheur2025-br}. Specifically, this equivalence holds when \LR uses the empirical CDF of the calibration scores $\mc{L} = \{\hat{L}_1, \dots, \hat{L}_n, +\infty\}$ as its calibration map, i.e., $\Fh_{\hat{L}}(l) = \frac{1}{n + 1} \sum_{i=1}^n \indicator(\hat{L}_i \leq l)$.

CONTRA and L-CP are conformal methods that construct prediction sets using the conformity score $s_{\text{CONTRA}}(x, y) = s_{\text{L-CP}}(x, y) = \ell_{\Th}(y; x)$. Under this choice of calibration map, for any $x \in \X$ and $\alpha \in (0, 1)$, we have
\begin{align}
\{ y \in \Y: F_{\rho_{\Z}(Z')}(\ell_{\Th}(y; x)) \leq \alpha \}
&= \{ y \in \Y: \Fh_{\hat{L}}(\ell_{\Th}(y; x)) \leq \alpha \} \label{eq:latent_sublevel_set} \\
&= \{ y \in \Y: s_\text{CONTRA}(x, y) \leq \Fh_{\hat{L}}^{-1}(\alpha) \}, \label{eq:latent_conformal_region}
\end{align}
where $\Fh_{\hat{L}}^{-1}(\alpha) = \hat{L}_{(\lceil \alpha(n + 1) \rceil)}$ denotes the $(1 - \alpha)$ right empirical quantile of the calibration scores $\mc{L}$. This shows that the $\alpha$-sublevel sets of the PIT of the latent norm of \LR \cref{eq:latent_sublevel_set} correspond exactly to the conformal prediction sets produced by CONTRA and L-CP at coverage $\alpha$ \cref{eq:latent_conformal_region}. While this equivalence is notable, it is important to point out that the chosen calibration map $\Fh_{\hat{L}}$, being non-differentiable, does not yield a well-defined recalibrated predictive density function $\fh'_{Y|X}$.
This equivalence is summarized in the last row of \cref{table:comparison_calibration}.

\paragraph{Equivalence of \texorpdfstring{LR}{\LR} and \texorpdfstring{QR}{\QR} in the single-output setting.}

\QR is a special case of \LR where $d = 1$, $\Z=[0,1]$, $Z \sim \U(0, 1)$, $\Th^{-1}(y; x) = \Fh_{Y|X=x}(y)$ and $\rho_\Z(z) = z$. In this case, $R = \Fh_{\hat{L}}^{-1}$ and thus $\Th'^{-1}(\cdot; x) = R^{-1} \circ \Th^{-1}(\cdot; x) = \Fh_{\hat{L}} \circ \Fh_{Y|X=x} = \Fh'_{Y|X=x}$, showing that both methods perform exactly the same transformation.

\section{Related work}
\label{sec:related_work}

Our work builds upon and contributes to generative modeling, calibration, conformal prediction, and methods that combine these concepts in the context of multi-output regression. An extended description of related works is available in \cref{sec:suppl-related_work}.

Various notions of calibration have been studied, including probabilistic \citep{Gneiting2007-la}, marginal \citep{Gneiting2007-la} and HDR \citep{Chung2024-zd} calibration. \citet{Ziegel2014-lh,Allen2024-fm} also proposed multivariate notions of calibration but, to our knowledge, no calibration methods for these notions have been proposed.

While traditional CP focuses on univariate intervals \citep{Romano2019-kp, Sesia2021-tn}, recent multivariate CP methods create flexible regions. HPD-split \citep{Izbicki2022-ru} uses HPD values as scores. PCP \citep{Wang2023-vn} uses balls around samples. ST-DQR \citep{Feldman2023-cc} selects samples based on a region in a latent space and creates balls around these samples. CONTRA \citep{Fang2025-qs} and L-CP \citep{Dheur2025-br} operate in the latent space of NFs.

Some methods explicitly merge CP and recalibration. \citet{Vovk2020-pg, Vovk2019-sn} developed conformal predictive systems for calibrated univariate distributions. MCC \citep{Marx2022-yz} unified univariate recalibration methods under a CP lens. Our work extends this direction to multivariate outputs via a transformation in the latent space.

\section{Experiments}
\label{sec:experiments}

We present an extensive experimental study using 29 tabular datasets widely used in prior research \citep{Tsoumakas2011-wf, Cevid2022-ev,Chung2024-zd, Feldman2023-cc, Wang2023-vn, del-Barrio2024-la, Camehl2024-vw}. Furthermore, while recent work on model recalibration \citep{Chung2024-zd, Fang2025-qs} has primarily focused on data modalities with relatively low output dimensionality, we also include a high-dimensional output setting with an image dataset with a larger output dimension \citep{Choi2020-xa}.

\subsection{Datasets}

\paragraph{Tabular datasets.} The tabular datasets range in size from 103 to 50{,}000 data points, with the number of input features ($p$) varying from 1 to 368, and the number of output variables ($d$) ranging from 2 to 16. A detailed summary of these datasets is provided in Table~\ref{table:datasets} in Appendix~\ref{sec:datasets}. Following the protocol of \citet{Chung2024-zd}, we use a 65/20/15 split for training, validation, and testing. All input features and output targets are normalized to have zero mean and unit variance on the training set. Experiments are repeated 10 times with a different random splitting. For each run, we compare the same base model with or without recalibration.

\paragraph{Image dataset.} We use the AFHQ dataset \citep{Choi2020-xa}, which consists of high-resolution animal face images. The input $x \in \mathcal{X} = \{0, 1, 2\}$ indicates one of three classes (cat, dog, or wild animal), and the output is a $256 \times 256$ RGB image $y \in \mathcal{Y} = [-1, 1]^{3 \times 256 \times 256}$, resulting in an output dimension of $d = 196{,}608$. We follow the standard split with 14{,}630 training instances and 1{,}500 test instances. To improve sample quality, \citet{Zhai2024-ex} add Gaussian noise $\epsilon \sim \mathcal{N}(0, 0.07^2)$ to each image $y$ during training. 

\subsection{Experimental setup}

\paragraph{(Non-recalibrated) base model.} For the tabular datasets, we consider convex potential flows \citep{Huang2020-md}, masked autoregressive flows \citep[MAFs,][]{Papamakarios2017-uh} and flow matching \citep[FM,][]{Lipman2022-tm}. A notable difference from the setup of \citet{Chung2024-zd} is that their predictive distributions are restricted to multivariate Gaussians with diagonal covariance, whereas NFs can model dependencies between output dimensions. For the image dataset, we use the TarFlow model \citep{Zhai2024-ex}, a transformer-based conditional NF pre-trained on AFHQ, which achieves state-of-the-art likelihood performance. As is standard, all aforementioned NFs use a latent variable $Z \sim \mathcal{N}(0, I)$. Details on these base models are provided in \cref{sec:base_predictors}, with hyperparameter tuning details for convex potential flows in \cref{sec:additional_details}. Results for MAFs and FM are deferred to \cref{sec:results_autoregressive_flows,sec:results_cfm}. In the following, we denote the non-recalibrated base model as \BASE.

\paragraph{Compared methods.} For our latent recalibration method, \LR, we use the Euclidean norm $\rho_\mathcal{Z}(z) = \|z\|$ and estimate $F_L$ using kernel density estimation with a Gamma kernel; details are provided in \cref{sec:density_estimation}. For both tabular and image datasets, we compare \LR with the base model \BASE. Additionally, we include \HDRR for tabular datasets only, as it becomes computationally prohibitive for TarFlow. For tabular datasets, following \citet{Chung2024-zd}, the recalibration map is learned on the validation set. This avoids using additional data for calibration and ensures a fair comparison with \BASE, but sacrifices finite-sample guarantees. For the image dataset, since no separate calibration set is available, calibration is performed on the training data. This also sacrifices finite-sample guarantees, but we will show below that it still leads to substantial improvements in calibration.

\paragraph{Error metrics.} We consider several error metrics to compare the different methods. For the tabular datasets, we evaluate model calibration using the latent expected calibration error (L-ECE) and the HDR expected calibration error (HDR-ECE). Both metrics range from 0 (best) to 0.5 (worst). Predictive accuracy is assessed using two strictly proper scoring rules: negative log-likelihood (NLL) and the energy score (ES). Notably, \LR yields a recalibrated density with a closed-form PDF, enabling direct computation of the NLL, which is not possible with \HDRR. Since the scales of NLL and ES vary across datasets, we report relative values, defined as the difference with the score achieved by \BASE. All metrics are negatively oriented. Exact definitions are provided in \cref{sec:evaluation_metrics}. For the image dataset, we report L-ECE and the bits per dimension (BPD), following \citet{Zhai2024-ex}. BPD corresponds to a rescaled version of the NLL (details in \cref{sec:evaluation_metrics}).

\subsection{Results}

\paragraph{Tabular datasets.}

\begin{wrapfigure}[14]{r}{0.6\textwidth}
	\centering
    \vspace{-1.6em}
	\includegraphics[width=\linewidth]{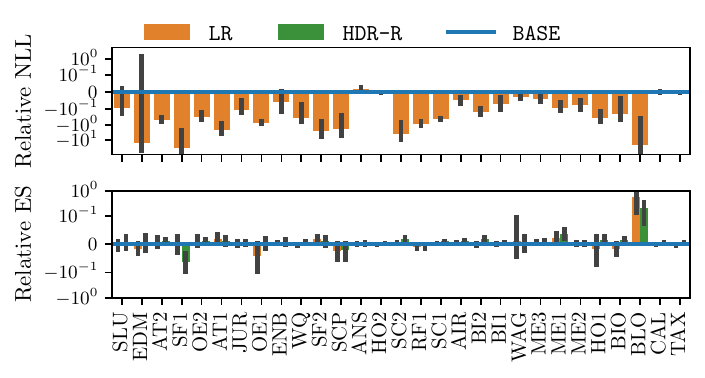}
	\vspace{-0.6cm}
	\caption{Relative NLL and ES on datasets sorted by size, using a convex potential flow model.}
	\label{fig:barplot/scoring_rules}
\end{wrapfigure}

\cref{fig:barplot/scoring_rules} presents the normalized difference relative to \BASE for NLL and ES. We observe that \LR reduces the NLL on the majority of datasets. Since the NLL is a strictly proper scoring rule, this indicates that the recalibrated density $\fh'_{Y|X}$ produced by \LR generally provides a better fit to the true data distribution than the original model $\fh_{Y|X}$. In contrast to the NLL, the ES of \LR and \HDRR is largely unchanged. In \cref{sec:discriminative_ability}, we attribute this phenomenon to the metric's weaker discriminative ability relative to misspecifications in variance, correlation, and overall dependency structure. A notable exception is the increased ES on the \texttt{blog\_data} (BLO) dataset, where one output is discrete and a single value is repeated across 64\% of the instances. Based on this observation, we suggest not using \LR on datasets with discrete outputs.

\cref{fig:barplot/calibration} also shows the L-ECE (as a measure of latent calibration) and HDR-ECE (as a measure of HDR calibration), respectively. We see that \BASE exhibits significant latent miscalibration across many datasets, with L-ECE values reaching up to 0.25 out of a maximum of 0.5. In contrast, \LR consistently and substantially reduces L-ECE, demonstrating its effectiveness in achieving the desired latent calibration. Moreover, L-ECE tends to decrease as dataset size increases, which aligns with the finite-sample guarantees discussed in \cref{sec:lr_properties}. Reliability diagrams in \cref{sec:reliability_diagrams} further confirm this improvement. Additional experiments on a misspecified model are provided in \cref{sec:results_misspecified}, with significant improvements given by \LR across all metrics including ES.

\begin{figure}[ht]
	\includegraphics[width=\linewidth]{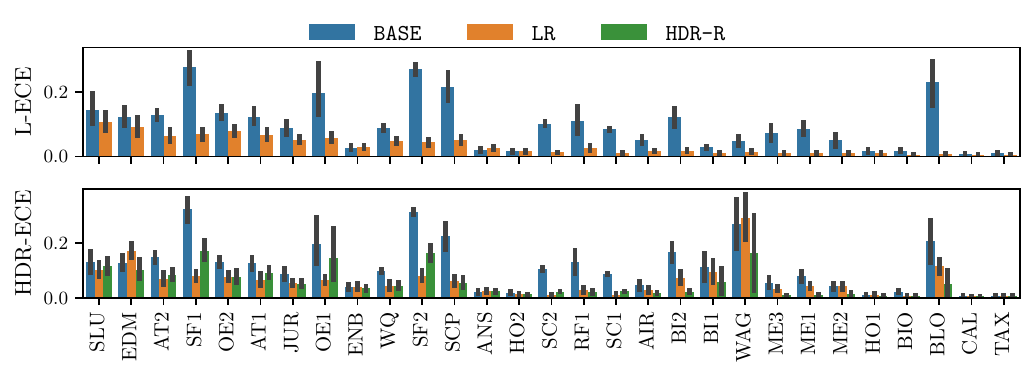}
	\vspace{-0.6cm}
	\caption{L-ECE and HDR-ECE on datasets sorted by size, using a convex potential flow model.}
	\vspace{-0.4cm}
	\label{fig:barplot/calibration}
\end{figure}

In \cref{fig:barplot/calibration}, while \LR does not explicitly target HDR calibration, we observe that it significantly improves HDR-ECE compared to \BASE on most datasets, often performing on par with \HDRR. As expected, \HDRR achieves low HDR-ECE values by design. These results suggest that improving latent calibration also enhances the calibration of HDRs.

Access to a full, calibrated PDF is crucial for any task requiring estimation of the probability mass within an \textit{arbitrary, non-standard region} of the output space, a capability that set-based methods (like CP) or pure sampling-based methods (like \HDRR) do not provide. A direct application is anomaly detection, where low-density points are classified as anomaly \citep{Rozner2024-cj,Perini2024-tn}. Other examples include risk assessment in engineering, targeted material design, or optimal control. To make the benefits of a full PDF concrete, we provide an experiment on a decision-making task in \cref{sec:decision_making_experiment}.

Further detailed results with the same convex potential flow base predictor are provided in \cref{sec:results_cpf}. \cref{sec:NLL_performance_gain} shows that the primary NLL gain obtained by \LR is due to finding more “plausible” latent codes for the observed data under the base latent distribution. \cref{sec:computational_efficiency} shows that \LR significantly improves the time to compute the calibration map
compared to \HDRR. \cref{sec:discriminative_ability} hypothesizes through theoretical and empirical considerations that the reason the ES remains largely unchanged after \LR is due to its relative insensitivity to misspecifications in variance and correlation.

\paragraph{Image dataset.} The goal of our image data experiment is to understand the behaviour of latent calibration and recalibration with high-dimensional outputs.
\cref{table:lr_tarflow} shows that \BASE suffers from severe latent miscalibration, with L-ECE values approaching the maximum of 0.5. \LR dramatically improves latent calibration, reducing L-ECE to below 0.01. We also report the bits-per-dimension (BPD), a scaled version of the NLL. Notably, \LR does not degrade the original NLL; in fact, it slightly reduces it. \LR preserves the visual quality of the samples from the base model, with no perceptually visible changes, which aligns with the very small change we observed in NLL.

\begin{table}[t]
	\small
	\caption{Performance of \LR compared to \BASE on the AFHQ dataset with TarFlow (standard errors across 20 evaluations).}
	\label{table:lr_tarflow}
	\vspace{0.2cm}
	\centering
	\begin{tabular}{llll}
		\toprule
		\multicolumn{2}{c}{L-ECE} & \multicolumn{2}{c}{BPD} \\
		\BASE & \LR & \BASE & \LR \\
		\midrule
		$\text{0.474}_{\text{0.000625}}$ & $\text{0.00895}_{\text{0.00160	}}$ & $\text{5.477}_{\text{3.523e-05}}$ & $\text{5.465}_{\text{1.772e-05}}$ \\
		\bottomrule
	\end{tabular}
\end{table}

\section{Conclusion and limitations}
\label{sec:conclusion}

We introduced latent recalibration (\LR), a novel post-hoc method for calibrating conditional normalizing flows in multi-output regression. By transforming the latent space based on calibration scores derived from latent distances, \LR achieves latent calibration, ensuring that prediction regions defined in the latent space have correct coverage. Unlike many conformal prediction methods that only output sets, and unlike sampling-based recalibration methods, \LR yields a fully specified, recalibrated PDF. This offers significant advantages in terms of computational efficiency and applicability to tasks requiring density estimates. Our extensive experiments on tabular and high-dimensional image data demonstrate that \LR consistently improves NLL, latent calibration, and HDR calibration.

We identify the main limitations of \LR as follows.
Firstly, \LR intentionally adjusts only the magnitude of latent vectors, not their direction, and thus cannot fix miscalibration arising from errors in the orientation of the learned latent manifold. While \LR can only perform simple adjustments, this allows simplifying the difficult multivariate calibration problem into a tractable univariate one (calibrating norms). This enables connections with conformal prediction and recalibration methods, and has good empirical performance.
Secondly, \LR requires the norm of the latent distribution to follow a simple distribution. This is usually the case, as normalizing flows predominantly use a standard Gaussian latent variable.
Thirdly, \LR requires an invertible transformation between the response and latent spaces, and a latent random variable with a known, tractable density, which makes it incompatible with models such as variational auto-encoders \citep{Kingma2014-wu} or denoising diffusion probabilistic models \citep{Ho2020-rc}. Instead, NF and FM models are natural fits for \LR.
Despite these considerations, \LR provides a practical and effective tool for obtaining reliable, calibrated multivariate predictive distributions from generative models.

\printbibliography
\newpage
\appendix

\newpage

\section{Datasets}
\label{sec:datasets}

\cref{table:datasets} shows the datasets considered in our experiments, with associated reference papers. The datasets are characterized by their total number of instances, number of features $p$ and number of outcomes $d$. The preprocessing follows the setup described in \cite{Grinsztajn2022-nu}.

\begin{table}[H]
	\caption{Lists of evaluated tabular datasets.}
	\vspace{0.2cm}
	\label{table:datasets}
    \centering
	\begin{tabular}{lllrrr}
\toprule
 &  &  & Total size & $p$ & $d$ \\
Paper & Dataset & Abbreviation &  &  &  \\
\midrule
\multirow[t]{13}{*}{\cite{Tsoumakas2011-wf}} & slump & SLU & 103 & 7 & 3 \\
 & edm & EDM & 154 & 16 & 2 \\
 & atp7d & AT2 & 296 & 355 & 6 \\
 & sf1 & SF1 & 323 & 31 & 3 \\
 & oes97 & OE2 & 334 & 263 & 16 \\
 & atp1d & AT1 & 337 & 354 & 6 \\
 & jura & JUR & 359 & 15 & 3 \\
 & oes10 & OE1 & 403 & 298 & 16 \\
 & enb & ENB & 768 & 3 & 2 \\
 & wq & WQ & 1060 & 16 & 14 \\
 & sf2 & SF2 & 1066 & 31 & 3 \\
 & scpf & SCP & 1137 & 8 & 3 \\
\cite{del-Barrio2024-la} & ansur2 & ANS & 1986 & 1 & 2 \\
\cite{Camehl2024-vw} & households & HO2 & 7207 & 14 & 4 \\
\multirow[t]{4}{*}{\cite{Tsoumakas2011-wf}} & scm20d & SC2 & 8966 & 60 & 16 \\
 & rf1 & RF1 & 9005 & 64 & 8 \\
 & scm1d & SC1 & 9803 & 279 & 16 \\
\multirow[t]{4}{*}{\cite{Cevid2022-ev}} & births2 & BI2 & 10000 & 24 & 4 \\
 & air & AIR & 10000 & 15 & 6 \\
 & births1 & BI1 & 10000 & 23 & 2 \\
 & wage & WAG & 10000 & 78 & 2 \\
\multirow[t]{6}{*}{\cite{Feldman2023-cc}} & meps\_21 & ME3 & 15656 & 138 & 2 \\
 & meps\_19 & ME1 & 15785 & 138 & 2 \\
 & meps\_20 & ME2 & 17541 & 138 & 2 \\
 & house & HO1 & 21613 & 17 & 2 \\
 & bio & BIO & 45730 & 8 & 2 \\
 & blog\_data & BLO & 50000 & 269 & 2 \\
 \cite{del-Barrio2024-la} & calcofi & CAL & 50000 & 1 & 2 \\
\cite{Wang2023-vn} & taxi & TAX & 50000 & 4 & 2 \\
\bottomrule
\end{tabular}

\end{table}

\section{Extended related work}
\label{sec:suppl-related_work}

\paragraph{Normalizing flows.}
Various NF architectures exist, including RealNVP \citep{Dinh2016-ux}, MAF \citep{Papamakarios2017-uh}, Glow \citep{Kingma2018-hp}, spline flows \citep{Durkan2019-sc}, convex potential flows \citep{Huang2020-md} and transformer flows \citep{Zhai2024-ex}. NFs are well-suited for \LR due to their invertible mapping and explicit density.
Radial flows \citep{Rezende2015-hx} are a special case of \cref{eq:R_transform_final} with $r(t) = t \cdot (\alpha + t + \beta) / (\alpha + t)$ where $\alpha \in \R_+$ and $\beta \in \R$ are learned parameters.

\paragraph{Conditional notions of calibration.}
Multiple extensions of notions of calibration have been proposed by imposing different conditions, including group calibration \citep{Pleiss2017-zu}, distribution calibration (or auto-calibration) \citep{Song2019-bk, Tsyplakov2013-mm}, individual calibration \citep{Zhao2020-ze}, and threshold calibration \citep{Sahoo2021-hf}. Latent calibration could be extended by imposing similar conditions.

\paragraph{Calibration methods.}
Improving calibration often involves post-hoc recalibration or regularization during training.
\textit{Recalibration} methods adjust pre-trained models. Notable methods include \cite{Kuleshov2018-tb,Kuleshov2022-pv,Chung2024-zd}.
\textit{Regularization} methods incorporate calibration objectives into training \citep{Utpala2020-nw, Marx2023-zj, Dheur2023-bo}.
However, these methods target univariate settings and can trade off predictive accuracy (NLL, CRPS) for calibration \citep{Yoon2023-pa,Dheur2023-bo}. \citet{Dheur2024-zs} integrated \QR end-to-end into training, followed by post-hoc recalibration.

\section{Proofs}

\subsection{Jacobian determinant of the radial transform}
\label{sec:proof_radial_ldj}

Recall that the transformation $R$ is defined as
\begin{equation}
	R(z) = \frac{r(l)}{l} z
\end{equation}
where $l = \norm{z}$, for $z \neq 0$. It maps $z$ to a new vector $R(z)$ such that its norm becomes $r(l)$ while its direction $z/l$ is preserved (for $z \neq 0$).
We analyze this transformation using hyperspherical coordinates $(l, \omega_1, \dots, \omega_{d-1})$, where $l=\norm{z}$ is the radial distance and $(\omega_1, \dots, \omega_{d-1})$ are the angular coordinates. The transformation $R$ maps these coordinates from $(l, \omega_1, \dots, \omega_{d-1})$ to $(r(l), \omega_1, \dots, \omega_{d-1})$, as only the radial distance is altered.

The Cartesian volume element $\mathrm{d}^d z$ is related to the hyperspherical volume element by $\mathrm{d}^d z = l^{d-1} \mathrm{d}l \, \mathrm{d}\Omega_{d-1}$, where $\mathrm{d}\Omega_{d-1}$ is the surface element on the unit $(d-1)$-sphere.
Under the transformation $R$, the new radial coordinate is $l' = r(l)$, so its differential is $\mathrm{d}l' = \frac{\partial r(l)}{\partial l} \mathrm{d}l$. The angular part $\mathrm{d}\Omega_{d-1}$ remains unchanged.
The transformed volume element $\mathrm{d}^d R(z)$ is thus given by:
\begin{equation}
\mathrm{d}^d R(z) = (r(l))^{d-1} \left(\frac{\partial r(l)}{\partial l} \mathrm{d}l\right) \mathrm{d}\Omega_{d-1}.
\end{equation}
The Jacobian determinant $\left| \det\left( \nabla_z R(z) \right) \right|$ is the ratio of the transformed volume element $\mathrm{d}^d R(z)$ to the original volume element $\mathrm{d}^d z$:
\begin{equation}
\left| \det\left( \nabla_z R(z) \right) \right| = \frac{(r(l))^{d-1} \frac{\partial r(l)}{\partial l} \mathrm{d}l \, \mathrm{d}\Omega_{d-1}}{l^{d-1} \mathrm{d}l \, \mathrm{d}\Omega_{d-1}} = \left( \frac{r(l)}{l} \right)^{d-1} \frac{\partial r(l)}{\partial l},
\end{equation}
which corresponds to \cref{eq:log_det_jac_R}.
This holds for $l = \norm{z} > 0$. Since $r: \R_+ \to \R_+$, $r(l) \ge 0$. Furthermore, $r(l)$, as defined by \cref{eq:r_transform_final}, is a composition of non-decreasing functions (a CDF and an inverse CDF), making it non-decreasing, so $\frac{\partial r(l)}{\partial l} \ge 0$. Thus, the expression is inherently non-negative.

\section{Differentiable calibration maps using density estimation}
\label{sec:density_estimation}

To obtain a differentiable calibration map $\Fh_{\hat{L}}$, we estimate the PDF $\fh_{\hat{L}}$ of the calibration data using density estimation. We identified two approaches that performed well in our experiments.

As an implementation detail for both approaches, density estimation was generally improved by first applying the transformation $g(t) = t^{1/3}$ to the calibration scores $\{\hat{L}_i\}_{i=1}^n$. After density estimation in the transformed space, the data is rescaled using the inverse transformation $g^{-1}(t) = t^3$.

\subsection{Kernel density estimation with Gamma kernels}

We found that kernel density estimation (KDE) with Gamma kernels is effective because the Gamma distribution has positive support, which is appropriate for the calibration scores $\hat{L}_i \ge 0$. Let $\Gamma(\zeta, \lambda)$ denote a Gamma distribution with shape $\zeta > 0$ and rate $\lambda > 0$. A Gamma distribution $\Gamma(\mu \lambda, \lambda)$ has a mean of $\mu$. We center a Gamma kernel at each calibration score $\hat{L}_i$, using the distribution $\Gamma(\hat{L}_i\lambda, \lambda)$, which has a mean of $\hat{L}_i$.

The resulting estimated CDF $\Fh_{\hat{L}}(t)$ is given by the average of the individual kernel CDFs:
\begin{align}
    \Fh_{\hat{L}}(t) = \frac{1}{n} \sum_{i = 1}^n F_{\Gamma(\hat{L}_i \lambda, \lambda)}(t).
\end{align}
The rate parameter $\lambda$ is chosen by minimizing the NLL of the calibration dataset under the KDE model. This is done using 10-fold cross-validation over the grid $\left\{ 10^{-5 + 10 \cdot \frac{i}{99}}\right\}_{i = 0}^{99}$. This hyperparameter selection process is efficient and performed once per run.

\cref{density_estimation/kde} shows an example fit on all datasets, illustrating the empirical and estimated smooth CDFs (left $y$ axis) and the estimated log PDF (right $y$ axis).

\begin{figure}[H]
	\includegraphics[width=\linewidth]{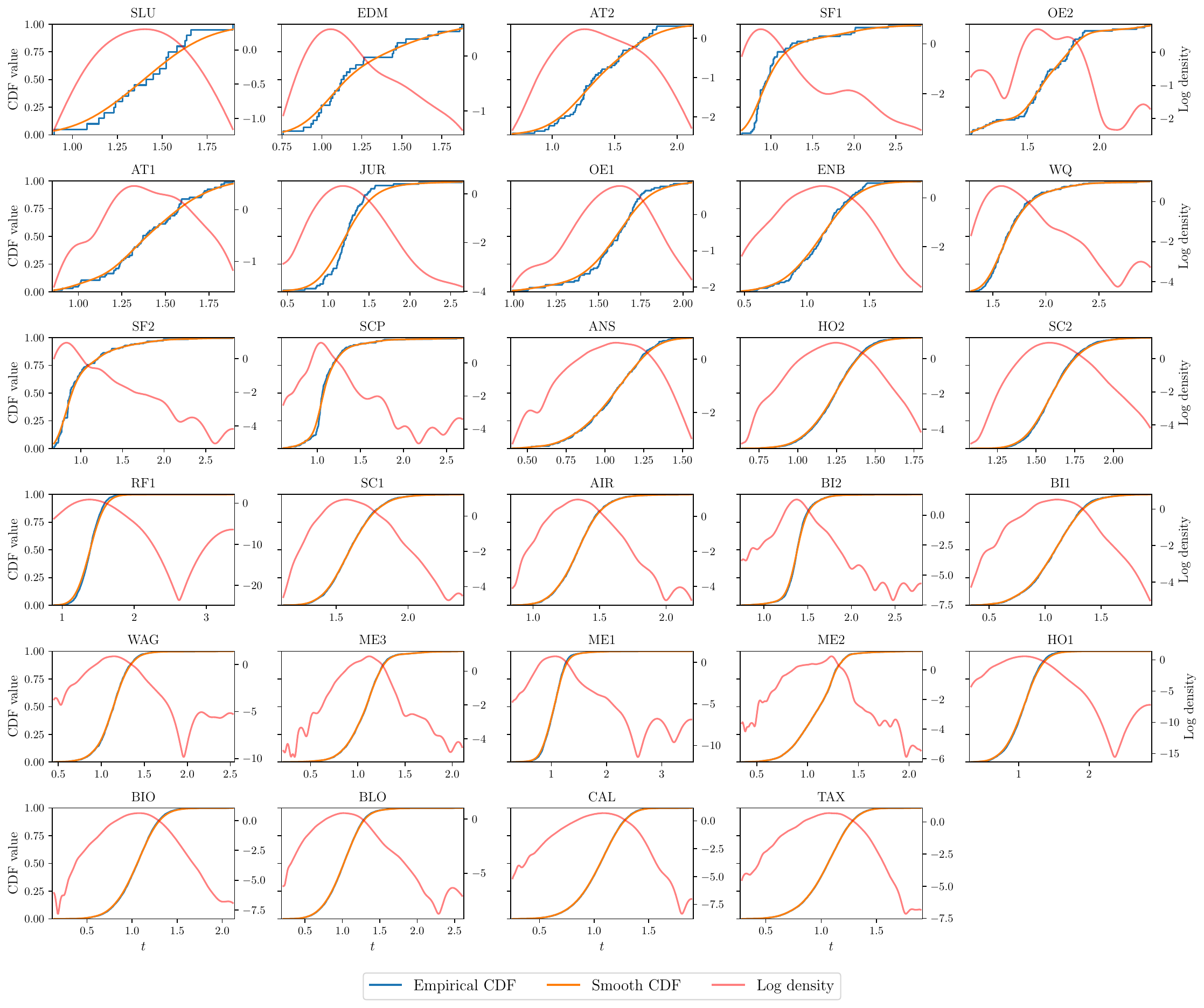}
	\caption{Density estimation using KDE with a Gamma kernel.}
	\label{density_estimation/kde}
\end{figure}

\subsection{Rational Quadratic Splines}

Rational Quadratic Splines \citep{Durkan2019-sc} provide a flexible framework for defining invertible and differentiable transformations. A base spline $\Phi$ maps $[-1, 1]$ to $[-1, 1]$. To handle the unbounded domain of the latent norms, we use the transformation $\Psi = \text{tanh}^{-1} \circ \Phi \circ \text{tanh}$, which maps $\R$ to $\R$ and retains invertibility and differentiability. This transformation is used to model the distribution of the latent norms by learning a mapping from a standard Gaussian distribution to the data distribution.

For training, the data is normalized to have zero mean and unit variance. The parameters of the spline are optimized to minimize the NLL of the calibration dataset under the defined model. Optimization is performed using Adam \citep{Kingma2014-pd}. To maximize data information, we perform early stopping on the training dataset itself and stop if the loss did not improve by 1e-4 for 50 epochs. Overfitting is prevented by limiting the number of bins and thus the flexibility of the spline. Specifically, we use 4 bins if $n \leq 30$, 5 bins if $n \leq 50$, 6 bins if $n \leq 70$, 7 bins if $n \leq 80$, 8 bins if $n \leq 90$, and 9 bins if $n \leq 100$.

Similar to \cref{density_estimation/kde}, \cref{density_estimation/spline} shows an example fit for all datasets.

\begin{figure}[H]
	\includegraphics[width=\linewidth]{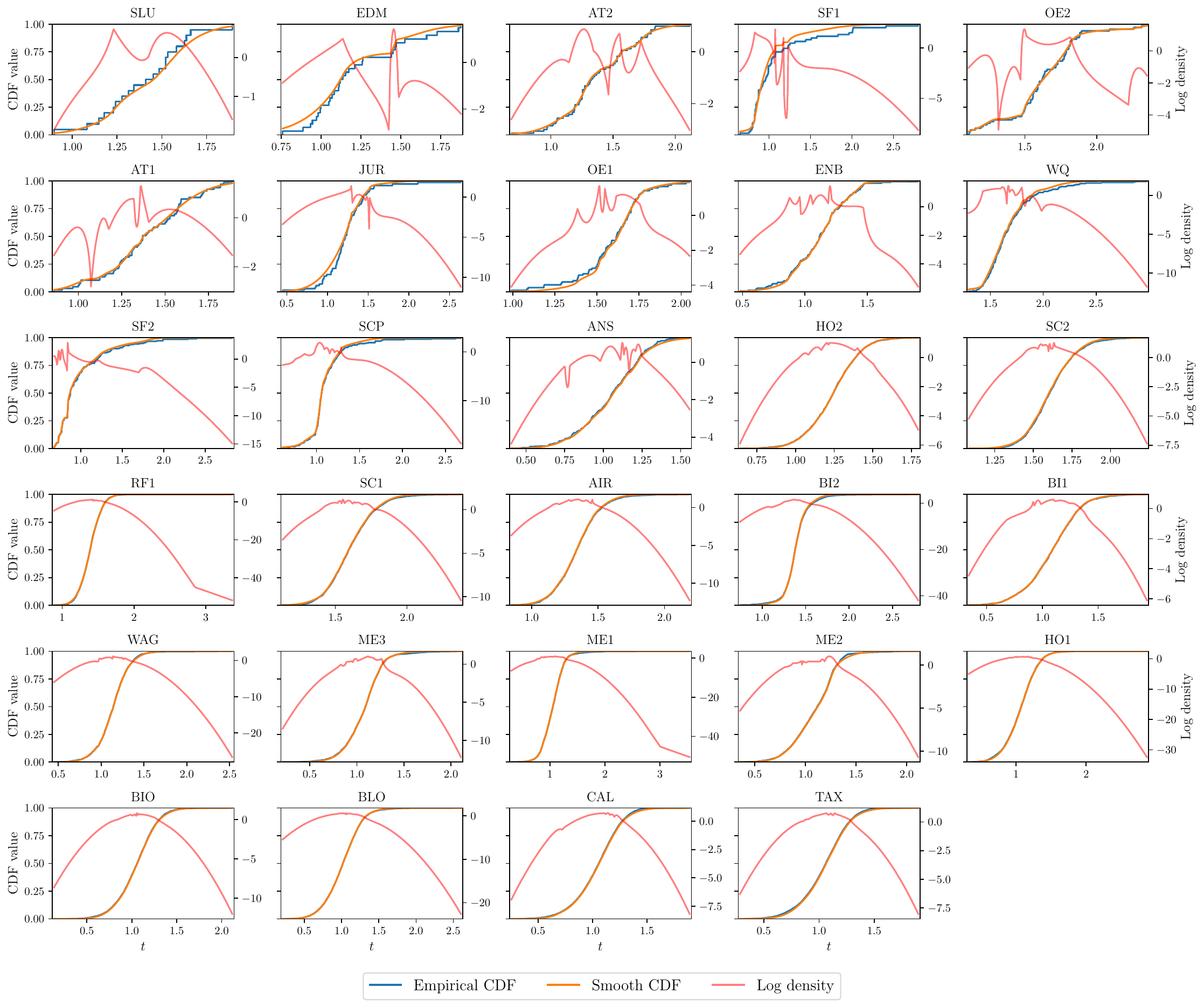}
	\caption{Density estimation using a rational quadratic spline.}
	\label{density_estimation/spline}
\end{figure}

\subsection{Challenges with numerical precision}

In this section, we address a potential alternative approach and explain why it is impractical due to numerical precision constraints. Theoretically, one could attempt to estimate the calibration map $\Fh_{\hat{U}}$ using density estimation on the quantity $\hat{U} = F_{\rho_{\Z}(Z)}(\ell_{\Th}(Y; X))$. Since $\hat{U}$ is expected to follow a standard uniform distribution under ideal conditions, estimating its density might appear practical.

However, this approach faces significant numerical precision issues, particularly when the latent space dimensionality $d$ is large. When $d$ is large, the CDF $F_{\rho_{\Z}(Z)}$ for $\rho_{\Z}(Z) \sim \chi_d$ becomes extremely steep around its mode $\sqrt{d - 1}$. For example, in our image application with $d = 196,608$, a proportion $99\%$ of the probability mass is concentrated in the narrow interval $[440.7, 446.2]$. In single-precision floating-point arithmetic, the CDF saturates quickly: $F_{\chi_d}(t)$ is numerically $0.0$ for $t < 433.4$ and $1.0$ for $t > 447.2$.

If the latent model is miscalibrated, the values of $\ell_{\Th}(Y; X)$ for the calibration data can fall outside this narrow range where the CDF has fine-grained variation. This results in many $\hat{U}$ values being numerically $0.0$ or $1.0$. For smaller dimensions, similar issues can occur, although less frequently. For instance, with $d = 1$, $F_{\chi_1}(t)$ is numerically $1.0$ for $t > 5.54$ in single precision. When a significant portion of the calibration data for $\hat{U}$ consists of values numerically identical to $0.0$ or $1.0$, accurate density estimation becomes impossible.

For this reason, in \cref{sec:latent_recalibration}, we based our calibration map on density estimation of $\hat{L} = \ell_{\Th}(Y; X)$ directly, using $\Fh_{\hat{L}}$.

\section{Additional details on experimental setup}
\label{sec:additional_details}

Computing the main tabular data results requires approximately 24 hours on an RTX A6000 GPU, and reproducing the image results requires approximately 6 hours on an RTX 6000 GPU. Experiments can require up to 48 GB of VRAM, primarily due to large batch sizes during sampling for evaluation metrics. Decreasing the batch size reduces VRAM requirements but increases computation time.

For tabular datasets, we tune hyperparameters using grid search, selecting those that yield the lowest NLL on the validation set. For the convex potential flow, the number of units in the input convex neural network is chosen from $[10, 20, 40]$, the number of layers from $[2, 3, 5]$, and the learning rate from $[5\times10^{-3}, 10^{-3}, 2\times10^{-4}]$. All models are trained by minimizing the NLL with the Adam optimizer \citep{Kingma2014-pd} using a batch size of 1024.

\subsection{Evaluation metrics}
\label{sec:evaluation_metrics}

\paragraph{NLL.}

We compute the average NLL over the test set as $\D_{\text{test}}$:
\begin{align}
    \widehat{\text{NLL}} = \frac{1}{|\D_{\text{test}}|} \sum_{(X, Y) \in \D_{\text{test}}} -\log \hat{f}_{Y|X=X}(Y).
\end{align}

\paragraph{L-ECE.}
For each test point $(X^{(i)}, Y^{(i)}) \in \D_{\text{test}}$, we compute the PIT of the latent norm $\hat{U}_i = F_{\rho_{\Z}(Z)}\left( \ell_{\Th}(Y^{(i)}; X^{(i)}) \right)$. The Latent Expected Calibration Error (L-ECE) is then estimated as the $L_1$ distance between the empirical CDF of $\{\hat{U}_i\}_{i=1}^{|\D_{\text{test}}|}$ and the uniform CDF:
\begin{align}
    \widehat{\text{L-ECE}} = \frac{1}{|\D_{\text{test}}|} \sum_{j = 1}^{|\D_{\text{test}}|} \left|\hat{U}_{(j)} - \frac{j}{|\D_{\text{test}}| + 1}\right|,
\end{align}
where $\hat{U}_{(j)}$ denotes the $j$-th order statistic of the computed PIT values.

\paragraph{Energy Score.}
For each test point $(X, Y) \in \D_{\text{test}}$, we generate two independent sets of $K$ samples, $\mc{S}_x$ and $\mc{S}'_x$, from the predictive distribution $\hat{f}_{Y|X=x}(\cdot)$. The Energy Score (ES) is estimated as:
\begin{align}
    \widehat{\text{ES}} = \frac{1}{|\D_{\text{test}}|} \sum_{(X, Y) \in \D_{\text{test}}} \left( \frac{1}{K} \sum_{\hat{y} \in \mc{S}_x} \|\hat{y} - Y\| - \frac{1}{2K^2} \sum_{\hat{y} \in \mc{S}_x, \hat{y}' \in \mc{S}'_x} \|\hat{y} - \hat{y}'\| \right).
\end{align}
In our experiments, we use $K=100$.

\paragraph{HDR-ECE.}
For each test point $(X^{(i)}, Y^{(i)}) \in \D_{\text{test}}$, we compute $G_i = \text{HPD}_{\hat{f}_{Y|X=X^{(i)}}}(Y^{(i)})$, as defined in \cref{table:comparison_calibration}. The HDR Expected Calibration Error (HDR-ECE) is estimated similarly to L-ECE:
\begin{align}
    \widehat{\text{HDR-ECE}} = \frac{1}{|\D_{\text{test}}|} \sum_{j = 1}^{|\D_{\text{test}}|} \left|G_{(j)} - \frac{j}{|\D_{\text{test}}| + 1}\right|,
\end{align}
where $G_{(j)}$ is the $j$-th order statistic of the computed HDR pre-ranks. Note that computing the HDR-ECE for \HDRR exactly is not possible as \HDRR does not yield an explicit recalibrated density $\hat{f}'_{Y|X}$. Following \citet{Chung2024-zd}, we use the density $\hat{f}_{Y|X}$ of the original (non-recalibrated) model for \HDRR when evaluating its HDR-ECE.

\paragraph{BPD.}
For image datasets, we report the Bits Per Dimension (BPD), calculated as in \citet{Zhai2024-ex}.
where $d$ is the output dimensionality (e.g., $d = 3 \times 256 \times 256$ for AFHQ). The BPD is then:
\begin{align}
    \widehat{\text{BPD}} = (\widehat{\text{NLL}} / d + \log 128) / \log 2.
\end{align}
Here, the $\log 128$ term accounts for the scaling of pixel values from $[0, 255]$ to $[-1, 1]$, and division by $\log 2$ converts the NLL from nats to bits.

\paragraph{Relative NLL or ES.}
To better visualize improvements in NLL or ES relative to the baseline model \BASE, we report the difference in these scores, normalized by the absolute value of the score of \BASE. For example, the relative NLL for \LR is computed as $(\widehat{\text{NLL}}_{\LR} - \widehat{\text{NLL}}_{\BASE}) / |\widehat{\text{NLL}}_{\BASE}|$. A negative value indicates improvement by \LR.

\subsection{HDR recalibration}
\label{sec:hdr_recalibration}

For completeness, \cref{alg:pre-rank-recal} provides the exact recalibration procedure of the \HDRR baseline \citep{Chung2024-zd} introduced in \cref{sec:recalibration_methods}. Instead of the HDR recalibration algorithm in \cite{Chung2024-zd}, we present a direct generalization to any pre-rank $g: \X \times \Y \to \R$, which we call pre-rank recalibration. \HDRR is a special case when $g(x, y) = -\hat{f}_{Y|X=x}(y)$. 

\begin{algorithm}[t]
	\caption{Pre-rank recalibration.}
	\label{alg:pre-rank-recal}
	\begin{algorithmic}[1]
		\STATE \textbf{Input:} Calibration dataset $\D_\text{cal}$, pre-rank $g: \X \times \Y \to \R$, number of samples $K$, number of bins $B$, base predictor with predictive distribution $\hat{P}_{Y|X}$, test input $x_\text{test}$.
		\STATE \textbf{Calibration:}
		\STATE \textbf{for} $(X^{(i)}, Y^{(i)}) \in \D_\text{cal}$
		\STATE \quad \textbf{for} $k = 1$ to $K$
		\STATE \qquad $\hat{Y}^{(k, i)} \sim \hat{P}_{Y|X=X^{(i)}}$
		\STATE \qquad $\hat{G}^{(k,i)} \gets g\left(X^{(i)}, \hat{Y}^{(k, i)}\right)$
		\STATE \quad Define $\hat{F}_{\hat{G}\mid X=X^{(i)}}(c) = \frac{1}{K} \sum_{k=1}^K \indicator\left( \hat{G}^{(k,i)} \leq c \right)$
		\STATE \quad $\hat{U}_i \gets \hat{F}_{\hat{G}\mid X=X^{(i)}}\left(g\left(X^{(i)}, Y^{(i)}\right)\right)$
		\STATE Define $\hat{F}_{\hat{U}}(u) = \frac{1}{|\D_\text{cal}|} \sum_{i=1}^{|\D_\text{cal}|} \indicator\left(\hat{U}_i \leq u\right)$ \COMMENT{Calibration map}
		\STATE \textbf{Prediction:}
		\STATE \textbf{for} $k = 1$ to $K$
		\STATE \quad $\hat{Y}^{(k)} \sim \hat{P}_{Y\mid X=x_\text{test}}$
		\STATE \quad $\hat{G}^{(k)} \gets g\left(x_\text{test}, \hat{Y}^{(k)}\right)$
		\STATE Define a permutation $\pi$ such that $\hat{G}^{(\pi(1))} \leq \dots \leq \hat{G}^{(\pi(K))}$
		\STATE $\mc{S}' \gets \emptyset$ \COMMENT{Initial set of samples}
		\STATE \textbf{for} $b=1$ to $B$
		\STATE \quad $n_b \gets \floor{K \hat{F}_{\hat{U}}(\frac{b}{B})} - \floor{K \hat{F}_{\hat{U}}(\frac{b - 1}{B})}$ \COMMENT{Number of resamples}
		\STATE \quad \textbf{if} $n_b > 0$
		\STATE \qquad $\mc{B}_b \gets \left\{ \left\lfloor \frac{K (b - 1)}{B} \right\rfloor + 1, \dots, \left\lfloor \frac{K b}{B} \right\rfloor \right\}$
		\STATE \qquad $\mc{S}_b \gets \{ \hat{Y}^{(\pi(k))} \}_{k \in \mc{B}_b}$ \COMMENT{Samples pool}
		\STATE \qquad $\{ \tilde{Y}^{(k)} \}_{k=1}^{n_b} \sim P_{\mc{S}_b}$ \COMMENT{Resampling with replacement}
		\STATE \qquad $\mc{S}' \gets \mc{S}' \cup \{ \tilde{Y}^{(k)} \}_{k=1}^{n_b}$
		\RETURN recalibrated predictive samples $\mc{S}'$
	\end{algorithmic}
\end{algorithm}

\section{Decision-making experiment}
\label{sec:decision_making_experiment}

To make the benefits of a full PDF concrete, we have conducted an experiment on a decision-making task.

\paragraph{Experiment setup.}

We use the SLUMP dataset, where inputs are ingredients for producing concrete and outputs $Y = (S, F, C) \in \mathbb{R}^3$ are three concrete properties. A manufacturer must decide among 3 actions $\mc{A} = \{A, B, D\}$ whether a given batch of ingredients is suitable for one of two projects ($A$ or $B$), each with specific requirement regions, or if it should be discarded ($D$). The decision has different financial utilities and risks:
\begin{itemize}
    \item Requirements for Project A: $7 \leq S \leq 20, 55 \leq F \leq 65, 25 \leq C \leq 40$.
    \item Requirements for Project B: $20 \leq S \leq 29, 70 \leq F \leq 100, 15 \leq C \leq 30$.
\end{itemize}

The expected utility for an agent with policy $a: \X \to \mc{A}$ is given by
\begin{equation}
\E{u(Y, a(X))} \text{ with } u(y, a) = \begin{cases}
    2000, & \text{if $a = A$ and $y \in \text{Region}_A$} \\
    -30, & \text{if $a = A$ and $y \not\in \text{Region}_A$} \\
    1500, & \text{if $a = B$ and $y \in \text{Region}_B$} \\
    -15, & \text{if $a = B$ and $y \not\in \text{Region}_B$} \\
    -10, & \text{if $a = C$.}
\end{cases}
\end{equation}

The optimal action is chosen by maximizing the estimated expected utility. This requires estimating the probabilities $\hat{P}(Y \in \text{Region}_A \mid X)$ and $\hat{P}(Y \in \text{Region}_B \mid X)$, which are computed using two approaches: (1) Monte Carlo estimation with 125 samples, or (2) numerical integration of the PDF over a 5x5x5 grid via the trapezoidal rule.

The agent acts according to the policy $a^*(X) = \argmax_{a \in \{A, B, C\}} u_a(X)$ with
\begin{align*}
    u_A(X) &= 2000 \hat{P}(Y \in \text{Region}_A \mid X) - 30 \hat{P}(Y \not\in \text{Region}_A \mid X) \\
    u_B(X) &= 1500 \hat{P}(Y \in \text{Region}_B \mid X) - 15 \hat{P}(Y \not\in \text{Region}_B \mid X) \\
    u_C(X) &= -10.
\end{align*}

\paragraph{Results.}

\cref{table:decision_making_results} show two key observations:
\begin{enumerate}
    \item Using the PDF via numerical integration leads to better decisions (higher utility) than relying on a finite number of samples.
    \item The improved calibration from \LR provides a more accurate PDF, leading to a significant further increase in utility. \HDRR, which relies on resampling from the original uncalibrated density, actually harmed decision quality in this task.
\end{enumerate}

\begin{table}[t]
  \centering
  \caption{Comparison of estimation strategies and methods. The best performing combination is highlighted in bold.}
  \label{table:decision_making_results}
  \begin{tabular}{llc}
    \toprule
    \textbf{Method} & \textbf{Estimation Strategy} & \textbf{Average Utility} \\
    \midrule
    \BASE   & Sampling                     & 62.53 $\pm$ 11.33 \\
    \HDRR  & Sampling                     & 32.38 $\pm$ 10.81 \\
    \BASE   & PDF (Numerical Integration)  & 76.23 $\pm$ 11.99 \\
    \LR     & \textbf{PDF (Numerical Integration)} & \textbf{113.31 $\pm$ 12.91} \\
    \bottomrule
  \end{tabular}
\end{table}

This demonstrates a concrete scenario where an explicit, calibrated PDF is not just a theoretical advantage but a practical necessity for optimal decision-making.

\section{Examples of predictive distributions on real-world tabular datasets}
\label{sec:examples_predictions}

\cref{fig:example_predictions/densities_1,fig:example_predictions/densities_2} display examples of predictive PDFs on real-world tabular datasets with two-dimensional outputs ($d=2$). Each row corresponds to a different dataset. For each dataset, two random test instances, $(x^{(1)}, y^{(1)})$ and $(x^{(2)}, y^{(2)})$ from $\D_{\text{test}}$, are shown.

Columns 1 and 3 show the predictive densities from the uncalibrated base predictor \BASE (i.e., $\hat{f}_{Y|X=x^{(1)}}(\cdot)$ and $\hat{f}_{Y|X=x^{(2)}}(\cdot)$). Columns 2 and 4 show the corresponding predictive densities from the \LR-recalibrated model (i.e., $\hat{f}'_{Y|X=x^{(1)}}(\cdot)$ and $\hat{f}'_{Y|X=x^{(2)}}(\cdot)$). All densities are visualized in orange. The true target observations ($y^{(1)}$ and $y^{(2)}$) are marked with a blue dot. The negative log-likelihood of the true target under the respective predictive density is provided in the bottom right corner of each plot. Black contour lines indicate level sets of the PIT of the latent norm ($F_{\rho_{\Z}(Z')}(\ell_{\Th}(y; x))$ for the \LR model, and $F_{\rho_{\Z}(Z)}(\ell_{\Th}(y; x))$ for the \BASE model) at probability levels 0.01, 0.1, 0.5, and 0.9.

In many cases, when the \BASE model is already reasonably well-calibrated, \LR applies a subtle adjustment that is difficult to perceive visually. In other instances, the recalibration effect is more pronounced, visibly altering the shape and spread of the predictive distribution to better align with latent calibration. Note that two-dimensional datasets often benefit from smaller NLL improvements according to \cref{table:lr/MQF2_scoring_rules_lr}, suggesting that stronger adjustments should be perceived in higher dimensions.

\begin{figure}[H]
	\centering
	\includegraphics[width=\linewidth]{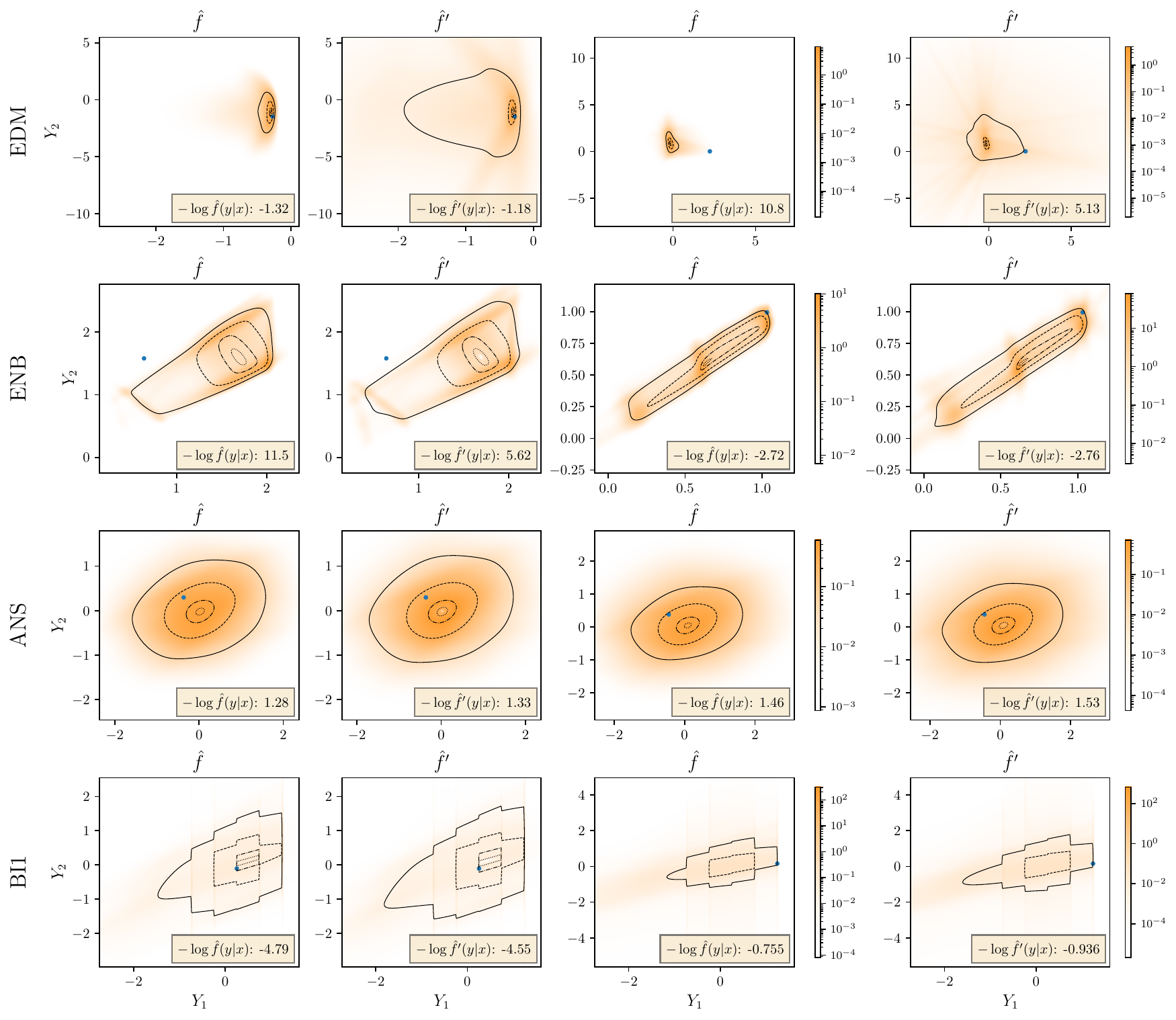}
	\vspace{-0.2cm}
	\caption{Examples of 2D predictive densities on real-world datasets for random test points $(x, y) \in \D_{\text{test}}$.}
	\label{fig:example_predictions/densities_1}
\end{figure}

\begin{figure}[H]
	\centering
	\includegraphics[width=\linewidth]{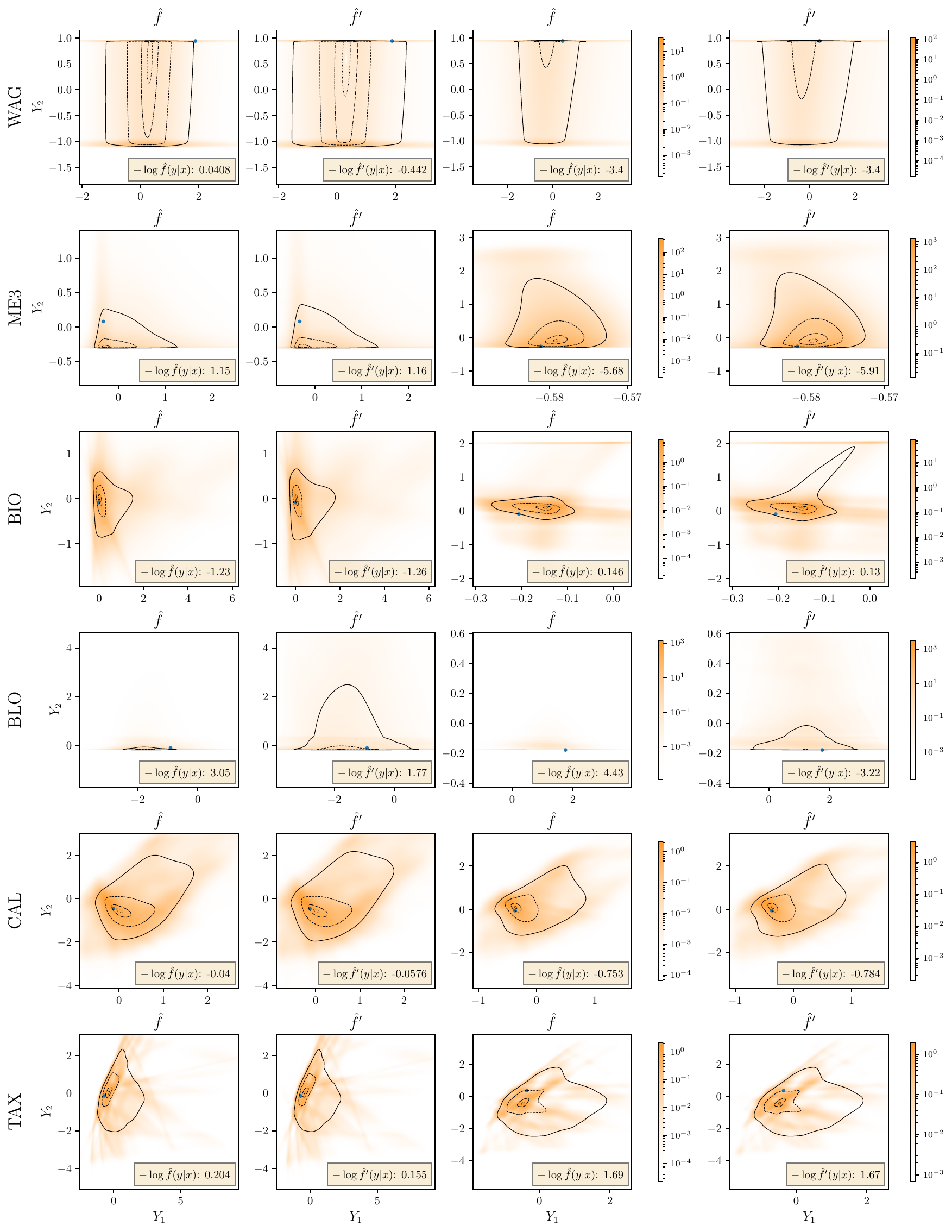}
	\vspace{-0.2cm}
	\caption{Examples of 2D predictive densities on real-world datasets for random test points $(x, y) \in \D_{\text{test}}$.}
	\label{fig:example_predictions/densities_2}
\end{figure}

\section{Reliability diagrams}
\label{sec:reliability_diagrams}

\cref{fig:reliability_diagrams/latent_distance} shows reliability diagrams for latent calibration. These diagrams plot the nominal probability levels $\alpha \in [0,1]$ against the empirical probabilities $\hat{F}_{\hat{U}}(\alpha)$, where $\hat{U} = F_{\rho_{\Z}(Z)}\left( \ell_{\Th}(Y; X) \right)$ are the PIT values computed on the test set. We also report 90\% consistency bands, represented by the shaded area around the diagonal, as described by \citet{Gneiting2023-at}. The \BASE model often exhibits miscalibration (deviations from the diagonal), whereas \LR consistently aligns closely with the diagonal, demonstrating significantly improved latent calibration.

\begin{figure}[H]
	\centering
	\includegraphics[width=\linewidth]{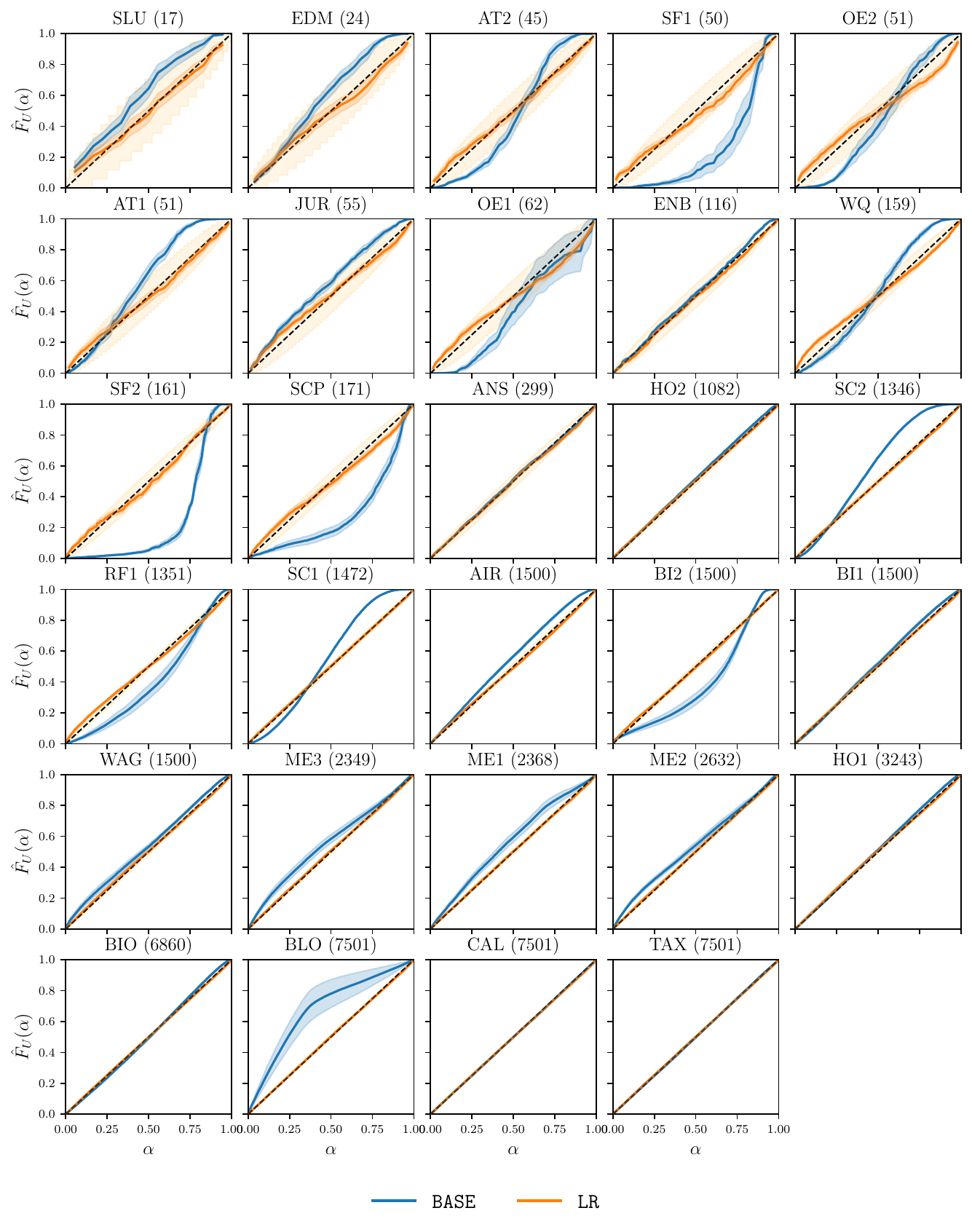}
	\vspace{-0cm}
	\caption{Latent calibration diagrams}
	\label{fig:reliability_diagrams/latent_distance}
\end{figure}

\cref{fig:reliability_diagrams/hpd} shows reliability diagrams for HDR calibration. These diagrams plot nominal probability levels $\alpha$ against empirical probabilities $\hat{F}_{\hat{U}}(\alpha)$, where $\hat{U} = \text{HPD}_{\hat{f}_{Y|X}}(Y)$ are the HDR pre-rank values from the test set. Again, the \BASE model frequently shows miscalibration. Both \LR and \HDRR improve HDR calibration, though for \LR this improvement is a beneficial side effect rather than a direct optimization target, unlike its consistent improvement of latent calibration shown in \cref{fig:reliability_diagrams/latent_distance}.

\begin{figure}[H]
	\centering
	\includegraphics[width=\linewidth]{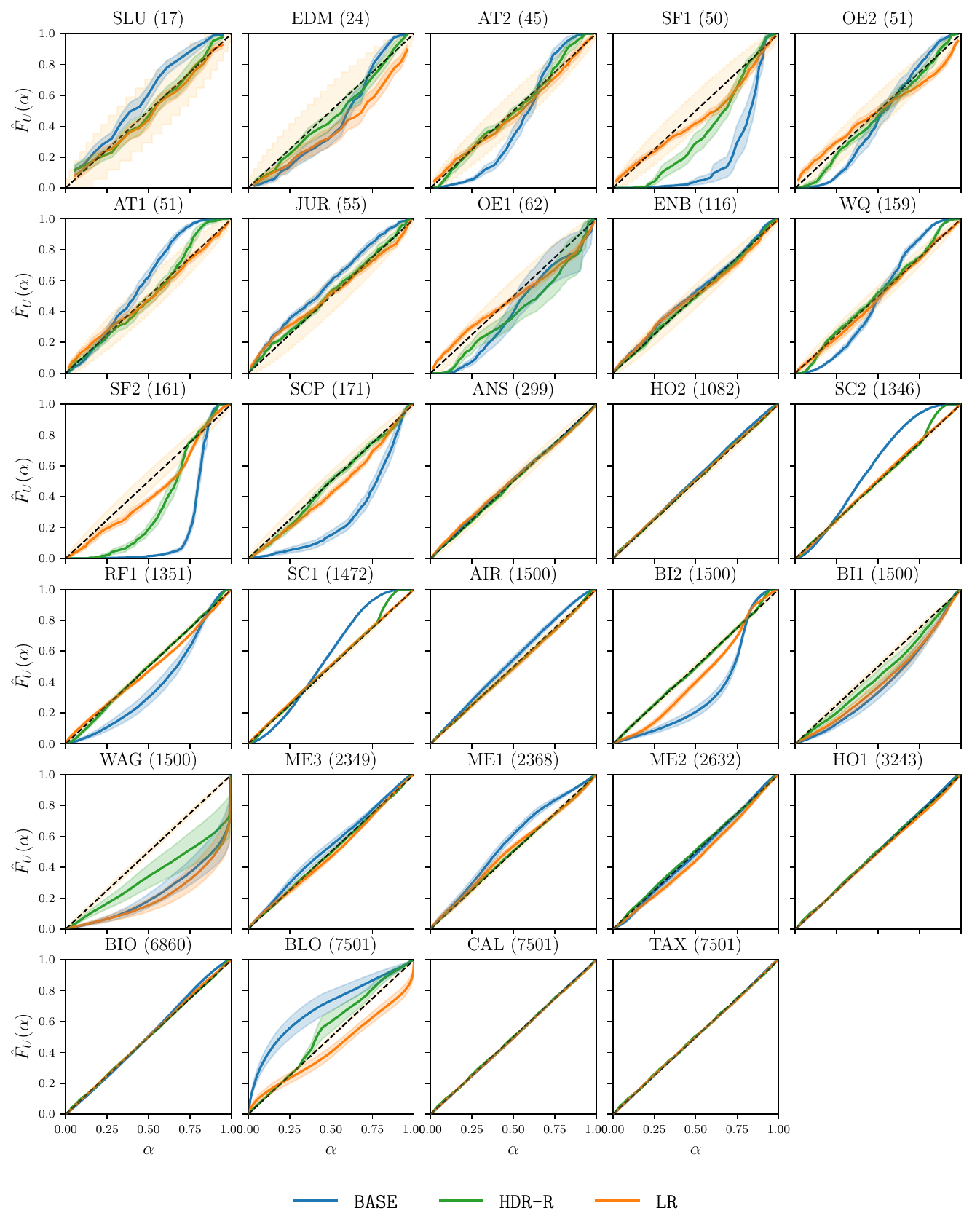}
	\vspace{-0cm}
	\caption{HDR-calibration diagrams}
	\label{fig:reliability_diagrams/hpd}
\end{figure}

\section{Base predictors}
\label{sec:base_predictors}

This section provides details on the base predictors considered in this paper.

\subsection{Convex potential flows}
\label{sec:convex_potential_flows}

Convex potential flows \citep{Huang2020-md} parameterize the bijective transformation $\hat{T}$ via a strongly convex potential whose gradient yields the inverse map. Given $x \in \X$, let the model define a scalar potential $\hat{V}: \Y \times \X \to \R$ where $\hat{V}(\cdot; x)$ is strongly convex for every fixed $x$. The associated transformation is the gradient
\[
	\hat{T}^{-1}(y; x) = \nabla_y \hat{V}(y; x) \in \R^d .
\]
To ensure convexity in $y$, $\hat{V}(\cdot; x)$ is parameterized by an \textit{input-convex neural network} \citep[ICNN,][]{Amos2017-zw}. A small quadratic term $\tfrac{\alpha}{2}\lVert y\rVert^2$ enforces strong convexity of $\hat{V}(\cdot; x)$.

Since $\hat{V}(\cdot; x)$ is strongly convex, $\hat{T}^{-1}(\cdot; x)$ is a bijection $\Y \to \Z$ with inverse $\hat{T}(\cdot; x)$. Generating new conditional samples requires inverting $\hat{T}^{-1}(\cdot; x)$. Given $z \in \Z$, one recovers $\hat{T}(z; x)$ as the unique minimizer of the convex objective
\begin{equation}
	\label{eq:cpflow_inversion}
	\hat{T}(z; x) \in \argmin_{y \in \Y} \left\{ \hat{V}(y; x) - z^\top y \right\}.
\end{equation}
Indeed, since $\hat{V}(\cdot; x)$ is differentiable and strongly convex, the minimum is attained when
\begin{align}
	\nabla_y \left(\hat{V}(y; x) - z^\top y\right) = 0
	\iff \nabla_y \hat{V}(y; x) = z
	\iff \hat{T}^{-1}(y; x) = z
	\iff y = \hat{T}(z; x).
\end{align}
In practice, \eqref{eq:cpflow_inversion} can be solved efficiently with gradient-based convex optimization (e.g., L-BFGS), and strong convexity ensures convergence to a unique solution.

The Hessian $\nabla_y^2 \hat{V}(y; x)$ of  is positive definite, with
\[
	\left| \det \nabla_y \hat{T}^{-1}(y; x) \right| = \det\left( \nabla_y^2 \hat{V}(y; x) \right).
\]
One approach to compute this determinant is by explicitly forming the Hessian $H = \nabla_y^2 \hat{V}(y; x)$. Concretely, $\hat{T}^{-1}(y; x) = \nabla_y \hat{V}(y; x)$ is first evaluated with a single forward pass and backpropagation, and then the Hessian is formed as
\[
	H = \left( \frac{\partial \hat{T}^{-1}(y; x)}{\partial y_1}, \dots, \frac{\partial \hat{T}^{-1}(y; x) }{\partial y_d} \right) \in \R^{d \times d},
\]
requiring $d$ additional backpropagations. Finally, the determinant can be computed explicitly in $O(d^3)$ (e.g., via Cholesky). Hence, the overall computational complexity is $O\left( d\, C_{\text{backprop}} + d^3 \right)$
where $C_{\text{backprop}}$ denotes the cost of a single backpropagation. This brute-force approach is practical for small $d$ but becomes prohibitive as $d$ grows; more efficient strategies for large $d$ are discussed in \cite{Huang2020-md}.

\subsection{Masked autoregressive flows (MAFs)}
\label{sec:masked_autoregressive_flows}

Masked autoregressive flows \citep{Papamakarios2017-uh} implement $\Th^{-1}$ as an autoregressive affine transformation.
For each coordinate $i \in [d]$, the model uses two autoregressive NNs (e.g. MADE, \citep{Germain2015-gs}) to parameterize
\[
	\hat{\mu}_i : \R^{i-1} \times \X \to \R,
	\qquad
	\hat{\rho}_i : \R^{i-1} \times \X \to \R,
\]
and $\hat{\sigma}_i(y_{<i}, x) = \exp(\hat{\rho}_i(y_{<i}, x))$ ensures positive outputs.
The masking mechanism ensures that $\hat{\mu}_i(y_{<i}, x)$ and $\hat{\sigma}_i(y_{<i}, x)$ depend only on the preceding coordinates $y_{<i} = (y_1, \dots, y_{i-1})$ and the conditioning variable $x$.

The inverse transformation for each coordinate is then
\[
	z_i = \frac{y_i - \hat{\mu}_i(y_{<i}; x)}{\hat{\sigma}_i(y_{<i}; x)},
	\qquad i = 1, \dots, d.
\]

Since this mapping is triangular in $y$, the Jacobian of $\Th^{-1}$ is also triangular, which makes the determinant computation efficient:
\[
	\left| \det \nabla_y \Th^{-1}(y; x) \right|
	= \prod_{i=1}^d \frac{1}{\hat{\sigma}_i(y_{<i}; x)}.
\]

\subsection{Flow matching (FM)}
\label{sec:flow_matching}

Flow matching \citep{Lipman2022-tm} is a recent generative modeling paradigm which has rapidly been gaining popularity. The key motivation behind flow matching is to combine the strengths of normalizing flows (NFs) and diffusion models while alleviating their main limitations. NFs enable exact likelihood estimation and efficient sampling but often suffer from limited expressiveness due to architectural constraints. Diffusion models, on the other hand, offer remarkable expressiveness and stability but typically require slow iterative sampling and do not provide tractable likelihoods. Flow matching addresses these issues by framing generative modeling as the learning of a continuous-time flow that transports noise to data, enabling both efficient training and fast sampling while retaining theoretical connections to likelihood-based methods.

Given $x \in \X$, we model the conditional predictive PDF $\hat{f}$ using a transformation defined by the ordinary differential equation (ODE)
$\frac{d\tilde{y}}{dt} = \hat{v}\left(t, \tilde{y}, x\right)$,
with a NN-parameterized vector field
$\hat{v} : [0,1] \times \Y \times \X \to \Y$.
Training uses the straight-line interpolant between latent $z \sim \mathcal{N}\left(0,I\right)$ and data $y \sim \hat{f}(\cdot \mid x)$,
\[
	\tilde{y}(t) = \left(1-t\right) z + t y,
\]
whose target velocity is constant with respect to $t$:
\[
	\frac{d}{dt}\tilde{y}(t) = y - z .
\]
The Conditional Flow Matching objective is
\begin{equation}
	\min_{\hat{v}}\ \mathbb{E}
	\left\| \hat{v}\left(t, \left(1-t\right) Z + t Y, X\right) - \left(Y - Z\right) \right\|^2
\end{equation}
where the expectation is over $t \sim \mathcal{U}\left(0,1\right)$, $(X, Y) \sim P_{X,Y}$, and $Z \sim \mathcal{N}\left(0,I\right)$.
After training, forward numerical integration generates samples from $\hat{f}_{Y|X=x}$ \citep{Chen2018-fi}:
\begin{equation}
	\hat{T}\left(z; x\right) = \tilde{y}\left(1\right)
	= z + \int_{0}^{1} \hat{v}\left(t, \tilde{y}\left(t\right), x\right) dt ,
	\qquad \tilde{y}\left(0\right)=z.
\end{equation}
Reverse-time integration encodes $y$ into its latent $z$:
\begin{equation}
	\hat{T}^{-1}\left(y; x\right) = \tilde{y}\left(0\right)
	= y + \int_{1}^{0} \hat{v}\left(t, \tilde{y}\left(t\right), x\right) dt ,
	\qquad \tilde{y}\left(1\right)=y .
\end{equation}
For $y \in \Y$, set $z = \hat{T}^{-1}\left(y; x\right)$. The log-likelihood follows from the instantaneous change-of-variables formula along the unique ODE path $\tilde{y}\left(t\right)$ with $\tilde{y}\left(0\right)=z$ and $\tilde{y}\left(1\right)=y$:
\begin{equation}
	\log \hat{f}_{Y|X=x}(y) = \log f_Z\left(z\right) - \int_{0}^{1} \text{Tr}\left(\nabla_{\tilde{y}} \hat{v}\left(t, \tilde{y}\left(t\right), x\right)\right) dt .
\end{equation}
The trace of the Jacobian can be computed using $d$ backpropagations, which can be prohibitive if $d$ is large.
It can also be efficiently approximated using Hutchinson's estimator:
\begin{equation}
	\text{Tr}\left(\nabla_{\tilde{y}} \hat{v}\left(t, \tilde{y}\left(t\right), x\right)\right)
	= \mathbb{E}_{\epsilon \sim \mathcal{N}\left(0, I\right)}\left[\epsilon^\top \left(\nabla_{\tilde{y}} \hat{v}\left(t, \tilde{y}\left(t\right), x\right)\right) \epsilon\right]
	\approx \frac{1}{K} \sum_{k=1}^{K} \epsilon_k^\top \left(\nabla_{\tilde{y}} \hat{v}\left(t, \tilde{y}\left(t\right), x\right)\right) \epsilon_k,
\end{equation}
with independent $\epsilon_k \sim \mathcal{N}\left(0, I\right)$, which enables practical likelihood computation.

\section{Additional results with convex potential flows}
\label{sec:results_cpf}

\subsection{Investigating the NLL performance gain}
\label{sec:NLL_performance_gain}

To investigate the source of the NLL improvement, we consider the decomposition of the NLL of the recalibrated model:
\begin{equation}
	\label{nll_decomposition}
	-\log \hat{f}'_{Y|X=x}(y) = -\log f_{Z}\left( z \right) - \log \left| \det\left( \nabla_z R(z) \right) \right|^{-1} -\log \left| \det\left( \nabla_y \hat{T}^{-1}(y; x) \right) \right|
\end{equation}
with $z' = \hat{T}^{-1}(y; x)$ and $z = R^{-1}(z')$. The third term is identical for both \BASE and \LR.
We analyze the contributions of the first two terms across the largest tabular datasets to avoid the table being too large.
All reported terms are averaged over the test set and over 10 runs.

\begin{table}[H]
	\footnotesize
	\centering
	\caption{Analysis of the NLL of \LR compared to \BASE with the terms described in \cref{nll_decomposition}.}
    \label{table:lr/MQF2_nll_decomposition}
	\vspace{0.2cm}
	\begin{tabular}{llllll}
\toprule
 & \BASE & \LR & \LR & \BASE & \LR \\
 & $-\log f_Z(z)$ & $-\log f_Z(z')$ & $- \log \left| \det\left( \nabla_z R(z) \right) \right|^{-1}$ & $-\log \hat{f}_{Y|X=x}(y)$ & $-\log \hat{f}'_{Y|X=x}(y)$ \\
\midrule
SLU & $\text{5.00}$ & $\text{4.16}$ & $\text{1.04}$ & $\text{3.09}$ & $\text{3.63}$ \\
EDM & $\text{3.91}$ & $\text{2.72}$ & $\text{1.26}$ & $\text{0.788}$ & $\text{0.855}$ \\
AT2 & $\text{11.7}$ & $\text{8.47}$ & $\text{1.67}$ & $\text{5.42}$ & $\text{4.31}$ \\
SF1 & $\text{8.11}$ & $\text{4.29}$ & $\text{0.900}$ & $\text{1.77}$ & $\text{-1.58}$ \\
OE2 & $\text{38.3}$ & $\text{22.8}$ & $\text{3.70}$ & $\text{23.2}$ & $\text{12.2}$ \\
AT1 & $\text{10.6}$ & $\text{8.44}$ & $\text{1.47}$ & $\text{2.58}$ & $\text{2.22}$ \\
JUR & $\text{5.39}$ & $\text{4.29}$ & $\text{0.817}$ & $\text{2.87}$ & $\text{2.80}$ \\
OE1 & $\text{52.7}$ & $\text{22.8}$ & $\text{2.22}$ & $\text{33.6}$ & $\text{6.13}$ \\
ENB & $\text{2.90}$ & $\text{2.78}$ & $\text{0.119}$ & $\text{-0.626}$ & $\text{-0.674}$ \\
WQ & $\text{41.1}$ & $\text{19.8}$ & $\text{3.19}$ & $\text{25.6}$ & $\text{4.94}$ \\
SF2 & $\text{8.02}$ & $\text{4.30}$ & $\text{-0.703}$ & $\text{-2.43}$ & $\text{-7.39}$ \\
SCP & $\text{14.3}$ & $\text{4.21}$ & $\text{-0.548}$ & $\text{4.29}$ & $\text{-6.75}$ \\
ANS & $\text{2.87}$ & $\text{2.78}$ & $\text{0.102}$ & $\text{1.81}$ & $\text{1.85}$ \\
HO2 & $\text{6.02}$ & $\text{5.68}$ & $\text{0.356}$ & $\text{2.66}$ & $\text{2.67}$ \\
SC2 & $\text{25.2}$ & $\text{22.7}$ & $\text{2.26}$ & $\text{1.63}$ & $\text{0.970}$ \\
RF1 & $\text{3.33e+02}$ & $\text{11.3}$ & $\text{0.345}$ & $\text{3.13e+02}$ & $\text{-10.0}$ \\
SC1 & $\text{23.9}$ & $\text{22.7}$ & $\text{0.333}$ & $\text{-1.94}$ & $\text{-3.14}$ \\
AIR & $\text{9.25}$ & $\text{8.50}$ & $\text{0.627}$ & $\text{3.84}$ & $\text{3.65}$ \\
BI2 & $\text{6.44}$ & $\text{5.71}$ & $\text{0.0988}$ & $\text{-9.09}$ & $\text{-10.5}$ \\
BI1 & $\text{3.11}$ & $\text{2.84}$ & $\text{0.190}$ & $\text{0.153}$ & $\text{0.145}$ \\
WAG & $\text{3.19}$ & $\text{2.81}$ & $\text{0.341}$ & $\text{-1.68}$ & $\text{-1.87}$ \\
ME3 & $\text{3.28}$ & $\text{2.85}$ & $\text{0.288}$ & $\text{-1.63}$ & $\text{-1.97}$ \\
ME1 & $\text{3.26}$ & $\text{2.83}$ & $\text{0.222}$ & $\text{-1.32}$ & $\text{-1.65}$ \\
ME2 & $\text{3.24}$ & $\text{2.83}$ & $\text{0.241}$ & $\text{-1.76}$ & $\text{-2.08}$ \\
HO1 & $\text{3.15}$ & $\text{2.83}$ & $\text{0.282}$ & $\text{-0.357}$ & $\text{-0.411}$ \\
BIO & $\text{3.14}$ & $\text{2.82}$ & $\text{0.209}$ & $\text{-0.928}$ & $\text{-1.16}$ \\
BLO & $\text{5.16}$ & $\text{2.84}$ & $\text{0.275}$ & $\text{-0.439}$ & $\text{-2.74}$ \\
CAL & $\text{2.85}$ & $\text{2.81}$ & $\text{0.0361}$ & $\text{0.600}$ & $\text{0.593}$ \\
TAX & $\text{2.93}$ & $\text{2.84}$ & $\text{0.0979}$ & $\text{1.60}$ & $\text{1.57}$ \\
\bottomrule
\end{tabular}

\end{table}

The table reveals a clear pattern. The NLL improvement from \LR is primarily driven by the first term. By radially transforming the latent codes $z'$ to new points $z$ that are more consistent with the base density $f_Z$, the latent density term $-\log f_Z(z)$ is significantly reduced. The recalibration Jacobian (second term) typically adds a small penalty (increases NLL), but this is almost always outweighed by the large gains from the first term. This confirms that \LR works by finding more \enquote{plausible} latent codes for the observed data under the base latent distribution.

\subsection{Computational efficiency}
\label{sec:computational_efficiency}

The difference in computation time can be measured in two aspects:
\begin{itemize}
	\item For calibration, the computational complexity of \HDRR is $O(M F n)$ and \LR is $O(R n)$ where $M = 100$ corresponds to the number of samples of \HDRR per instance, $F$ the time for the forward mapping $\hat{T}$ and $R$ the time for the reverse mapping $\hat{T}^{-1}$.
	\item For inference, it is a bit more subtle. Given a test insance $x$, \HDRR requires to sample at least $M$ times ($O(M F)$) to obtain a recalibrated sample, which can be a weakness, e.g., if only one conditional sample is needed. \LR only incurs a low fixed cost $C$ for evaluating the recalibration map ($O(C + F)$). Thus, the inference time is not directly comparable.
\end{itemize}

We report the calibration time of \HDRR and \LR in seconds on the largest datasets using the convex potential flow model and averaged over 10 runs.

\begin{table}[H]
\centering
\caption{Calibration times (part 1)}
\begin{tabular}{lrrrrrrrr}
\toprule
Method & HO2 & SC2 & RF1 & SC1 & AIR & BI2 & BI1 & WAG \\
\midrule
\HDRR & 1.56 & 2.78 & 4.90 & 2.57 & 4.86 & 18.80 & 8.80 & 15.00 \\
\LR    & 0.232 & 0.185 & 0.156 & 0.187 & 0.149 & 0.142 & 0.133 & 0.133 \\
\bottomrule
\end{tabular}
\end{table}

\begin{table}[H]
\centering
\caption{Calibration times (part 2)}
\begin{tabular}{lrrrrrrrr}
\toprule
Method & ME3 & ME1 & ME2 & HO1 & BIO & BLO & CAL & TAX \\
\midrule
\HDRR & 9.98 & 12.70 & 19.30 & 7.19 & 37.70 & 168.00 & 8.53 & 10.50 \\
\LR    & 0.152 & 0.153 & 0.163 & 0.192 & 0.290 & 0.328 & 0.482 & 0.324 \\
\bottomrule
\end{tabular}
\end{table}

On CIFAR-10 with TarFlow, the time difference is larger and can be prohibitive for \HDRR:

\begin{table}[H]
\centering
\begin{tabular}{ll}
	\toprule
	Method & CIFAR-10 \\
	\midrule
	\HDRR & 183182 \\
	\LR & 1259 \\
	\bottomrule
\end{tabular}
\end{table}

\subsection{Discriminative ability of the energy score and NLL}
\label{sec:discriminative_ability}

For a comprehensive evaluation of \LR, we report the ES in addition to the NLL. While \LR often leads to improved NLL, the ES remains largely unchanged. We hypothesize that this stems from the score's fundamental limitations in discriminative ability. 

\paragraph{Theoretical considerations.}
As established in \cite{Pinson2013-fa} and corroborated by \cite{Alexander2022-es}, the ES is sensitive to shifts in the mean but notoriously insensitive to misspecifications in variance, correlation, and overall dependency structure. \LR is a post-hoc procedure that primarily corrects the shape and spread of the predictive distribution. Therefore, the ES is fundamentally ill-suited to capture the specific improvements \LR provides.

In contrast, the NLL is uniquely suited for this evaluation. As the only local strictly proper scoring rule, its value depends only on the probability density at the precise location of the observed outcome \citep{Du2021-zg}. This locality makes it highly discerning of the very improvements \LR makes to the distributional shape, which is why we observe significant and consistent NLL reductions.

\paragraph{Empirical illustration.}
To provide a clear, empirical illustration, we designed a controlled synthetic experiment based on the dataset in \cref{fig:visualization}. The goal here is to isolate this specific property of the scoring rules in a setting free from the confounding variables of complex, real-world data.

We use an oracle predictor that knows the true data-generating distribution from \cref{fig:visualization} for everything except the spread around the arc, which is controlled by a standard deviation parameter $\sigma$. In \cref{table:sensitivity_10}, we then evaluate the predictor's NLL and ES (based on 100 samples) as we vary its estimate of $\sigma$. The metrics are averaged over 10 runs, and the true value is $\sigma=0.05$.

\begin{table}[htbp]
\centering
\caption{Metrics averaged over 10 runs, with standard error}
\label{table:sensitivity_10}
\vspace{0.1cm}
\begin{tabular}{lll}
\toprule
\multicolumn{1}{c}{$\sigma$} & \multicolumn{1}{c}{NLL} & \multicolumn{1}{c}{ES} \\
\midrule
0.01        & $12.01_{0.195}$            & $0.8733_{0.00298}$ \\
0.03        & $1.274_{0.0222}$           & $0.8724_{0.00267}$ \\
0.04        & $0.9232_{0.0129}$          & $\mathbf{0.8723_{0.00295}}$ \\
0.05 (True) & $\mathbf{0.8557_{0.00870}}$ & $0.8740_{0.00171}$ \\
0.06        & $0.8781_{0.00629}$         & $0.8741_{0.00182}$ \\
0.07        & $0.9365_{0.00483}$         & $0.8753_{0.00214}$ \\
0.10        & $1.161_{0.00272}$          & $0.8741_{0.00163}$ \\
0.20        & $1.774_{0.00106}$          & $0.8782_{0.00176}$ \\
\bottomrule
\end{tabular}
\end{table}

This experiment clearly illustrates the issue:
\begin{itemize}
	\item The NLL shows a sharp, clear minimum at the true value of $\sigma=0.05$, correctly identifying the best model.
	\item The ES remains almost completely flat for a wide range of $\sigma$ values (from 0.01 to 0.10). It fails to reliably distinguish a model with the correct variance from one that is substantially over- or under-confident.
\end{itemize}

This insensitivity is so profound that detecting a statistically significant signal with the ES requires an impractically large number of samples. The table below shows that only with 5000 runs does the ES minimum align with the true $\sigma$, and even then the differences are minuscule:

\begin{table}[htbp]
\centering
\caption{Metrics averaged over 5000 runs, with standard error}
\label{table:sensitivity_5000}
\begin{tabular}{lll}
\toprule
\multicolumn{1}{c}{$\sigma$} & \multicolumn{1}{c}{NLL} & \multicolumn{1}{c}{ES} \\
\midrule
0.01        & $11.91_{0.00773}$              & $0.8754_{0.000139}$ \\
0.03        & $1.263_{0.000868}$             & $0.8751_{0.000139}$ \\
0.04        & $0.9177_{0.000495}$            & $0.8751_{0.000139}$ \\
0.05 (True) & $\mathbf{0.8487_{0.000323}}$   & $\mathbf{0.8750_{0.000139}}$ \\
0.06        & $0.8732_{0.000231}$            & $0.8751_{0.000138}$ \\
0.07        & $0.9329_{0.000176}$            & $0.8751_{0.000138}$ \\
0.10        & $1.159_{0.000104}$             & $0.8756_{0.000137}$ \\
0.20        & $1.774_{6.91\text{e-}05}$      & $0.8797_{0.000133}$ \\
\bottomrule
\end{tabular}
\end{table}

This controlled experiment, therefore, proposes an explanation for the insentivity of the ES to \LR.

\subsection{Additional tables}

For reference, \cref{table:lr/MQF2_scoring_rules_lr,table:lr/MQF2_calibration_lr} provide the precise mean values and standard errors for NLL, Energy Score, L-ECE, and HDR-ECE across all tabular datasets when using convex potential flows as the base predictor. For each metric and dataset, all values that are statistically indistinguishable to the best value according to a Z-test at significance level 0.1 are highlighted in bold.

\begin{table}[H]
	\scriptsize
	\centering
	\caption{Full comparative table, using a convex potential flow model.}
    \label{table:lr/MQF2_scoring_rules_lr}
	\vspace{0.2cm}
	\begin{tabular}{llllll}
\toprule
& \multicolumn{2}{c}{NLL} & \multicolumn{3}{c}{Energy score} \\
& \BASE & \LR & \BASE & \HDRR & \LR \\
\midrule
SLU & \bfseries $\text{2.61}_{\text{0.19}}$ & \bfseries $\text{2.29}_{\text{0.14}}$ & \bfseries $\text{0.791}_{\text{0.038}}$ & \bfseries $\text{0.795}_{\text{0.033}}$ & \bfseries $\text{0.785}_{\text{0.033}}$ \\
EDM & \bfseries $\text{-0.0350}_{\text{0.46}}$ & \bfseries $\text{-0.123}_{\text{1.3}}$ & \bfseries $\text{0.647}_{\text{0.049}}$ & \bfseries $\text{0.648}_{\text{0.050}}$ & \bfseries $\text{0.635}_{\text{0.044}}$ \\
AT2 & $\text{4.05}_{\text{0.88}}$ & \bfseries $\text{1.86}_{\text{0.42}}$ & \bfseries $\text{0.861}_{\text{0.044}}$ & \bfseries $\text{0.870}_{\text{0.044}}$ & \bfseries $\text{0.862}_{\text{0.042}}$ \\
SF1 & \bfseries $\text{4.41}_{\text{3.2}}$ & \bfseries $\text{-0.381}_{\text{3.6}}$ & \bfseries $\text{0.673}_{\text{0.086}}$ & \bfseries $\text{0.639}_{\text{0.093}}$ & \bfseries $\text{0.670}_{\text{0.085}}$ \\
OE2 & \bfseries $\text{37.6}_{\text{3.0e+01}}$ & \bfseries $\text{6.46}_{\text{1.3}}$ & \bfseries $\text{1.25}_{\text{0.083}}$ & \bfseries $\text{1.26}_{\text{0.083}}$ & \bfseries $\text{1.26}_{\text{0.085}}$ \\
AT1 & $\text{1.63}_{\text{0.46}}$ & \bfseries $\text{0.0783}_{\text{0.29}}$ & \bfseries $\text{0.582}_{\text{0.032}}$ & \bfseries $\text{0.587}_{\text{0.031}}$ & \bfseries $\text{0.591}_{\text{0.030}}$ \\
JUR & \bfseries $\text{2.16}_{\text{0.24}}$ & \bfseries $\text{1.87}_{\text{0.15}}$ & \bfseries $\text{0.617}_{\text{0.034}}$ & \bfseries $\text{0.618}_{\text{0.033}}$ & \bfseries $\text{0.618}_{\text{0.034}}$ \\
OE1 & \bfseries $\text{80.0}_{\text{6.9e+01}}$ & \bfseries $\text{2.93}_{\text{0.71}}$ & \bfseries $\text{1.23}_{\text{0.15}}$ & \bfseries $\text{1.23}_{\text{0.15}}$ & \bfseries $\text{1.17}_{\text{0.15}}$ \\
ENB & \bfseries $\text{-1.08}_{\text{0.11}}$ & \bfseries $\text{-1.12}_{\text{0.10}}$ & \bfseries $\text{0.249}_{\text{0.010}}$ & \bfseries $\text{0.250}_{\text{0.010}}$ & \bfseries $\text{0.249}_{\text{0.010}}$ \\
WQ & \bfseries $\text{60.8}_{\text{3.7e+01}}$ & \bfseries $\text{7.95}_{\text{3.6}}$ & \bfseries $\text{2.47}_{\text{0.025}}$ & \bfseries $\text{2.49}_{\text{0.024}}$ & \bfseries $\text{2.47}_{\text{0.024}}$ \\
SF2 & $\text{-3.37}_{\text{3.0}}$ & \bfseries $\text{-11.1}_{\text{0.63}}$ & \bfseries $\text{0.587}_{\text{0.044}}$ & \bfseries $\text{0.593}_{\text{0.046}}$ & \bfseries $\text{0.598}_{\text{0.045}}$ \\
SCP & \bfseries $\text{20.1}_{\text{2.6e+01}}$ & \bfseries $\text{-8.55}_{\text{0.49}}$ & \bfseries $\text{0.389}_{\text{0.094}}$ & \bfseries $\text{0.383}_{\text{0.095}}$ & \bfseries $\text{0.382}_{\text{0.095}}$ \\
ANS & \bfseries $\text{1.76}_{\text{0.022}}$ & \bfseries $\text{1.79}_{\text{0.020}}$ & \bfseries $\text{0.529}_{\text{0.0052}}$ & \bfseries $\text{0.531}_{\text{0.0047}}$ & \bfseries $\text{0.529}_{\text{0.0053}}$ \\
HO2 & \bfseries $\text{2.39}_{\text{0.034}}$ & \bfseries $\text{2.38}_{\text{0.035}}$ & \bfseries $\text{0.862}_{\text{0.0076}}$ & \bfseries $\text{0.866}_{\text{0.0075}}$ & \bfseries $\text{0.862}_{\text{0.0076}}$ \\
SC2 & $\text{0.795}_{\text{0.17}}$ & \bfseries $\text{0.189}_{\text{0.15}}$ & \bfseries $\text{1.25}_{\text{0.011}}$ & \bfseries $\text{1.28}_{\text{0.012}}$ & \bfseries $\text{1.26}_{\text{0.011}}$ \\
RF1 & \bfseries $\text{9.54e+02}_{\text{6.8e+02}}$ & \bfseries $\text{-4.50}_{\text{1.5}}$ & \bfseries $\text{0.534}_{\text{0.073}}$ & \bfseries $\text{0.528}_{\text{0.071}}$ & \bfseries $\text{0.529}_{\text{0.072}}$ \\
SC1 & $\text{-1.86}_{\text{0.080}}$ & \bfseries $\text{-2.63}_{\text{0.075}}$ & \bfseries $\text{0.824}_{\text{0.0047}}$ & \bfseries $\text{0.833}_{\text{0.0045}}$ & \bfseries $\text{0.825}_{\text{0.0045}}$ \\
AIR & \bfseries $\text{3.03}_{\text{0.30}}$ & \bfseries $\text{2.91}_{\text{0.30}}$ & \bfseries $\text{1.17}_{\text{0.0086}}$ & \bfseries $\text{1.19}_{\text{0.0090}}$ & \bfseries $\text{1.18}_{\text{0.0084}}$ \\
BI2 & \bfseries $\text{-11.5}_{\text{0.71}}$ & \bfseries $\text{-13.0}_{\text{0.61}}$ & \bfseries $\text{0.833}_{\text{0.013}}$ & \bfseries $\text{0.848}_{\text{0.015}}$ & \bfseries $\text{0.834}_{\text{0.014}}$ \\
BI1 & \bfseries $\text{-2.27}_{\text{0.26}}$ & \bfseries $\text{-2.40}_{\text{0.26}}$ & \bfseries $\text{0.708}_{\text{0.0056}}$ & \bfseries $\text{0.711}_{\text{0.0053}}$ & \bfseries $\text{0.708}_{\text{0.0057}}$ \\
WAG & \bfseries $\text{-3.25}_{\text{0.31}}$ & \bfseries $\text{-3.34}_{\text{0.32}}$ & \bfseries $\text{0.802}_{\text{0.048}}$ & \bfseries $\text{0.803}_{\text{0.043}}$ & \bfseries $\text{0.805}_{\text{0.046}}$ \\
ME3 & \bfseries $\text{-2.60}_{\text{0.13}}$ & \bfseries $\text{-2.69}_{\text{0.12}}$ & \bfseries $\text{0.358}_{\text{0.0082}}$ & \bfseries $\text{0.362}_{\text{0.0082}}$ & \bfseries $\text{0.360}_{\text{0.0083}}$ \\
ME1 & \bfseries $\text{-2.00}_{\text{0.13}}$ & \bfseries $\text{-2.17}_{\text{0.13}}$ & \bfseries $\text{0.357}_{\text{0.0075}}$ & \bfseries $\text{0.370}_{\text{0.0095}}$ & \bfseries $\text{0.365}_{\text{0.0087}}$ \\
ME2 & \bfseries $\text{-3.00}_{\text{0.12}}$ & \bfseries $\text{-3.22}_{\text{0.13}}$ & \bfseries $\text{0.361}_{\text{0.0041}}$ & \bfseries $\text{0.362}_{\text{0.0039}}$ & \bfseries $\text{0.362}_{\text{0.0040}}$ \\
HO1 & \bfseries $\text{-0.299}_{\text{0.038}}$ & \bfseries $\text{-0.364}_{\text{0.029}}$ & \bfseries $\text{0.346}_{\text{0.0074}}$ & \bfseries $\text{0.351}_{\text{0.0079}}$ & \bfseries $\text{0.340}_{\text{0.012}}$ \\
BIO & $\text{-1.12}_{\text{0.072}}$ & \bfseries $\text{-1.25}_{\text{0.019}}$ & \bfseries $\text{0.207}_{\text{0.0038}}$ & \bfseries $\text{0.210}_{\text{0.0039}}$ & \bfseries $\text{0.204}_{\text{0.0051}}$ \\
BLO & $\text{3.11}_{\text{2.8}}$ & \bfseries $\text{-2.11}_{\text{0.25}}$ & \bfseries $\text{0.305}_{\text{0.029}}$ & \bfseries $\text{0.365}_{\text{0.037}}$ & $\text{0.473}_{\text{0.081}}$ \\
CAL & \bfseries $\text{0.575}_{\text{0.0097}}$ & \bfseries $\text{0.575}_{\text{0.0088}}$ & \bfseries $\text{0.419}_{\text{0.0014}}$ & \bfseries $\text{0.421}_{\text{0.0015}}$ & \bfseries $\text{0.419}_{\text{0.0014}}$ \\
TAX & \bfseries $\text{1.53}_{\text{0.0069}}$ & \bfseries $\text{1.53}_{\text{0.0068}}$ & \bfseries $\text{0.692}_{\text{0.0019}}$ & $\text{0.696}_{\text{0.0021}}$ & \bfseries $\text{0.690}_{\text{0.0027}}$ \\
\bottomrule
\end{tabular}

\end{table}

\begin{table}[H]
	\scriptsize
	\centering
	\caption{Full comparative table, using a convex potential flow model.}
    \label{table:lr/MQF2_calibration_lr}
	\vspace{0.2cm}
	\begin{tabular}{llllll}
\toprule
& \multicolumn{2}{c}{L-ECE} & \multicolumn{3}{c}{HDR-ECE} \\
& \BASE & \LR & \BASE & \HDRR & \LR \\
\midrule
SLU & \bfseries $\text{0.146}_{\text{0.026}}$ & \bfseries $\text{0.106}_{\text{0.016}}$ & \bfseries $\text{0.129}_{\text{0.022}}$ & \bfseries $\text{0.116}_{\text{0.014}}$ & \bfseries $\text{0.102}_{\text{0.013}}$ \\
EDM & \bfseries $\text{0.122}_{\text{0.016}}$ & \bfseries $\text{0.0905}_{\text{0.016}}$ & \bfseries $\text{0.128}_{\text{0.014}}$ & \bfseries $\text{0.101}_{\text{0.019}}$ & $\text{0.169}_{\text{0.014}}$ \\
AT2 & $\text{0.129}_{\text{0.0076}}$ & \bfseries $\text{0.0637}_{\text{0.010}}$ & $\text{0.149}_{\text{0.0094}}$ & \bfseries $\text{0.0817}_{\text{0.0095}}$ & \bfseries $\text{0.0688}_{\text{0.012}}$ \\
SF1 & $\text{0.279}_{\text{0.026}}$ & \bfseries $\text{0.0701}_{\text{0.0082}}$ & $\text{0.321}_{\text{0.022}}$ & $\text{0.171}_{\text{0.019}}$ & \bfseries $\text{0.0786}_{\text{0.0096}}$ \\
OE2 & $\text{0.136}_{\text{0.011}}$ & \bfseries $\text{0.0785}_{\text{0.0076}}$ & $\text{0.129}_{\text{0.0098}}$ & \bfseries $\text{0.0768}_{\text{0.012}}$ & \bfseries $\text{0.0766}_{\text{0.0079}}$ \\
AT1 & $\text{0.124}_{\text{0.013}}$ & \bfseries $\text{0.0668}_{\text{0.0098}}$ & $\text{0.125}_{\text{0.012}}$ & $\text{0.0906}_{\text{0.0095}}$ & \bfseries $\text{0.0643}_{\text{0.012}}$ \\
JUR & $\text{0.0883}_{\text{0.011}}$ & \bfseries $\text{0.0520}_{\text{0.0052}}$ & $\text{0.0866}_{\text{0.011}}$ & \bfseries $\text{0.0515}_{\text{0.0063}}$ & \bfseries $\text{0.0537}_{\text{0.0049}}$ \\
OE1 & $\text{0.197}_{\text{0.044}}$ & \bfseries $\text{0.0589}_{\text{0.0068}}$ & $\text{0.195}_{\text{0.044}}$ & \bfseries $\text{0.144}_{\text{0.049}}$ & \bfseries $\text{0.0636}_{\text{0.0074}}$ \\
ENB & \bfseries $\text{0.0272}_{\text{0.0034}}$ & \bfseries $\text{0.0292}_{\text{0.0034}}$ & \bfseries $\text{0.0375}_{\text{0.0049}}$ & \bfseries $\text{0.0347}_{\text{0.0039}}$ & \bfseries $\text{0.0397}_{\text{0.0061}}$ \\
WQ & $\text{0.0883}_{\text{0.0048}}$ & \bfseries $\text{0.0479}_{\text{0.0044}}$ & $\text{0.0975}_{\text{0.0032}}$ & \bfseries $\text{0.0443}_{\text{0.0060}}$ & \bfseries $\text{0.0432}_{\text{0.0086}}$ \\
SF2 & $\text{0.272}_{\text{0.0084}}$ & \bfseries $\text{0.0434}_{\text{0.0050}}$ & $\text{0.312}_{\text{0.0061}}$ & $\text{0.164}_{\text{0.015}}$ & \bfseries $\text{0.0797}_{\text{0.010}}$ \\
SCP & $\text{0.218}_{\text{0.025}}$ & \bfseries $\text{0.0502}_{\text{0.0060}}$ & $\text{0.223}_{\text{0.025}}$ & \bfseries $\text{0.0543}_{\text{0.010}}$ & \bfseries $\text{0.0614}_{\text{0.0093}}$ \\
ANS & \bfseries $\text{0.0190}_{\text{0.0029}}$ & \bfseries $\text{0.0254}_{\text{0.0036}}$ & \bfseries $\text{0.0199}_{\text{0.0030}}$ & \bfseries $\text{0.0234}_{\text{0.0026}}$ & \bfseries $\text{0.0249}_{\text{0.0037}}$ \\
HO2 & \bfseries $\text{0.0162}_{\text{0.0024}}$ & \bfseries $\text{0.0158}_{\text{0.0018}}$ & $\text{0.0179}_{\text{0.0023}}$ & \bfseries $\text{0.0122}_{\text{0.00080}}$ & \bfseries $\text{0.0121}_{\text{0.0017}}$ \\
SC2 & $\text{0.101}_{\text{0.0034}}$ & \bfseries $\text{0.0122}_{\text{0.00087}}$ & $\text{0.105}_{\text{0.0030}}$ & $\text{0.0223}_{\text{0.00081}}$ & \bfseries $\text{0.0117}_{\text{0.0011}}$ \\
RF1 & $\text{0.112}_{\text{0.021}}$ & \bfseries $\text{0.0260}_{\text{0.0049}}$ & $\text{0.129}_{\text{0.024}}$ & \bfseries $\text{0.0193}_{\text{0.0034}}$ & $\text{0.0288}_{\text{0.0045}}$ \\
SC1 & $\text{0.0845}_{\text{0.0015}}$ & \bfseries $\text{0.0106}_{\text{0.0013}}$ & $\text{0.0857}_{\text{0.0017}}$ & $\text{0.0229}_{\text{0.0011}}$ & \bfseries $\text{0.0116}_{\text{0.0016}}$ \\
AIR & $\text{0.0527}_{\text{0.0059}}$ & \bfseries $\text{0.0162}_{\text{0.0010}}$ & $\text{0.0449}_{\text{0.0083}}$ & \bfseries $\text{0.0160}_{\text{0.0012}}$ & $\text{0.0286}_{\text{0.0052}}$ \\
BI2 & $\text{0.122}_{\text{0.016}}$ & \bfseries $\text{0.0170}_{\text{0.0022}}$ & $\text{0.167}_{\text{0.019}}$ & \bfseries $\text{0.0219}_{\text{0.0023}}$ & $\text{0.0731}_{\text{0.011}}$ \\
BI1 & $\text{0.0279}_{\text{0.0023}}$ & \bfseries $\text{0.0102}_{\text{0.00084}}$ & \bfseries $\text{0.111}_{\text{0.026}}$ & \bfseries $\text{0.0580}_{\text{0.024}}$ & \bfseries $\text{0.0923}_{\text{0.023}}$ \\
WAG & $\text{0.0478}_{\text{0.0082}}$ & \bfseries $\text{0.0137}_{\text{0.0018}}$ & \bfseries $\text{0.267}_{\text{0.050}}$ & \bfseries $\text{0.161}_{\text{0.071}}$ & \bfseries $\text{0.290}_{\text{0.046}}$ \\
ME3 & $\text{0.0732}_{\text{0.012}}$ & \bfseries $\text{0.00957}_{\text{0.00084}}$ & $\text{0.0520}_{\text{0.010}}$ & \bfseries $\text{0.00899}_{\text{0.00064}}$ & $\text{0.0323}_{\text{0.0042}}$ \\
ME1 & $\text{0.0853}_{\text{0.010}}$ & \bfseries $\text{0.0103}_{\text{0.00078}}$ & $\text{0.0779}_{\text{0.010}}$ & \bfseries $\text{0.00976}_{\text{0.00092}}$ & $\text{0.0432}_{\text{0.0050}}$ \\
ME2 & $\text{0.0497}_{\text{0.0099}}$ & \bfseries $\text{0.0101}_{\text{0.00066}}$ & $\text{0.0415}_{\text{0.0061}}$ & \bfseries $\text{0.0136}_{\text{0.0023}}$ & $\text{0.0417}_{\text{0.0061}}$ \\
HO1 & $\text{0.0166}_{\text{0.0023}}$ & \bfseries $\text{0.00950}_{\text{0.0013}}$ & \bfseries $\text{0.0103}_{\text{0.0014}}$ & \bfseries $\text{0.00782}_{\text{0.00071}}$ & \bfseries $\text{0.0109}_{\text{0.0018}}$ \\
BIO & $\text{0.0178}_{\text{0.0019}}$ & \bfseries $\text{0.00561}_{\text{0.00076}}$ & $\text{0.0220}_{\text{0.0027}}$ & \bfseries $\text{0.00667}_{\text{0.00051}}$ & \bfseries $\text{0.00727}_{\text{0.0015}}$ \\
BLO & $\text{0.231}_{\text{0.036}}$ & \bfseries $\text{0.00735}_{\text{0.0010}}$ & $\text{0.207}_{\text{0.040}}$ & \bfseries $\text{0.0485}_{\text{0.024}}$ & $\text{0.114}_{\text{0.014}}$ \\
CAL & $\text{0.00749}_{\text{0.00096}}$ & \bfseries $\text{0.00502}_{\text{0.00074}}$ & $\text{0.00760}_{\text{0.00079}}$ & \bfseries $\text{0.00555}_{\text{0.00026}}$ & \bfseries $\text{0.00543}_{\text{0.00064}}$ \\
TAX & $\text{0.00949}_{\text{0.0013}}$ & \bfseries $\text{0.00522}_{\text{0.00053}}$ & \bfseries $\text{0.00584}_{\text{0.00063}}$ & \bfseries $\text{0.00684}_{\text{0.00050}}$ & \bfseries $\text{0.00561}_{\text{0.00086}}$ \\
\bottomrule
\end{tabular}

\end{table}

\section{Results with MAFs}
\label{sec:results_autoregressive_flows}

For completeness, this section reports results using a MAF \citep{Papamakarios2017-uh} as the base predictor. The architecture consists of stacked flow layers, where each layer’s conditioner is a masked autoencoder \citep{Germain2015-gs} parameterizing rational quadratic spline transformations \citep{Durkan2019-sc}. We tune hyperparameters using grid search. The number of stacked flows is chosen from $[3, 5, 8]$, the number of hidden units per flow from $[32, 64]$, and the number of hidden layers per flow from $[2, 3]$. The learning rate is selected from $[5\times10^{-3}, 10^{-3}]$. Each flow learns a rational quadratic spline transformation.

The findings, illustrated in \cref{fig:barplot/lr/ARFlow/latent_based} and \cref{fig:barplot/lr/ARFlow/sample_based} (and detailed in \cref{table:lr/ARFlow_scoring_rules_lr,table:lr/ARFlow_calibration_lr}), are consistent with the main tabular results reported in \cref{sec:experiments}. Specifically, \LR provides notable improvements in L-ECE, NLL, and HDR-ECE, while achieving an energy score comparable to that of the \BASE model.

\begin{figure}[H]
	\centering
	\includegraphics[width=0.9\linewidth]{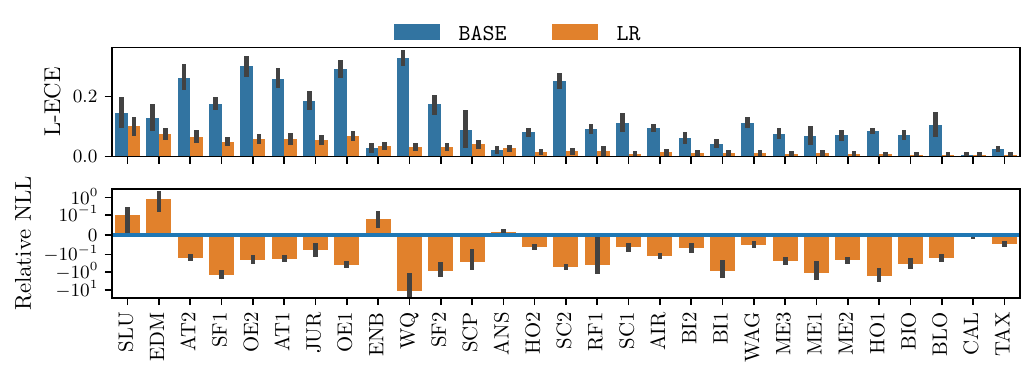}
	\vspace{-0.2cm}
	\caption{Latent calibration and NLL on datasets sorted by size, using a MAF model.}
	\label{fig:barplot/lr/ARFlow/latent_based}
\end{figure}

\begin{figure}[H]
	\centering
	\includegraphics[width=0.9\linewidth]{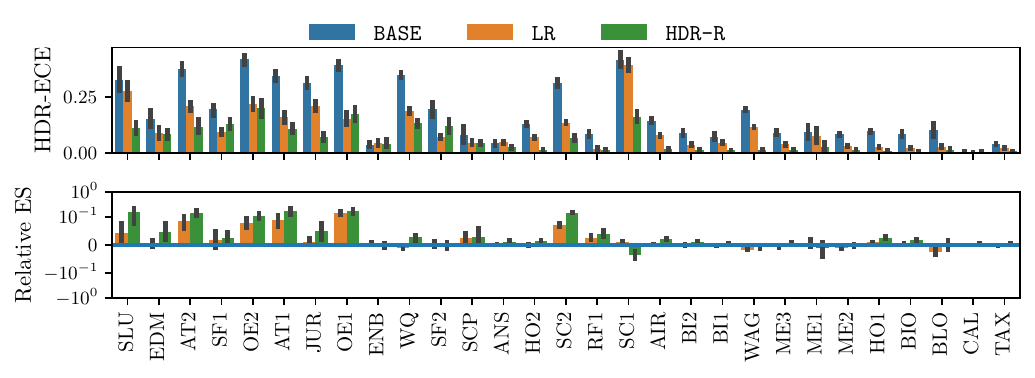}
	\vspace{-0.2cm}
	\caption{Latent calibration and NLL on datasets sorted by size, using a MAF model.}
	\label{fig:barplot/lr/ARFlow/sample_based}
\end{figure}

\begin{table}[H]
	\scriptsize
	\centering
	\caption{Full comparative table, using a MAF model.}
	\label{table:lr/ARFlow_scoring_rules_lr}
	\vspace{0.2cm}
	\begin{tabular}{llllll}
\toprule
& \multicolumn{2}{c}{NLL} & \multicolumn{3}{c}{Energy score} \\
& \BASE & \LR & \BASE & \HDRR & \LR \\
\midrule
SLU & \bfseries $\text{4.50}_{\text{0.37}}$ & \bfseries $\text{5.08}_{\text{0.52}}$ & \bfseries $\text{0.831}_{\text{0.051}}$ & $\text{0.945}_{\text{0.034}}$ & \bfseries $\text{0.858}_{\text{0.025}}$ \\
EDM & \bfseries $\text{0.202}_{\text{0.30}}$ & \bfseries $\text{0.590}_{\text{0.35}}$ & \bfseries $\text{0.577}_{\text{0.035}}$ & \bfseries $\text{0.602}_{\text{0.033}}$ & \bfseries $\text{0.578}_{\text{0.022}}$ \\
AT2 & $\text{7.45}_{\text{0.52}}$ & \bfseries $\text{6.22}_{\text{0.24}}$ & \bfseries $\text{0.970}_{\text{0.054}}$ & $\text{1.11}_{\text{0.044}}$ & \bfseries $\text{1.05}_{\text{0.029}}$ \\
SF1 & $\text{-1.47}_{\text{0.39}}$ & \bfseries $\text{-3.42}_{\text{0.26}}$ & \bfseries $\text{0.698}_{\text{0.087}}$ & \bfseries $\text{0.710}_{\text{0.085}}$ & \bfseries $\text{0.700}_{\text{0.053}}$ \\
OE2 & $\text{22.4}_{\text{2.6}}$ & \bfseries $\text{16.8}_{\text{0.90}}$ & \bfseries $\text{1.67}_{\text{0.13}}$ & \bfseries $\text{1.85}_{\text{0.13}}$ & \bfseries $\text{1.79}_{\text{0.087}}$ \\
AT1 & \bfseries $\text{4.63}_{\text{0.61}}$ & \bfseries $\text{3.80}_{\text{0.32}}$ & \bfseries $\text{0.698}_{\text{0.049}}$ & $\text{0.815}_{\text{0.043}}$ & \bfseries $\text{0.754}_{\text{0.029}}$ \\
JUR & \bfseries $\text{3.99}_{\text{0.29}}$ & \bfseries $\text{3.65}_{\text{0.15}}$ & \bfseries $\text{0.680}_{\text{0.037}}$ & \bfseries $\text{0.710}_{\text{0.031}}$ & \bfseries $\text{0.686}_{\text{0.022}}$ \\
OE1 & $\text{16.9}_{\text{2.7}}$ & \bfseries $\text{9.68}_{\text{0.75}}$ & \bfseries $\text{1.31}_{\text{0.13}}$ & \bfseries $\text{1.51}_{\text{0.12}}$ & \bfseries $\text{1.48}_{\text{0.080}}$ \\
ENB & \bfseries $\text{-0.939}_{\text{0.10}}$ & \bfseries $\text{-0.877}_{\text{0.071}}$ & \bfseries $\text{0.273}_{\text{0.0080}}$ & \bfseries $\text{0.273}_{\text{0.0077}}$ & \bfseries $\text{0.275}_{\text{0.0054}}$ \\
WQ & $\text{0.944}_{\text{0.91}}$ & \bfseries $\text{-1.23}_{\text{0.57}}$ & \bfseries $\text{2.50}_{\text{0.035}}$ & $\text{2.56}_{\text{0.030}}$ & \bfseries $\text{2.47}_{\text{0.019}}$ \\
SF2 & \bfseries $\text{-6.21}_{\text{1.3}}$ & \bfseries $\text{-8.72}_{\text{0.77}}$ & \bfseries $\text{0.640}_{\text{0.051}}$ & \bfseries $\text{0.637}_{\text{0.048}}$ & \bfseries $\text{0.641}_{\text{0.034}}$ \\
SCP & \bfseries $\text{-5.17}_{\text{2.2}}$ & \bfseries $\text{-7.58}_{\text{0.51}}$ & \bfseries $\text{0.392}_{\text{0.099}}$ & \bfseries $\text{0.398}_{\text{0.099}}$ & \bfseries $\text{0.400}_{\text{0.069}}$ \\
ANS & \bfseries $\text{1.89}_{\text{0.025}}$ & \bfseries $\text{1.91}_{\text{0.017}}$ & \bfseries $\text{0.531}_{\text{0.0053}}$ & \bfseries $\text{0.536}_{\text{0.0049}}$ & \bfseries $\text{0.532}_{\text{0.0036}}$ \\
HO2 & $\text{3.06}_{\text{0.052}}$ & \bfseries $\text{2.87}_{\text{0.029}}$ & \bfseries $\text{0.881}_{\text{0.0077}}$ & \bfseries $\text{0.894}_{\text{0.0068}}$ & \bfseries $\text{0.881}_{\text{0.0050}}$ \\
SC2 & $\text{1.88}_{\text{0.25}}$ & \bfseries $\text{0.936}_{\text{0.12}}$ & \bfseries $\text{1.01}_{\text{0.0091}}$ & $\text{1.17}_{\text{0.0096}}$ & $\text{1.09}_{\text{0.0060}}$ \\
RF1 & \bfseries $\text{-14.3}_{\text{1.3}}$ & \bfseries $\text{-15.5}_{\text{0.28}}$ & \bfseries $\text{0.203}_{\text{0.023}}$ & \bfseries $\text{0.210}_{\text{0.023}}$ & \bfseries $\text{0.208}_{\text{0.016}}$ \\
SC1 & \bfseries $\text{-4.48}_{\text{2.9}}$ & \bfseries $\text{-5.00}_{\text{2.0}}$ & \bfseries $\text{2.62}_{\text{0.19}}$ & \bfseries $\text{2.52}_{\text{0.18}}$ & \bfseries $\text{2.65}_{\text{0.14}}$ \\
AIR & $\text{4.29}_{\text{0.15}}$ & \bfseries $\text{3.74}_{\text{0.10}}$ & \bfseries $\text{1.21}_{\text{0.0097}}$ & $\text{1.23}_{\text{0.0090}}$ & \bfseries $\text{1.21}_{\text{0.0065}}$ \\
BI2 & $\text{-11.6}_{\text{0.27}}$ & \bfseries $\text{-12.3}_{\text{0.16}}$ & \bfseries $\text{0.831}_{\text{0.018}}$ & \bfseries $\text{0.839}_{\text{0.017}}$ & \bfseries $\text{0.830}_{\text{0.011}}$ \\
BI1 & \bfseries $\text{0.622}_{\text{0.12}}$ & \bfseries $\text{0.443}_{\text{0.077}}$ & \bfseries $\text{0.719}_{\text{0.0049}}$ & \bfseries $\text{0.722}_{\text{0.0049}}$ & \bfseries $\text{0.718}_{\text{0.0034}}$ \\
WAG & \bfseries $\text{-2.12}_{\text{0.089}}$ & \bfseries $\text{-2.22}_{\text{0.060}}$ & $\text{0.721}_{\text{0.0045}}$ & \bfseries $\text{0.716}_{\text{0.0037}}$ & \bfseries $\text{0.710}_{\text{0.0027}}$ \\
ME3 & $\text{-1.95}_{\text{0.097}}$ & \bfseries $\text{-2.37}_{\text{0.049}}$ & \bfseries $\text{0.401}_{\text{0.0080}}$ & \bfseries $\text{0.403}_{\text{0.0078}}$ & \bfseries $\text{0.398}_{\text{0.0054}}$ \\
ME1 & \bfseries $\text{-1.51}_{\text{0.36}}$ & \bfseries $\text{-1.96}_{\text{0.23}}$ & \bfseries $\text{0.531}_{\text{0.066}}$ & \bfseries $\text{0.517}_{\text{0.052}}$ & \bfseries $\text{0.540}_{\text{0.053}}$ \\
ME2 & $\text{-1.94}_{\text{0.076}}$ & \bfseries $\text{-2.35}_{\text{0.045}}$ & \bfseries $\text{0.412}_{\text{0.0047}}$ & \bfseries $\text{0.412}_{\text{0.0045}}$ & \bfseries $\text{0.409}_{\text{0.0033}}$ \\
HO1 & $\text{-0.153}_{\text{0.045}}$ & \bfseries $\text{-0.323}_{\text{0.023}}$ & \bfseries $\text{0.327}_{\text{0.0068}}$ & \bfseries $\text{0.335}_{\text{0.0070}}$ & \bfseries $\text{0.330}_{\text{0.0048}}$ \\
BIO & $\text{-1.10}_{\text{0.075}}$ & \bfseries $\text{-1.42}_{\text{0.016}}$ & \bfseries $\text{0.203}_{\text{0.0040}}$ & \bfseries $\text{0.207}_{\text{0.0040}}$ & \bfseries $\text{0.204}_{\text{0.0027}}$ \\
BLO & \bfseries $\text{-3.37}_{\text{0.35}}$ & \bfseries $\text{-3.85}_{\text{0.24}}$ & \bfseries $\text{0.372}_{\text{0.015}}$ & \bfseries $\text{0.372}_{\text{0.018}}$ & \bfseries $\text{0.362}_{\text{0.0093}}$ \\
CAL & \bfseries $\text{0.581}_{\text{0.0083}}$ & \bfseries $\text{0.578}_{\text{0.0055}}$ & \bfseries $\text{0.419}_{\text{0.0014}}$ & \bfseries $\text{0.421}_{\text{0.0014}}$ & \bfseries $\text{0.419}_{\text{0.00094}}$ \\
TAX & $\text{1.62}_{\text{0.0069}}$ & \bfseries $\text{1.54}_{\text{0.0042}}$ & \bfseries $\text{0.696}_{\text{0.0023}}$ & \bfseries $\text{0.698}_{\text{0.0021}}$ & \bfseries $\text{0.695}_{\text{0.0015}}$ \\
\bottomrule
\end{tabular}

\end{table}

\begin{table}[H]
	\scriptsize
	\centering
	\caption{Full comparative table, using a MAF model.}
	\label{table:lr/ARFlow_calibration_lr}
	\vspace{0.2cm}
	\begin{tabular}{llllll}
\toprule
& \multicolumn{2}{c}{L-ECE} & \multicolumn{3}{c}{HDR-ECE} \\
& \BASE & \LR & \BASE & \HDRR & \LR \\
\midrule
SLU & $\text{0.146}_{\text{0.024}}$ & \bfseries $\text{0.101}_{\text{0.013}}$ & $\text{0.328}_{\text{0.026}}$ & \bfseries $\text{0.110}_{\text{0.014}}$ & $\text{0.277}_{\text{0.020}}$ \\
EDM & $\text{0.128}_{\text{0.020}}$ & \bfseries $\text{0.0734}_{\text{0.0064}}$ & $\text{0.153}_{\text{0.021}}$ & \bfseries $\text{0.0849}_{\text{0.010}}$ & \bfseries $\text{0.0889}_{\text{0.013}}$ \\
AT2 & $\text{0.262}_{\text{0.019}}$ & \bfseries $\text{0.0651}_{\text{0.0070}}$ & $\text{0.376}_{\text{0.014}}$ & \bfseries $\text{0.118}_{\text{0.016}}$ & $\text{0.208}_{\text{0.010}}$ \\
SF1 & $\text{0.176}_{\text{0.0083}}$ & \bfseries $\text{0.0487}_{\text{0.0044}}$ & $\text{0.195}_{\text{0.013}}$ & $\text{0.129}_{\text{0.012}}$ & \bfseries $\text{0.0935}_{\text{0.0062}}$ \\
OE2 & $\text{0.300}_{\text{0.015}}$ & \bfseries $\text{0.0595}_{\text{0.0048}}$ & $\text{0.418}_{\text{0.011}}$ & \bfseries $\text{0.201}_{\text{0.021}}$ & \bfseries $\text{0.220}_{\text{0.013}}$ \\
AT1 & $\text{0.258}_{\text{0.013}}$ & \bfseries $\text{0.0566}_{\text{0.0069}}$ & $\text{0.342}_{\text{0.011}}$ & \bfseries $\text{0.106}_{\text{0.010}}$ & $\text{0.159}_{\text{0.011}}$ \\
JUR & $\text{0.185}_{\text{0.013}}$ & \bfseries $\text{0.0540}_{\text{0.0042}}$ & $\text{0.314}_{\text{0.011}}$ & \bfseries $\text{0.0717}_{\text{0.0094}}$ & $\text{0.212}_{\text{0.010}}$ \\
OE1 & $\text{0.292}_{\text{0.013}}$ & \bfseries $\text{0.0676}_{\text{0.0043}}$ & $\text{0.391}_{\text{0.0097}}$ & \bfseries $\text{0.174}_{\text{0.016}}$ & \bfseries $\text{0.153}_{\text{0.015}}$ \\
ENB & \bfseries $\text{0.0277}_{\text{0.0044}}$ & \bfseries $\text{0.0346}_{\text{0.0035}}$ & \bfseries $\text{0.0362}_{\text{0.0059}}$ & \bfseries $\text{0.0428}_{\text{0.0085}}$ & \bfseries $\text{0.0461}_{\text{0.0060}}$ \\
WQ & $\text{0.328}_{\text{0.0095}}$ & \bfseries $\text{0.0312}_{\text{0.0037}}$ & $\text{0.349}_{\text{0.0067}}$ & \bfseries $\text{0.134}_{\text{0.0083}}$ & $\text{0.188}_{\text{0.0052}}$ \\
SF2 & $\text{0.174}_{\text{0.014}}$ & \bfseries $\text{0.0310}_{\text{0.0026}}$ & $\text{0.197}_{\text{0.018}}$ & $\text{0.123}_{\text{0.016}}$ & \bfseries $\text{0.0726}_{\text{0.0042}}$ \\
SCP & $\text{0.0885}_{\text{0.029}}$ & \bfseries $\text{0.0397}_{\text{0.0038}}$ & $\text{0.0813}_{\text{0.020}}$ & \bfseries $\text{0.0445}_{\text{0.0052}}$ & \bfseries $\text{0.0464}_{\text{0.0051}}$ \\
ANS & \bfseries $\text{0.0212}_{\text{0.0032}}$ & \bfseries $\text{0.0272}_{\text{0.0020}}$ & $\text{0.0443}_{\text{0.0055}}$ & \bfseries $\text{0.0253}_{\text{0.0021}}$ & $\text{0.0478}_{\text{0.0036}}$ \\
HO2 & $\text{0.0806}_{\text{0.0038}}$ & \bfseries $\text{0.0139}_{\text{0.00094}}$ & $\text{0.131}_{\text{0.0043}}$ & \bfseries $\text{0.0157}_{\text{0.0018}}$ & $\text{0.0712}_{\text{0.0023}}$ \\
SC2 & $\text{0.250}_{\text{0.011}}$ & \bfseries $\text{0.0166}_{\text{0.0015}}$ & $\text{0.313}_{\text{0.0083}}$ & \bfseries $\text{0.0672}_{\text{0.0061}}$ & $\text{0.136}_{\text{0.0032}}$ \\
RF1 & $\text{0.0923}_{\text{0.0046}}$ & \bfseries $\text{0.0172}_{\text{0.0038}}$ & $\text{0.0860}_{\text{0.0062}}$ & \bfseries $\text{0.0139}_{\text{0.0012}}$ & \bfseries $\text{0.0189}_{\text{0.0028}}$ \\
SC1 & $\text{0.112}_{\text{0.013}}$ & \bfseries $\text{0.00913}_{\text{0.00061}}$ & $\text{0.417}_{\text{0.017}}$ & \bfseries $\text{0.163}_{\text{0.013}}$ & $\text{0.394}_{\text{0.013}}$ \\
AIR & $\text{0.0936}_{\text{0.0028}}$ & \bfseries $\text{0.0132}_{\text{0.0013}}$ & $\text{0.144}_{\text{0.0049}}$ & \bfseries $\text{0.0193}_{\text{0.0012}}$ & $\text{0.0790}_{\text{0.0037}}$ \\
BI2 & $\text{0.0607}_{\text{0.0068}}$ & \bfseries $\text{0.0122}_{\text{0.0013}}$ & $\text{0.0885}_{\text{0.0062}}$ & \bfseries $\text{0.0156}_{\text{0.0014}}$ & $\text{0.0385}_{\text{0.0024}}$ \\
BI1 & $\text{0.0408}_{\text{0.0039}}$ & \bfseries $\text{0.0105}_{\text{0.00081}}$ & $\text{0.0739}_{\text{0.0064}}$ & \bfseries $\text{0.0128}_{\text{0.00080}}$ & $\text{0.0463}_{\text{0.0027}}$ \\
WAG & $\text{0.113}_{\text{0.0050}}$ & \bfseries $\text{0.0118}_{\text{0.0010}}$ & $\text{0.194}_{\text{0.0036}}$ & \bfseries $\text{0.0134}_{\text{0.00090}}$ & $\text{0.119}_{\text{0.0015}}$ \\
ME3 & $\text{0.0744}_{\text{0.0058}}$ & \bfseries $\text{0.00796}_{\text{0.00065}}$ & $\text{0.0919}_{\text{0.0046}}$ & \bfseries $\text{0.0128}_{\text{0.0011}}$ & $\text{0.0395}_{\text{0.0022}}$ \\
ME1 & $\text{0.0673}_{\text{0.013}}$ & \bfseries $\text{0.0113}_{\text{0.00091}}$ & $\text{0.0932}_{\text{0.016}}$ & \bfseries $\text{0.0274}_{\text{0.0093}}$ & $\text{0.0766}_{\text{0.016}}$ \\
ME2 & $\text{0.0716}_{\text{0.0057}}$ & \bfseries $\text{0.00801}_{\text{0.00052}}$ & $\text{0.0848}_{\text{0.0034}}$ & \bfseries $\text{0.0135}_{\text{0.00087}}$ & $\text{0.0333}_{\text{0.0018}}$ \\
HO1 & $\text{0.0843}_{\text{0.0019}}$ & \bfseries $\text{0.00693}_{\text{0.00053}}$ & $\text{0.0986}_{\text{0.0030}}$ & \bfseries $\text{0.00896}_{\text{0.00069}}$ & $\text{0.0292}_{\text{0.0014}}$ \\
BIO & $\text{0.0713}_{\text{0.0048}}$ & \bfseries $\text{0.00615}_{\text{0.00056}}$ & $\text{0.0850}_{\text{0.0054}}$ & \bfseries $\text{0.00723}_{\text{0.00034}}$ & $\text{0.0230}_{\text{0.0016}}$ \\
BLO & $\text{0.105}_{\text{0.019}}$ & \bfseries $\text{0.00565}_{\text{0.00068}}$ & $\text{0.102}_{\text{0.016}}$ & \bfseries $\text{0.0152}_{\text{0.0026}}$ & $\text{0.0285}_{\text{0.0034}}$ \\
CAL & \bfseries $\text{0.00491}_{\text{0.00067}}$ & \bfseries $\text{0.00591}_{\text{0.00065}}$ & $\text{0.00543}_{\text{0.00065}}$ & $\text{0.00529}_{\text{0.00036}}$ & \bfseries $\text{0.00416}_{\text{0.00027}}$ \\
TAX & $\text{0.0235}_{\text{0.00098}}$ & \bfseries $\text{0.00563}_{\text{0.00050}}$ & $\text{0.0424}_{\text{0.0017}}$ & \bfseries $\text{0.00797}_{\text{0.00075}}$ & $\text{0.0235}_{\text{0.0012}}$ \\
\bottomrule
\end{tabular}

\end{table}

\section{Results with Flow Matching}
\label{sec:results_cfm}

While our paper focuses on normalizing flows, \LR is fully compatible with flow matching (FM) models (\cref{sec:flow_matching}), which also learn invertible mappings and assume a known latent distribution. For these models, we tune hyperparameters using grid search. The number of hidden units is chosen from $[32, 64]$, the number of hidden layers from $[2, 3, 5]$, and the learning rate from $[5\times10^{-3}, 10^{-3}, 2\times10^{-4}]$.

The FM results are aligned with the NFs results, with \LR standing out particularly on the L-ECE and NLL metrics.

\begin{figure}[H]
	\centering
	\includegraphics[width=0.9\linewidth]{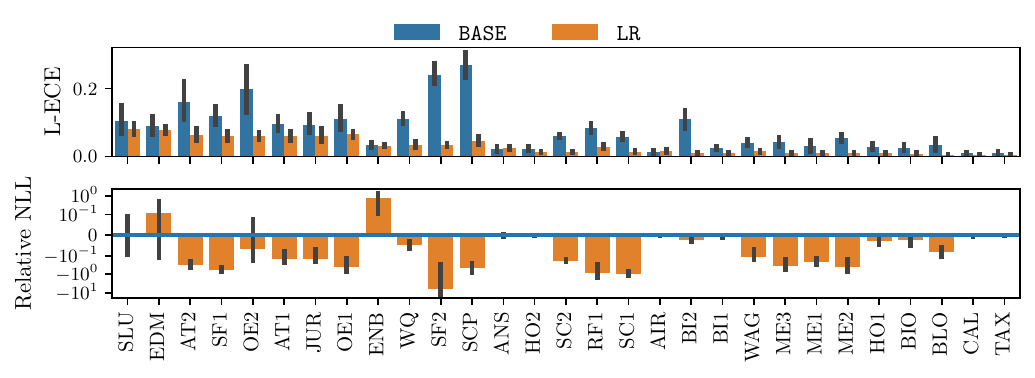}
	\vspace{-0.2cm}
	\caption{Latent calibration and NLL on datasets sorted by size, using a FM model.}
	\label{fig:barplot/lr/CFM/latent_based}
\end{figure}

\begin{figure}[H]
	\centering
	\includegraphics[width=0.9\linewidth]{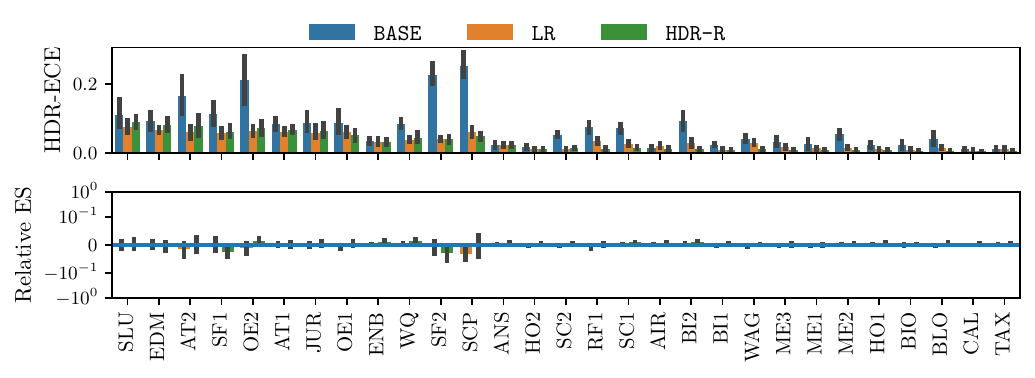}
	\vspace{-0.2cm}
	\caption{Latent calibration and NLL on datasets sorted by size, using a FM model.}
	\label{fig:barplot/lr/CFM/sample_based}
\end{figure}

\begin{table}[H]
	\scriptsize
	\centering
	\caption{Full comparative table, using a FM model.}
	\label{table:lr/CFM/scoring_rules}
	\vspace{0.2cm}
	\begin{tabular}{llllll}
\toprule
& \multicolumn{2}{c}{NLL} & \multicolumn{3}{c}{Energy score} \\
& \BASE & \LR & \BASE & \HDRR & \LR \\
\midrule
SLU & \bfseries $\text{2.16}_{\text{0.20}}$ & \bfseries $\text{2.09}_{\text{0.13}}$ & \bfseries $\text{0.756}_{\text{0.033}}$ & \bfseries $\text{0.759}_{\text{0.033}}$ & \bfseries $\text{0.754}_{\text{0.029}}$ \\
EDM & \bfseries $\text{2.20}_{\text{0.18}}$ & \bfseries $\text{2.36}_{\text{0.35}}$ & \bfseries $\text{0.642}_{\text{0.036}}$ & \bfseries $\text{0.639}_{\text{0.037}}$ & \bfseries $\text{0.643}_{\text{0.033}}$ \\
AT2 & $\text{4.76}_{\text{0.52}}$ & \bfseries $\text{2.93}_{\text{0.21}}$ & \bfseries $\text{0.770}_{\text{0.036}}$ & \bfseries $\text{0.770}_{\text{0.032}}$ & \bfseries $\text{0.756}_{\text{0.032}}$ \\
SF1 & $\text{2.38}_{\text{0.59}}$ & \bfseries $\text{0.901}_{\text{0.17}}$ & \bfseries $\text{0.845}_{\text{0.091}}$ & \bfseries $\text{0.826}_{\text{0.092}}$ & \bfseries $\text{0.841}_{\text{0.084}}$ \\
OE2 & \bfseries $\text{9.53}_{\text{1.1}}$ & \bfseries $\text{8.94}_{\text{1.4}}$ & \bfseries $\text{1.20}_{\text{0.066}}$ & \bfseries $\text{1.22}_{\text{0.066}}$ & \bfseries $\text{1.18}_{\text{0.061}}$ \\
AT1 & \bfseries $\text{1.46}_{\text{0.30}}$ & \bfseries $\text{1.22}_{\text{0.24}}$ & \bfseries $\text{0.571}_{\text{0.032}}$ & \bfseries $\text{0.573}_{\text{0.032}}$ & \bfseries $\text{0.571}_{\text{0.031}}$ \\
JUR & \bfseries $\text{2.45}_{\text{0.25}}$ & \bfseries $\text{2.02}_{\text{0.11}}$ & \bfseries $\text{0.613}_{\text{0.027}}$ & \bfseries $\text{0.615}_{\text{0.027}}$ & \bfseries $\text{0.613}_{\text{0.026}}$ \\
OE1 & \bfseries $\text{3.99}_{\text{1.4}}$ & \bfseries $\text{2.24}_{\text{0.83}}$ & \bfseries $\text{0.861}_{\text{0.057}}$ & \bfseries $\text{0.866}_{\text{0.058}}$ & \bfseries $\text{0.855}_{\text{0.057}}$ \\
ENB & \bfseries $\text{0.139}_{\text{0.041}}$ & \bfseries $\text{0.172}_{\text{0.035}}$ & \bfseries $\text{0.313}_{\text{0.0042}}$ & \bfseries $\text{0.316}_{\text{0.0047}}$ & \bfseries $\text{0.313}_{\text{0.0041}}$ \\
WQ & $\text{15.0}_{\text{0.27}}$ & \bfseries $\text{14.3}_{\text{0.25}}$ & \bfseries $\text{2.40}_{\text{0.026}}$ & \bfseries $\text{2.44}_{\text{0.023}}$ & \bfseries $\text{2.41}_{\text{0.025}}$ \\
SF2 & \bfseries $\text{2.30}_{\text{3.0}}$ & \bfseries $\text{-1.06}_{\text{0.36}}$ & \bfseries $\text{0.668}_{\text{0.043}}$ & \bfseries $\text{0.648}_{\text{0.043}}$ & \bfseries $\text{0.662}_{\text{0.041}}$ \\
SCP & $\text{-2.04}_{\text{0.59}}$ & \bfseries $\text{-3.30}_{\text{0.18}}$ & \bfseries $\text{0.391}_{\text{0.092}}$ & \bfseries $\text{0.395}_{\text{0.097}}$ & \bfseries $\text{0.386}_{\text{0.096}}$ \\
ANS & \bfseries $\text{1.78}_{\text{0.023}}$ & \bfseries $\text{1.77}_{\text{0.021}}$ & \bfseries $\text{0.529}_{\text{0.0053}}$ & \bfseries $\text{0.533}_{\text{0.0048}}$ & \bfseries $\text{0.529}_{\text{0.0052}}$ \\
HO2 & \bfseries $\text{2.53}_{\text{0.029}}$ & \bfseries $\text{2.53}_{\text{0.029}}$ & \bfseries $\text{0.852}_{\text{0.0069}}$ & \bfseries $\text{0.856}_{\text{0.0066}}$ & \bfseries $\text{0.852}_{\text{0.0068}}$ \\
SC2 & \bfseries $\text{2.20}_{\text{0.17}}$ & \bfseries $\text{1.82}_{\text{0.17}}$ & \bfseries $\text{1.02}_{\text{0.0089}}$ & \bfseries $\text{1.02}_{\text{0.0090}}$ & \bfseries $\text{1.02}_{\text{0.0089}}$ \\
RF1 & $\text{0.570}_{\text{2.7}}$ & \bfseries $\text{-4.53}_{\text{0.20}}$ & \bfseries $\text{0.367}_{\text{0.018}}$ & \bfseries $\text{0.367}_{\text{0.018}}$ & \bfseries $\text{0.364}_{\text{0.018}}$ \\
SC1 & $\text{0.525}_{\text{0.14}}$ & \bfseries $\text{0.0719}_{\text{0.13}}$ & \bfseries $\text{0.818}_{\text{0.0077}}$ & \bfseries $\text{0.825}_{\text{0.0078}}$ & \bfseries $\text{0.819}_{\text{0.0078}}$ \\
AIR & \bfseries $\text{4.21}_{\text{0.039}}$ & \bfseries $\text{4.20}_{\text{0.040}}$ & \bfseries $\text{1.16}_{\text{0.0090}}$ & \bfseries $\text{1.17}_{\text{0.0089}}$ & \bfseries $\text{1.16}_{\text{0.0090}}$ \\
BI2 & \bfseries $\text{-4.20}_{\text{0.17}}$ & \bfseries $\text{-4.31}_{\text{0.18}}$ & \bfseries $\text{0.788}_{\text{0.013}}$ & \bfseries $\text{0.796}_{\text{0.013}}$ & \bfseries $\text{0.792}_{\text{0.013}}$ \\
BI1 & \bfseries $\text{2.11}_{\text{0.013}}$ & \bfseries $\text{2.10}_{\text{0.010}}$ & \bfseries $\text{0.703}_{\text{0.0057}}$ & \bfseries $\text{0.706}_{\text{0.0053}}$ & \bfseries $\text{0.703}_{\text{0.0056}}$ \\
WAG & \bfseries $\text{0.346}_{\text{0.036}}$ & \bfseries $\text{0.308}_{\text{0.035}}$ & \bfseries $\text{0.699}_{\text{0.0032}}$ & \bfseries $\text{0.700}_{\text{0.0034}}$ & \bfseries $\text{0.697}_{\text{0.0031}}$ \\
ME3 & \bfseries $\text{-0.350}_{\text{0.073}}$ & \bfseries $\text{-0.423}_{\text{0.069}}$ & \bfseries $\text{0.392}_{\text{0.0087}}$ & \bfseries $\text{0.393}_{\text{0.0090}}$ & \bfseries $\text{0.391}_{\text{0.0086}}$ \\
ME1 & \bfseries $\text{-0.444}_{\text{0.067}}$ & \bfseries $\text{-0.518}_{\text{0.070}}$ & \bfseries $\text{0.384}_{\text{0.0077}}$ & \bfseries $\text{0.384}_{\text{0.0078}}$ & \bfseries $\text{0.383}_{\text{0.0076}}$ \\
ME2 & \bfseries $\text{-0.342}_{\text{0.058}}$ & \bfseries $\text{-0.422}_{\text{0.054}}$ & \bfseries $\text{0.395}_{\text{0.0045}}$ & \bfseries $\text{0.397}_{\text{0.0048}}$ & \bfseries $\text{0.396}_{\text{0.0045}}$ \\
HO1 & \bfseries $\text{-0.619}_{\text{0.021}}$ & \bfseries $\text{-0.636}_{\text{0.018}}$ & \bfseries $\text{0.214}_{\text{0.0036}}$ & \bfseries $\text{0.216}_{\text{0.0038}}$ & \bfseries $\text{0.215}_{\text{0.0036}}$ \\
BIO & \bfseries $\text{-0.561}_{\text{0.040}}$ & \bfseries $\text{-0.570}_{\text{0.036}}$ & \bfseries $\text{0.236}_{\text{0.0050}}$ & \bfseries $\text{0.236}_{\text{0.0051}}$ & \bfseries $\text{0.236}_{\text{0.0049}}$ \\
BLO & $\text{-1.06}_{\text{0.030}}$ & \bfseries $\text{-1.14}_{\text{0.026}}$ & \bfseries $\text{0.258}_{\text{0.0031}}$ & \bfseries $\text{0.259}_{\text{0.0030}}$ & \bfseries $\text{0.257}_{\text{0.0030}}$ \\
CAL & \bfseries $\text{0.645}_{\text{0.0065}}$ & \bfseries $\text{0.643}_{\text{0.0064}}$ & \bfseries $\text{0.421}_{\text{0.0014}}$ & \bfseries $\text{0.422}_{\text{0.0014}}$ & \bfseries $\text{0.421}_{\text{0.0014}}$ \\
TAX & \bfseries $\text{1.66}_{\text{0.0079}}$ & \bfseries $\text{1.66}_{\text{0.0077}}$ & \bfseries $\text{0.693}_{\text{0.0021}}$ & \bfseries $\text{0.696}_{\text{0.0020}}$ & \bfseries $\text{0.693}_{\text{0.0021}}$ \\
\bottomrule
\end{tabular}

\end{table}

\begin{table}[H]
	\scriptsize
	\centering
	\caption{Full comparative table, using a FM model.}
	\label{table:lr/CFM/calibration}
	\vspace{0.2cm}
	\begin{tabular}{llllll}
\toprule
& \multicolumn{2}{c}{L-ECE} & \multicolumn{3}{c}{HDR-ECE} \\
& \BASE & \LR & \BASE & \HDRR & \LR \\
\midrule
SLU & \bfseries $\text{0.105}_{\text{0.022}}$ & \bfseries $\text{0.0809}_{\text{0.0088}}$ & \bfseries $\text{0.109}_{\text{0.021}}$ & \bfseries $\text{0.0888}_{\text{0.0089}}$ & \bfseries $\text{0.0761}_{\text{0.0092}}$ \\
EDM & \bfseries $\text{0.0906}_{\text{0.015}}$ & \bfseries $\text{0.0776}_{\text{0.0057}}$ & $\text{0.0926}_{\text{0.014}}$ & \bfseries $\text{0.0820}_{\text{0.0097}}$ & \bfseries $\text{0.0654}_{\text{0.0042}}$ \\
AT2 & $\text{0.160}_{\text{0.030}}$ & \bfseries $\text{0.0621}_{\text{0.010}}$ & $\text{0.165}_{\text{0.030}}$ & \bfseries $\text{0.0784}_{\text{0.016}}$ & \bfseries $\text{0.0605}_{\text{0.0097}}$ \\
SF1 & $\text{0.119}_{\text{0.015}}$ & \bfseries $\text{0.0593}_{\text{0.0072}}$ & $\text{0.112}_{\text{0.017}}$ & \bfseries $\text{0.0604}_{\text{0.0090}}$ & \bfseries $\text{0.0572}_{\text{0.0078}}$ \\
OE2 & $\text{0.199}_{\text{0.036}}$ & \bfseries $\text{0.0615}_{\text{0.0063}}$ & $\text{0.210}_{\text{0.036}}$ & \bfseries $\text{0.0722}_{\text{0.0098}}$ & \bfseries $\text{0.0626}_{\text{0.0072}}$ \\
AT1 & $\text{0.0959}_{\text{0.012}}$ & \bfseries $\text{0.0597}_{\text{0.0074}}$ & $\text{0.0831}_{\text{0.0089}}$ & \bfseries $\text{0.0679}_{\text{0.0057}}$ & \bfseries $\text{0.0615}_{\text{0.0051}}$ \\
JUR & $\text{0.0918}_{\text{0.015}}$ & \bfseries $\text{0.0596}_{\text{0.011}}$ & \bfseries $\text{0.0865}_{\text{0.014}}$ & \bfseries $\text{0.0639}_{\text{0.011}}$ & \bfseries $\text{0.0590}_{\text{0.010}}$ \\
OE1 & $\text{0.109}_{\text{0.019}}$ & \bfseries $\text{0.0647}_{\text{0.0049}}$ & $\text{0.0863}_{\text{0.017}}$ & \bfseries $\text{0.0514}_{\text{0.0078}}$ & \bfseries $\text{0.0618}_{\text{0.0077}}$ \\
ENB & \bfseries $\text{0.0330}_{\text{0.0037}}$ & \bfseries $\text{0.0314}_{\text{0.0021}}$ & \bfseries $\text{0.0340}_{\text{0.0044}}$ & \bfseries $\text{0.0312}_{\text{0.0042}}$ & \bfseries $\text{0.0318}_{\text{0.0048}}$ \\
WQ & $\text{0.110}_{\text{0.0086}}$ & \bfseries $\text{0.0349}_{\text{0.0049}}$ & $\text{0.0842}_{\text{0.0061}}$ & \bfseries $\text{0.0437}_{\text{0.0078}}$ & \bfseries $\text{0.0385}_{\text{0.0042}}$ \\
SF2 & $\text{0.239}_{\text{0.016}}$ & \bfseries $\text{0.0339}_{\text{0.0023}}$ & $\text{0.226}_{\text{0.015}}$ & \bfseries $\text{0.0397}_{\text{0.0051}}$ & \bfseries $\text{0.0400}_{\text{0.0030}}$ \\
SCP & $\text{0.268}_{\text{0.020}}$ & \bfseries $\text{0.0457}_{\text{0.0070}}$ & $\text{0.253}_{\text{0.019}}$ & \bfseries $\text{0.0483}_{\text{0.0053}}$ & \bfseries $\text{0.0607}_{\text{0.0067}}$ \\
ANS & \bfseries $\text{0.0222}_{\text{0.0042}}$ & \bfseries $\text{0.0236}_{\text{0.0027}}$ & \bfseries $\text{0.0231}_{\text{0.0040}}$ & \bfseries $\text{0.0244}_{\text{0.0025}}$ & \bfseries $\text{0.0236}_{\text{0.0033}}$ \\
HO2 & $\text{0.0232}_{\text{0.0038}}$ & \bfseries $\text{0.0138}_{\text{0.0011}}$ & $\text{0.0180}_{\text{0.0028}}$ & \bfseries $\text{0.0109}_{\text{0.0011}}$ & \bfseries $\text{0.0116}_{\text{0.0013}}$ \\
SC2 & $\text{0.0590}_{\text{0.0028}}$ & \bfseries $\text{0.0129}_{\text{0.0017}}$ & $\text{0.0532}_{\text{0.0032}}$ & \bfseries $\text{0.0142}_{\text{0.0011}}$ & \bfseries $\text{0.0127}_{\text{0.0012}}$ \\
RF1 & $\text{0.0843}_{\text{0.0078}}$ & \bfseries $\text{0.0273}_{\text{0.0037}}$ & $\text{0.0740}_{\text{0.0082}}$ & \bfseries $\text{0.0125}_{\text{0.0021}}$ & $\text{0.0353}_{\text{0.0039}}$ \\
SC1 & $\text{0.0573}_{\text{0.0056}}$ & \bfseries $\text{0.0133}_{\text{0.0020}}$ & $\text{0.0720}_{\text{0.0066}}$ & \bfseries $\text{0.0156}_{\text{0.0021}}$ & $\text{0.0267}_{\text{0.0037}}$ \\
AIR & \bfseries $\text{0.0130}_{\text{0.0027}}$ & \bfseries $\text{0.0151}_{\text{0.0023}}$ & \bfseries $\text{0.0158}_{\text{0.0027}}$ & \bfseries $\text{0.0132}_{\text{0.0021}}$ & $\text{0.0220}_{\text{0.0035}}$ \\
BI2 & $\text{0.109}_{\text{0.014}}$ & \bfseries $\text{0.00979}_{\text{0.0014}}$ & $\text{0.0916}_{\text{0.013}}$ & \bfseries $\text{0.0112}_{\text{0.0016}}$ & $\text{0.0288}_{\text{0.0059}}$ \\
BI1 & $\text{0.0243}_{\text{0.0023}}$ & \bfseries $\text{0.00924}_{\text{0.00093}}$ & $\text{0.0249}_{\text{0.0020}}$ & \bfseries $\text{0.00927}_{\text{0.00061}}$ & \bfseries $\text{0.0101}_{\text{0.0013}}$ \\
WAG & $\text{0.0407}_{\text{0.0051}}$ & \bfseries $\text{0.0144}_{\text{0.0013}}$ & $\text{0.0419}_{\text{0.0050}}$ & \bfseries $\text{0.0120}_{\text{0.0014}}$ & $\text{0.0307}_{\text{0.0031}}$ \\
ME3 & $\text{0.0412}_{\text{0.0077}}$ & \bfseries $\text{0.00965}_{\text{0.0011}}$ & $\text{0.0330}_{\text{0.0065}}$ & \bfseries $\text{0.0104}_{\text{0.00098}}$ & $\text{0.0176}_{\text{0.0029}}$ \\
ME1 & $\text{0.0299}_{\text{0.0086}}$ & \bfseries $\text{0.00915}_{\text{0.0014}}$ & $\text{0.0251}_{\text{0.0075}}$ & \bfseries $\text{0.00935}_{\text{0.00058}}$ & $\text{0.0143}_{\text{0.0020}}$ \\
ME2 & $\text{0.0537}_{\text{0.0060}}$ & \bfseries $\text{0.00883}_{\text{0.0013}}$ & $\text{0.0541}_{\text{0.0063}}$ & \bfseries $\text{0.00892}_{\text{0.00094}}$ & $\text{0.0156}_{\text{0.0014}}$ \\
HO1 & $\text{0.0285}_{\text{0.0046}}$ & \bfseries $\text{0.00990}_{\text{0.0016}}$ & $\text{0.0239}_{\text{0.0043}}$ & \bfseries $\text{0.00974}_{\text{0.0011}}$ & \bfseries $\text{0.0119}_{\text{0.0016}}$ \\
BIO & $\text{0.0253}_{\text{0.0053}}$ & \bfseries $\text{0.00766}_{\text{0.0019}}$ & $\text{0.0221}_{\text{0.0058}}$ & \bfseries $\text{0.00726}_{\text{0.00062}}$ & $\text{0.0102}_{\text{0.0016}}$ \\
BLO & $\text{0.0346}_{\text{0.010}}$ & \bfseries $\text{0.00498}_{\text{0.00047}}$ & $\text{0.0402}_{\text{0.0099}}$ & \bfseries $\text{0.00648}_{\text{0.00050}}$ & $\text{0.0153}_{\text{0.0018}}$ \\
CAL & $\text{0.00951}_{\text{0.0014}}$ & \bfseries $\text{0.00395}_{\text{0.00043}}$ & $\text{0.0112}_{\text{0.0018}}$ & \bfseries $\text{0.00547}_{\text{0.00030}}$ & $\text{0.00746}_{\text{0.0011}}$ \\
TAX & $\text{0.0110}_{\text{0.0018}}$ & \bfseries $\text{0.00505}_{\text{0.00059}}$ & $\text{0.0122}_{\text{0.0017}}$ & \bfseries $\text{0.00660}_{\text{0.00054}}$ & $\text{0.0133}_{\text{0.0022}}$ \\
\bottomrule
\end{tabular}

\end{table}

\section{Results with a misspecified convex potential flow}
\label{sec:results_misspecified}

\cref{table:lr_misspecified/MQF2_scoring_rules_lr,table:lr_misspecified/MQF2_calibration_lr} along with \cref{fig:barplot/lr_misspecified/MQF2/latent_based} and \cref{fig:barplot/lr_misspecified/MQF2/sample_based} present results for a deliberately misspecified convex potential flow. This misspecification was induced by training the base predictor for only two epochs, ensuring it has low predictive accuracy and is likely poorly calibrated.

We observe that, in this additional scenario, \LR also leads to improved L-ECE and NLL on most datasets, indicating enhanced predictive accuracy compared to the \BASE misspecified model.
\LR achieves similar or improved HDR-ECE and ES compared to \HDRR. These results highlight \LR's ability to improve misspecified base predictors.

\begin{figure}[H]
	\centering
	\includegraphics[width=0.9\linewidth]{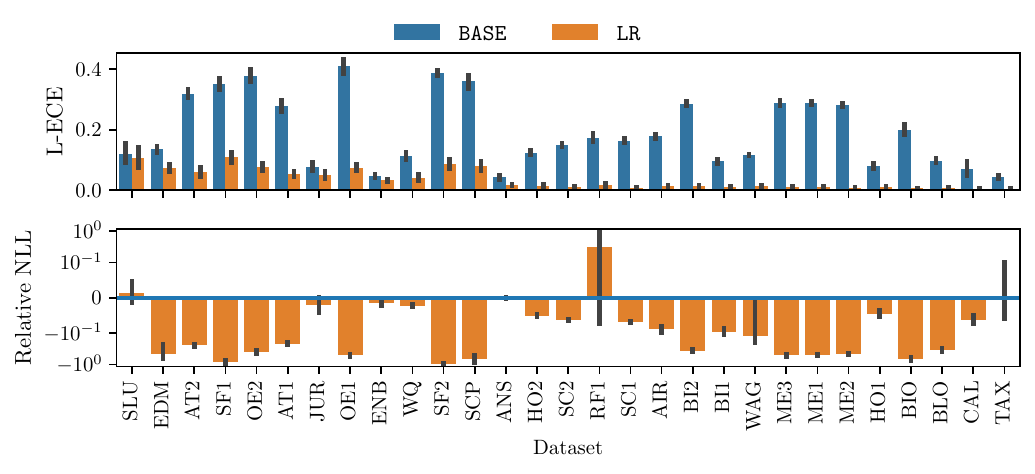}
	\vspace{-0.2cm}
	\caption{Latent calibration and NLL on datasets sorted by size, using a misspecified convex potential flow model.}
	\label{fig:barplot/lr_misspecified/MQF2/latent_based}
\end{figure}

\begin{figure}[H]
	\centering
	\includegraphics[width=0.9\linewidth]{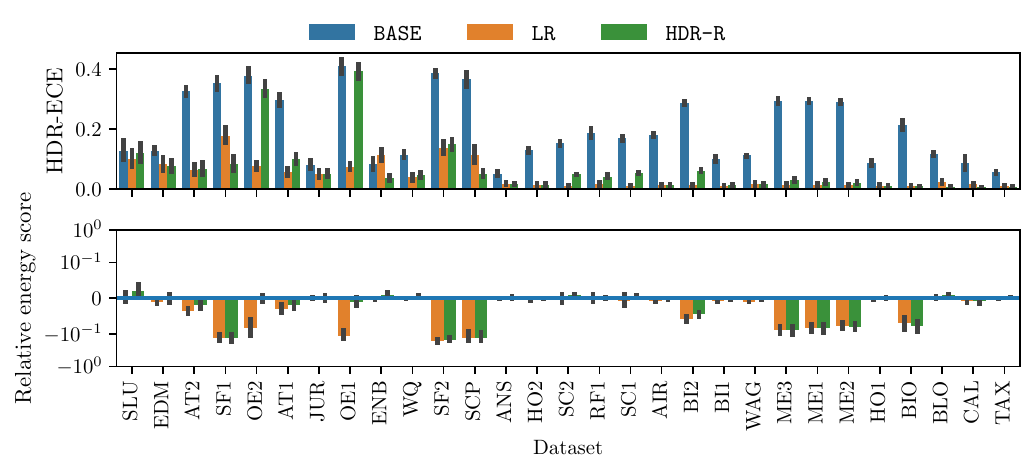}
	\vspace{-0.2cm}
	\caption{Latent calibration and NLL on datasets sorted by size, using a misspecified convex potential flow model.}
	\label{fig:barplot/lr_misspecified/MQF2/sample_based}
\end{figure}

\begin{table}[H]
	\scriptsize
	\centering
	\caption{Full comparative table, using a misspecified convex potential flow model.}
    \label{table:lr_misspecified/MQF2_scoring_rules_lr}
	\vspace{0.2cm}
	\begin{tabular}{llllll}
\toprule
& \multicolumn{2}{c}{NLL} & \multicolumn{3}{c}{Energy score} \\
& \BASE & \LR & \BASE & \HDRR & \LR \\
\midrule
SLU & \bfseries $\text{3.86}_{\text{0.13}}$ & \bfseries $\text{3.93}_{\text{0.16}}$ & \bfseries $\text{1.01}_{\text{0.043}}$ & \bfseries $\text{1.03}_{\text{0.050}}$ & \bfseries $\text{1.01}_{\text{0.047}}$ \\
EDM & $\text{2.84}_{\text{0.059}}$ & \bfseries $\text{1.53}_{\text{0.32}}$ & \bfseries $\text{0.880}_{\text{0.028}}$ & \bfseries $\text{0.879}_{\text{0.029}}$ & \bfseries $\text{0.871}_{\text{0.029}}$ \\
AT2 & $\text{7.99}_{\text{0.29}}$ & \bfseries $\text{6.03}_{\text{0.14}}$ & \bfseries $\text{1.39}_{\text{0.051}}$ & \bfseries $\text{1.36}_{\text{0.051}}$ & \bfseries $\text{1.34}_{\text{0.054}}$ \\
SF1 & $\text{4.05}_{\text{0.39}}$ & \bfseries $\text{0.736}_{\text{0.22}}$ & \bfseries $\text{0.857}_{\text{0.075}}$ & \bfseries $\text{0.754}_{\text{0.083}}$ & \bfseries $\text{0.753}_{\text{0.079}}$ \\
OE2 & $\text{23.1}_{\text{2.2}}$ & \bfseries $\text{13.5}_{\text{0.80}}$ & \bfseries $\text{2.24}_{\text{0.14}}$ & \bfseries $\text{2.24}_{\text{0.15}}$ & \bfseries $\text{2.07}_{\text{0.16}}$ \\
AT1 & $\text{7.68}_{\text{0.38}}$ & \bfseries $\text{5.99}_{\text{0.26}}$ & \bfseries $\text{1.42}_{\text{0.070}}$ & \bfseries $\text{1.40}_{\text{0.078}}$ & \bfseries $\text{1.39}_{\text{0.076}}$ \\
JUR & \bfseries $\text{4.19}_{\text{0.13}}$ & \bfseries $\text{4.09}_{\text{0.075}}$ & \bfseries $\text{1.08}_{\text{0.034}}$ & \bfseries $\text{1.08}_{\text{0.036}}$ & \bfseries $\text{1.08}_{\text{0.034}}$ \\
OE1 & $\text{23.8}_{\text{2.1}}$ & \bfseries $\text{11.1}_{\text{0.65}}$ & \bfseries $\text{2.14}_{\text{0.13}}$ & \bfseries $\text{2.12}_{\text{0.14}}$ & \bfseries $\text{1.91}_{\text{0.15}}$ \\
ENB & \bfseries $\text{2.05}_{\text{0.093}}$ & \bfseries $\text{2.02}_{\text{0.090}}$ & \bfseries $\text{0.815}_{\text{0.020}}$ & \bfseries $\text{0.823}_{\text{0.020}}$ & \bfseries $\text{0.814}_{\text{0.019}}$ \\
WQ & $\text{20.0}_{\text{0.15}}$ & \bfseries $\text{19.6}_{\text{0.15}}$ & \bfseries $\text{2.59}_{\text{0.027}}$ & \bfseries $\text{2.61}_{\text{0.028}}$ & \bfseries $\text{2.59}_{\text{0.027}}$ \\
SF2 & $\text{4.71}_{\text{0.55}}$ & \bfseries $\text{0.298}_{\text{0.25}}$ & $\text{0.821}_{\text{0.038}}$ & \bfseries $\text{0.703}_{\text{0.039}}$ & \bfseries $\text{0.688}_{\text{0.038}}$ \\
SCP & $\text{5.06}_{\text{2.1}}$ & \bfseries $\text{1.25}_{\text{0.36}}$ & \bfseries $\text{0.738}_{\text{0.10}}$ & \bfseries $\text{0.657}_{\text{0.11}}$ & \bfseries $\text{0.658}_{\text{0.11}}$ \\
ANS & \bfseries $\text{2.61}_{\text{0.021}}$ & \bfseries $\text{2.61}_{\text{0.020}}$ & \bfseries $\text{0.817}_{\text{0.0089}}$ & \bfseries $\text{0.819}_{\text{0.0087}}$ & \bfseries $\text{0.817}_{\text{0.0090}}$ \\
HO2 & $\text{5.46}_{\text{0.020}}$ & \bfseries $\text{5.18}_{\text{0.021}}$ & \bfseries $\text{1.23}_{\text{0.0063}}$ & \bfseries $\text{1.23}_{\text{0.0070}}$ & \bfseries $\text{1.22}_{\text{0.0067}}$ \\
SC2 & $\text{22.2}_{\text{0.12}}$ & \bfseries $\text{20.8}_{\text{0.12}}$ & \bfseries $\text{2.67}_{\text{0.019}}$ & \bfseries $\text{2.69}_{\text{0.020}}$ & \bfseries $\text{2.66}_{\text{0.021}}$ \\
RF1 & \bfseries $\text{10.8}_{\text{0.84}}$ & \bfseries $\text{16.8}_{\text{7.5}}$ & \bfseries $\text{1.69}_{\text{0.026}}$ & \bfseries $\text{1.70}_{\text{0.026}}$ & \bfseries $\text{1.69}_{\text{0.032}}$ \\
SC1 & $\text{21.4}_{\text{0.099}}$ & \bfseries $\text{20.0}_{\text{0.10}}$ & \bfseries $\text{2.56}_{\text{0.023}}$ & \bfseries $\text{2.58}_{\text{0.024}}$ & \bfseries $\text{2.54}_{\text{0.023}}$ \\
BI2 & $\text{5.62}_{\text{0.10}}$ & \bfseries $\text{3.51}_{\text{0.085}}$ & $\text{1.08}_{\text{0.016}}$ & \bfseries $\text{1.03}_{\text{0.016}}$ & \bfseries $\text{1.02}_{\text{0.016}}$ \\
BI1 & $\text{2.66}_{\text{0.030}}$ & \bfseries $\text{2.39}_{\text{0.020}}$ & \bfseries $\text{0.779}_{\text{0.0057}}$ & \bfseries $\text{0.776}_{\text{0.0057}}$ & \bfseries $\text{0.773}_{\text{0.0055}}$ \\
AIR & $\text{8.33}_{\text{0.068}}$ & \bfseries $\text{7.58}_{\text{0.044}}$ & \bfseries $\text{1.50}_{\text{0.0089}}$ & \bfseries $\text{1.50}_{\text{0.0089}}$ & \bfseries $\text{1.49}_{\text{0.0090}}$ \\
WAG & $\text{2.78}_{\text{0.0069}}$ & \bfseries $\text{2.43}_{\text{0.17}}$ & $\text{0.876}_{\text{0.0029}}$ & $\text{0.875}_{\text{0.0027}}$ & \bfseries $\text{0.868}_{\text{0.0029}}$ \\
ME3 & $\text{2.31}_{\text{0.045}}$ & \bfseries $\text{1.12}_{\text{0.046}}$ & $\text{0.602}_{\text{0.010}}$ & \bfseries $\text{0.548}_{\text{0.0087}}$ & \bfseries $\text{0.549}_{\text{0.0091}}$ \\
ME1 & $\text{2.20}_{\text{0.046}}$ & \bfseries $\text{1.10}_{\text{0.048}}$ & $\text{0.588}_{\text{0.0092}}$ & \bfseries $\text{0.539}_{\text{0.0073}}$ & \bfseries $\text{0.540}_{\text{0.0073}}$ \\
ME2 & $\text{2.15}_{\text{0.027}}$ & \bfseries $\text{1.13}_{\text{0.038}}$ & $\text{0.591}_{\text{0.0055}}$ & \bfseries $\text{0.545}_{\text{0.0059}}$ & \bfseries $\text{0.546}_{\text{0.0058}}$ \\
HO1 & $\text{2.46}_{\text{0.029}}$ & \bfseries $\text{2.34}_{\text{0.030}}$ & \bfseries $\text{0.743}_{\text{0.0068}}$ & \bfseries $\text{0.743}_{\text{0.0072}}$ & \bfseries $\text{0.741}_{\text{0.0071}}$ \\
BIO & $\text{0.963}_{\text{0.14}}$ & \bfseries $\text{0.302}_{\text{0.046}}$ & $\text{0.335}_{\text{0.0085}}$ & \bfseries $\text{0.308}_{\text{0.0058}}$ & \bfseries $\text{0.311}_{\text{0.0062}}$ \\
CAL & $\text{1.37}_{\text{0.043}}$ & \bfseries $\text{1.28}_{\text{0.035}}$ & \bfseries $\text{0.472}_{\text{0.0061}}$ & \bfseries $\text{0.468}_{\text{0.0050}}$ & \bfseries $\text{0.468}_{\text{0.0052}}$ \\
BLO & $\text{1.21}_{\text{0.033}}$ & \bfseries $\text{0.785}_{\text{0.036}}$ & \bfseries $\text{0.685}_{\text{0.0024}}$ & $\text{0.691}_{\text{0.0030}}$ & \bfseries $\text{0.687}_{\text{0.0029}}$ \\
TAX & \bfseries $\text{2.29}_{\text{0.019}}$ & \bfseries $\text{2.31}_{\text{0.11}}$ & \bfseries $\text{0.724}_{\text{0.0024}}$ & \bfseries $\text{0.725}_{\text{0.0024}}$ & \bfseries $\text{0.723}_{\text{0.0024}}$ \\
\bottomrule
\end{tabular}

\end{table}

\begin{table}[H]
	\scriptsize
	\centering
	\caption{Full comparative table, using a misspecified convex potential flow model.}
    \label{table:lr_misspecified/MQF2_calibration_lr}
	\vspace{0.2cm}
	\begin{tabular}{llllll}
\toprule
& \multicolumn{2}{c}{L-ECE} & \multicolumn{3}{c}{HDR-ECE} \\
& \BASE & \LR & \BASE & \HDRR & \LR \\
\midrule
SLU & \bfseries $\text{0.120}_{\text{0.018}}$ & \bfseries $\text{0.107}_{\text{0.017}}$ & \bfseries $\text{0.127}_{\text{0.018}}$ & \bfseries $\text{0.119}_{\text{0.017}}$ & \bfseries $\text{0.101}_{\text{0.015}}$ \\
EDM & $\text{0.135}_{\text{0.0052}}$ & \bfseries $\text{0.0734}_{\text{0.0068}}$ & $\text{0.128}_{\text{0.0051}}$ & \bfseries $\text{0.0778}_{\text{0.0097}}$ & \bfseries $\text{0.0846}_{\text{0.012}}$ \\
AT2 & $\text{0.319}_{\text{0.0075}}$ & \bfseries $\text{0.0593}_{\text{0.0083}}$ & $\text{0.327}_{\text{0.0078}}$ & \bfseries $\text{0.0673}_{\text{0.011}}$ & \bfseries $\text{0.0642}_{\text{0.0095}}$ \\
SF1 & $\text{0.351}_{\text{0.011}}$ & \bfseries $\text{0.110}_{\text{0.0094}}$ & $\text{0.354}_{\text{0.011}}$ & \bfseries $\text{0.0838}_{\text{0.012}}$ & $\text{0.177}_{\text{0.014}}$ \\
OE2 & $\text{0.379}_{\text{0.012}}$ & \bfseries $\text{0.0773}_{\text{0.0067}}$ & $\text{0.379}_{\text{0.012}}$ & $\text{0.335}_{\text{0.013}}$ & \bfseries $\text{0.0769}_{\text{0.0071}}$ \\
AT1 & $\text{0.278}_{\text{0.010}}$ & \bfseries $\text{0.0533}_{\text{0.0051}}$ & $\text{0.297}_{\text{0.011}}$ & $\text{0.100}_{\text{0.0080}}$ & \bfseries $\text{0.0565}_{\text{0.0063}}$ \\
JUR & $\text{0.0785}_{\text{0.0070}}$ & \bfseries $\text{0.0494}_{\text{0.0064}}$ & $\text{0.0819}_{\text{0.0078}}$ & \bfseries $\text{0.0518}_{\text{0.0062}}$ & \bfseries $\text{0.0501}_{\text{0.0068}}$ \\
OE1 & $\text{0.410}_{\text{0.013}}$ & \bfseries $\text{0.0733}_{\text{0.0057}}$ & $\text{0.410}_{\text{0.013}}$ & $\text{0.395}_{\text{0.013}}$ & \bfseries $\text{0.0749}_{\text{0.0054}}$ \\
ENB & $\text{0.0458}_{\text{0.0030}}$ & \bfseries $\text{0.0326}_{\text{0.0028}}$ & $\text{0.0854}_{\text{0.0099}}$ & \bfseries $\text{0.0367}_{\text{0.0049}}$ & $\text{0.114}_{\text{0.010}}$ \\
WQ & $\text{0.112}_{\text{0.0060}}$ & \bfseries $\text{0.0405}_{\text{0.0054}}$ & $\text{0.114}_{\text{0.0060}}$ & \bfseries $\text{0.0467}_{\text{0.0051}}$ & \bfseries $\text{0.0391}_{\text{0.0053}}$ \\
SF2 & $\text{0.386}_{\text{0.0057}}$ & \bfseries $\text{0.0869}_{\text{0.0087}}$ & $\text{0.387}_{\text{0.0057}}$ & \bfseries $\text{0.150}_{\text{0.0090}}$ & \bfseries $\text{0.139}_{\text{0.011}}$ \\
SCP & $\text{0.361}_{\text{0.013}}$ & \bfseries $\text{0.0808}_{\text{0.0082}}$ & $\text{0.369}_{\text{0.013}}$ & \bfseries $\text{0.0517}_{\text{0.0063}}$ & $\text{0.115}_{\text{0.014}}$ \\
ANS & $\text{0.0428}_{\text{0.0043}}$ & \bfseries $\text{0.0172}_{\text{0.0012}}$ & $\text{0.0524}_{\text{0.0047}}$ & \bfseries $\text{0.0162}_{\text{0.0011}}$ & \bfseries $\text{0.0183}_{\text{0.0019}}$ \\
HO2 & $\text{0.125}_{\text{0.0038}}$ & \bfseries $\text{0.0155}_{\text{0.0019}}$ & $\text{0.130}_{\text{0.0035}}$ & \bfseries $\text{0.0155}_{\text{0.0016}}$ & \bfseries $\text{0.0156}_{\text{0.0020}}$ \\
SC2 & $\text{0.150}_{\text{0.0030}}$ & \bfseries $\text{0.0123}_{\text{0.00093}}$ & $\text{0.153}_{\text{0.0032}}$ & $\text{0.0491}_{\text{0.00096}}$ & \bfseries $\text{0.0124}_{\text{0.00095}}$ \\
RF1 & $\text{0.173}_{\text{0.0076}}$ & \bfseries $\text{0.0172}_{\text{0.0024}}$ & $\text{0.186}_{\text{0.0081}}$ & $\text{0.0422}_{\text{0.0030}}$ & \bfseries $\text{0.0180}_{\text{0.0025}}$ \\
SC1 & $\text{0.164}_{\text{0.0033}}$ & \bfseries $\text{0.00888}_{\text{0.0010}}$ & $\text{0.170}_{\text{0.0040}}$ & $\text{0.0541}_{\text{0.0016}}$ & \bfseries $\text{0.0103}_{\text{0.0014}}$ \\
BI2 & $\text{0.286}_{\text{0.0034}}$ & \bfseries $\text{0.0133}_{\text{0.0014}}$ & $\text{0.287}_{\text{0.0034}}$ & $\text{0.0616}_{\text{0.0019}}$ & \bfseries $\text{0.0146}_{\text{0.0013}}$ \\
BI1 & $\text{0.0961}_{\text{0.0039}}$ & \bfseries $\text{0.00962}_{\text{0.0014}}$ & $\text{0.100}_{\text{0.0040}}$ & $\text{0.0126}_{\text{0.0016}}$ & \bfseries $\text{0.0102}_{\text{0.00095}}$ \\
AIR & $\text{0.178}_{\text{0.0034}}$ & \bfseries $\text{0.0146}_{\text{0.0016}}$ & $\text{0.181}_{\text{0.0033}}$ & \bfseries $\text{0.0151}_{\text{0.0013}}$ & \bfseries $\text{0.0140}_{\text{0.0016}}$ \\
WAG & $\text{0.116}_{\text{0.0011}}$ & \bfseries $\text{0.0127}_{\text{0.0021}}$ & $\text{0.113}_{\text{0.0011}}$ & \bfseries $\text{0.0158}_{\text{0.0019}}$ & \bfseries $\text{0.0166}_{\text{0.0033}}$ \\
ME3 & $\text{0.287}_{\text{0.0047}}$ & \bfseries $\text{0.0105}_{\text{0.0016}}$ & $\text{0.294}_{\text{0.0040}}$ & $\text{0.0304}_{\text{0.0025}}$ & \bfseries $\text{0.0140}_{\text{0.0022}}$ \\
ME1 & $\text{0.287}_{\text{0.0029}}$ & \bfseries $\text{0.0110}_{\text{0.0011}}$ & $\text{0.295}_{\text{0.0027}}$ & $\text{0.0237}_{\text{0.0031}}$ & \bfseries $\text{0.0149}_{\text{0.0020}}$ \\
ME2 & $\text{0.282}_{\text{0.0031}}$ & \bfseries $\text{0.00893}_{\text{0.00095}}$ & $\text{0.290}_{\text{0.0030}}$ & $\text{0.0220}_{\text{0.0021}}$ & \bfseries $\text{0.0144}_{\text{0.0017}}$ \\
HO1 & $\text{0.0801}_{\text{0.0046}}$ & \bfseries $\text{0.0102}_{\text{0.0012}}$ & $\text{0.0880}_{\text{0.0041}}$ & \bfseries $\text{0.00989}_{\text{0.00082}}$ & \bfseries $\text{0.0120}_{\text{0.0024}}$ \\
BIO & $\text{0.200}_{\text{0.0086}}$ & \bfseries $\text{0.00695}_{\text{0.00065}}$ & $\text{0.213}_{\text{0.0080}}$ & \bfseries $\text{0.00985}_{\text{0.00048}}$ & \bfseries $\text{0.00916}_{\text{0.0013}}$ \\
CAL & $\text{0.0711}_{\text{0.013}}$ & \bfseries $\text{0.00549}_{\text{0.00046}}$ & $\text{0.0883}_{\text{0.013}}$ & \bfseries $\text{0.00681}_{\text{0.00041}}$ & $\text{0.0169}_{\text{0.0024}}$ \\
BLO & $\text{0.0972}_{\text{0.0039}}$ & \bfseries $\text{0.00657}_{\text{0.00092}}$ & $\text{0.117}_{\text{0.0025}}$ & \bfseries $\text{0.00843}_{\text{0.00092}}$ & $\text{0.0241}_{\text{0.0033}}$ \\
TAX & $\text{0.0444}_{\text{0.0032}}$ & \bfseries $\text{0.00490}_{\text{0.00060}}$ & $\text{0.0565}_{\text{0.0020}}$ & \bfseries $\text{0.00804}_{\text{0.00053}}$ & \bfseries $\text{0.00918}_{\text{0.0015}}$ \\
\bottomrule
\end{tabular}

\end{table}

\end{document}